\newtheorem{thm}{Theorem}
\newtheorem{cor}{Corollary}
\newtheorem{prop}{Proposition}
\newtheorem{assum}{Assumption}
\newtheorem{remark}{Remark}
\def\x{\mathbf{x}}
\def\y{\mathbf{y}}
\def\c{\mathbf{c}}
\def\F{\mathcal{F}}
\begin{document}
%
% paper title
% Titles are generally capitalized except for words such as a, an, and, as,
% at, but, by, for, in, nor, of, on, or, the, to and up, which are usually
% not capitalized unless they are the first or last word of the title.
% Linebreaks \\ can be used within to get better formatting as desired.
% Do not put math or special symbols in the title.
\title{Investigating Customization Strategies and Convergence Behaviors of Task-specific ADMM}

\author{Risheng Liu,~\IEEEmembership{Member,~IEEE,} Pan Mu, Jin Zhang
	%        Pan Mu, Xiaoming Yuan, Hangrui Yue, Jin Zhang
	\IEEEcompsocitemizethanks{\IEEEcompsocthanksitem R.Liu is with the DUT-RU International School of Information Science $\&$ Engineering, Dalian University of Technology, and also with the Key Laboratory for Ubiquitous Network and Service Software of Liaoning Province, Dalian 116024, China. E-mail: rsliu@dlut.edu.cn.\protect
		\IEEEcompsocthanksitem Pan Mu is with the School of Mathematical Sciences, Dalian University of Technology, and also with the Key Laboratory for Ubiquitous Network and Service Software of Liaoning Province. (muyifan11@mail.dlut.edu.cn)\protect
		\IEEEcompsocthanksitem Jin Zhang is with the Department of Mathematics, SUSTech International Center for Mathematics, Southern University of Science and Technology, Shenzhen, China. (Corresponding author, zhangj9@sustech.edu.cn)\protect}
	%\thanks{Manuscript received April 19, 2005; revised August 26, 2015.}
}
%%%%%%%%%%%%%%%%%%%%%%%%%%%%%%%%%%%%%%%%%%%%%%%%%%%%%

% The paper headers
\markboth{Journal of \LaTeX\ Class Files,~Vol.~14, No.~8, August~2015}%
{Shell \MakeLowercase{\textit{et al.}}: Bare Demo of IEEEtran.cls for IEEE Journals}
% The only time the second header will appear is for the odd numbered pages
% after the title page when using the twoside option.
% 
% *** Note that you probably will NOT want to include the author's ***
% *** name in the headers of peer review papers.                   ***
% You can use \ifCLASSOPTIONpeerreview for conditional compilation here if
% you desire.

% If you want to put a publisher's ID mark on the page you can do it like
% this:
%\IEEEpubid{0000--0000/00\$00.00~\copyright~2015 IEEE}
% Remember, if you use this you must call \IEEEpubidadjcol in the second
% column for its text to clear the IEEEpubid mark.

% use for special paper notices
%\IEEEspecialpapernotice{(Invited Paper)}

% make the title area
\maketitle

\begin{abstract}
Alternating Direction Method of Multiplier (ADMM) has been a popular algorithmic framework for separable optimization problems with linear constraints. For numerical ADMM fail to exploit the particular structure of the problem at hand nor the input data information, leveraging task-specific modules (e.g., neural networks and other data-driven architectures) to extend ADMM is a significant but challenging task. This work focuses on designing a flexible algorithmic framework to incorporate various task-specific modules (with no additional constraints) to improve the performance of ADMM in real-world applications. Specifically, we propose Guidance from Optimality (GO), a new customization strategy, to embed task-specific modules into ADMM (GO-ADMM). By introducing an optimality-based criterion to guide the propagation, GO-ADMM establishes an updating scheme agnostic to the choice of additional modules. The existing task-specific methods just plug their task-specific modules into the numerical iterations in a straightforward manner. Even with some restrictive constraints on the plug-in modules, they can only obtain some relatively weaker convergence properties for the resulted ADMM iterations. Fortunately, without any restrictions on the embedded modules, we prove the convergence of GO-ADMM regarding objective values and constraint violations, and derive the worst-case convergence rate measured by iteration complexity. Extensive experiments are conducted to verify the theoretical results and demonstrate the efficiency of GO-ADMM. 

\end{abstract}

% Note that keywords are not normally used for peerreview papers.
\begin{IEEEkeywords}
Task-specific ADMM, guidance from optimality, convergence behaviors analysis, computer vision applications. 
\end{IEEEkeywords}

% For peer review papers, you can put extra information on the cover
% page as needed:
% \ifCLASSOPTIONpeerreview
% \begin{center} \bfseries EDICS Category: 3-BBND \end{center}
% \fi
%
% For peerreview papers, this IEEEtran command inserts a page break and
% creates the second title. It will be ignored for other modes.
\IEEEpeerreviewmaketitle

\section{Introduction}\label{sec:intro}

%\section{Introduction}
\IEEEPARstart{A}{broad} spectrum of real-world applications, ranging from image processing \cite{chan2011alternating,ng2011fast} to compressive sensing \cite{yang2011alternating,liu2019convergence}, subspace clustering~\cite{liu2013robust,elhamifar2013sparse} and machine learning \cite{liu2013linearized,sun2016deep}, can be (re)formulated as the following separable optimization model with linear constraint:
\begin{equation}\label{eq:model}
\begin{array}{c}
\min\limits_{\mathbf{x}\in\mathbb{R}^n,\mathbf{y}\in\mathbb{R}^m} f(\mathbf{x}) + g(\mathbf{y}),\ \text{s.t.}\ \mathbf{A}\mathbf{x}+\mathbf{B}\mathbf{y} =\mathbf{c},
\end{array}
\end{equation}
where $f:\mathbb{R}^n\rightarrow\mathbb{R}$ and $g:\mathbb{R}^m\rightarrow\mathbb{R}$ are closed, proper and convex functions, $\mathbf{A}\in\mathbb{R}^{l\times n}$, $\mathbf{B}\in\mathbb{R}^{l\times m}$ and $\mathbf{c}\in\mathbb{R}^l$. In computer vision and learning scenarios, the fidelity term $f$ captures the loss of data fitting, which is further assumed to be continuously differentiable; the regularization/prior term $g$ usually is nonsmooth which promotes desired distribution on the solution. To solve Eq.~\eqref{eq:model}, the Alternating Direction Method of Multipliers (ADMM) proposed in~\cite{glowinski1975approximation} becomes a benchmark solver because of its features of easy implementability, competitive numerical performance and wide applicability in various areas.

The ADMM has been receiving attention from a broad spectrum of areas, and various variants have been well studied in the literature. When applying original ADMM to solve Eq.~\eqref{eq:model} arising from data science applications, how to efficiently solve subproblems plays an important role in the algorithm implementation. To address this issue, plenty of numerical variants have been investigated to tackle subproblems, such as proximal and Linearized ADMM (LADMM)~\cite{liu2013linearized,yang2013linearized,xie2019differentiable}, Inexact ADMM (IADMM)~\cite{yuan2005improvement,eckstein2018relative,yue2018implementing}. Particularly, in image processing applications, after linearizing the quadratic term, the nonsmooth $\y$-subproblem admits a closed form solution. More detailed discussions can be found in survey papers~\cite{boyd2011distributed,glowinski2014alternating}. 
Unfortunately, even with the proximal or linearizing techniques, for real-world tasks in learning and vision problems, the direct application of these ADMM variants usually leads to performance far from satisfactory. This is because the underlying structure at hand and the input data information have not been well exploited. In fact, these information could help the optimization model find the task-related solution. This motivates us to introduce additional task-specific modules, such as designed filters or trained network architectures, to customize ADMM scheme to address specific applications.

\begin{table*}[htb!]
	\centering
	\renewcommand\arraystretch{1.3}
	\caption{ Summarizing of the essential comparison aspects of representative methods. The first column represents the classification of these methods, i.e., numerical variants (Num., e.g., ADMM~\cite{he2015non}) and specific task variants (Task., e.g., PP-ADMM~\cite{chan2017plug}, PnP~\cite{ryu2019plug}) and RED~\cite{romano2017little}. }
	\label{tab:inexact}
		\begin{small}
			\begin{threeparttable}
			\begin{tabular}{| c | c | c | c | c | c | c | c |} 
				\hline
				Types & Methods & Cond. & Fide. & Reg. & Learn. & Opt. &  Rate \\
				\hline
				\hline
				Num. & \cite{he2015non} & $f$ and $g$ are convex & \ding{51}  & \ding{51}  & \ding{55} & \ding{51} & \ding{51} \\
				\hline
				\multirow{3}{*}{Task.} & \cite{chan2017plug} &$\nabla_{\mathbf{x}}f$ is bounded, $\mathcal{D}$ is nonexpansive &  \ding{51}  &  \ding{55} & \ding{51}  & \ding{55} & \ding{55} \\
				\cline{2-8}
				& \cite{ryu2019plug} & $f$ is strongly convex, $\mathcal{D}$ is nonexpansive  &  \ding{51}   & \ding{55} &\ding{51} & \ding{55} &\ding{55} \\
				\cline{2-8}
				& \cite{romano2017little} & $f$ and $g$ are convex, $\|\nabla_{\mathbf{x}}D(\mathbf{x})\|\leq C$  & \ding{51}  &\ding{51} &\ding{51} & \ding{55} & \ding{55} \\
				\hline
				Ours & GO-ADMM & $f$ and $g$ are convex & \ding{51} & \ding{51} & \ding{51} & \ding{51} & \ding{51}\\
				\hline
			\end{tabular}
%			\begin{itemize}
			\begin{tablenotes}
			\footnotesize 
			\item  ``Cond.'' denotes conditions required about $f$, $g$ and $\mathcal{D}$. 
			``Fide.'' represents that the data fidelity term holds. 
			``Reg.'' denotes preserving the regularization.
			``Learn.'' implies inserting learning-based information.
			``Opt.'' means iteration sequence converges to the optimal solution of objective.
			``Rate'' represents that convergence rate can be achieved.
		\end{tablenotes}
		\end{threeparttable}
		\end{small}
\end{table*}

Among the ADMMs with embedded task-specific modules, the most prevailing class is introduced in ~\cite{venkatakrishnan2013plug}, named Plug-and-Play ADMM (PP-ADMM). The PP-ADMM has gained popularity in a broad spectrum of areas, such as super-resolution~\cite{zhang2019deep,chan2017plug}, image denoising~\cite{dong2018denoising,buzzard2018plug,rond2016poisson}, inpainting~\cite{tirer2018image} and image reconstruction~\cite{ryu2019plug,liu2019converged}. The plug-and-play scheme allows one to completely substitute the prior subproblem by manually designed computations within the ADMM scheme. Extremely promising performance has been widely witnessed in image restoration and signal recovery tasks. A variety of task-specific modules has been used for plugging in ADMM framework, ranging from deep-learning-based denoisers~\cite{chan2017plug,sun2019block,meinhardt2017learning,liu2020free}, Gaussian Mixture Model (GMM)~\cite{li2016rain,teodoro2018convergent} to designed filters~\cite{venkatakrishnan2013plug}, etc. However, replacing the prior term with implicit denoising modules usually leads to an unclear definition of the objective function. Consequently, these schemes may fail to guarantee desirable solution qualities. Different from the schemes with implicit priors, Regularization by Denoising (RED)~\cite{romano2017little} adopts explicit nonconvex regularization with noise evaluation, making the overall objective function clearer and better defined. However, the validity of RED is justified only for denoisers with symmetric Jacobians. Thus, unfortunately, many state-of-the-art methods, such as BM3D~\cite{dabov2007image}, RF~\cite{gastal2011domain}, CSF~\cite{schmidt2014shrinkage}, CNN~\cite{zhang2017learning} cannot be covered; this issue has been discussed in~\cite{reehorst2018regularization}. Recently, deep learning-based methods have become state-of-the-art. Douglas-Rachford Network (Dr-Net)~\cite{aljadaany2019douglas} and learning deep CNN denoiser prior~\cite{zhang2017learning} apply ConvNets modeling data fidelity and/or image prior proximal operators. Like these learning-based schemes, aiming at obtaining data-specific iteration schemes, ADMMNet~\cite{sun2016deep,yang2017admm} introduced hyperparameters into the classical numerical solvers and then performed discriminative learning on collected training data. The existing learning-based methods perform better on vision tasks than many state-of-the-art methodologies. Due to the severe inconstancy of parameters during iterations, rigorous analysis of the resulted trajectories is also missing, which leads to the lack of strict theoretical investigations. Methods based on optimal conditions~\cite{liu2018proximal,wang2016linearized,liu2019convergence} reduced the gap between deep networks and optimization models by introducing error control condition. For example, in~\cite{liu2018proximal}, the implicit ADMM scheme converges to a fixed point while without knowing whether it is an optimal solution to the objectives. For a clear impression, we summarize the essential comparison aspects in Tab.~\ref{tab:inexact}.

\subsection{Our Contributions}
	Numerical ADMM cannot exploit the particular structure of the problem at hand nor the input data information and thus may fail in learning and visual problems. The existing task-specific schemes plug their task-specific modules into the numerical iterations in a straightforward manner. Even with some restrictive constraints on the plug-in modules, they can only obtain some relatively weaker convergence properties for the resulted ADMM iterations. Thus, leveraging task-specific modules (e.g., neural networks and other data-driven architectures) to extend ADMM is an important but challenging problem. To partially address issues in these existing task-specific methods, this work constructed a completely new algorithmic framework to design task-specific ADMM. In this paper, we propose the Guidance from Optimality (GO)-ADMM, which incorporates a mechanism to embed task-specific modules into the fidelity subproblem. The new paradigm aggregates both data fidelity and ad-hoc modules, and the prior information is reserved during iteration processes. Theoretical convergence in previous works (e.g.,~\cite{chan2017plug,chan2019performance,ryu2019plug}) often requires certain architecture constraints on the embedded modules (e.g., boundedness, nonexpansive or Lipschitz conditions). Unfortunately, verification of these assumptions is too ambitious, especially when the task-specific modules are explicitly complex deep network architectures. One notable feature of the proposed GO-ADMM relies on the fact that a first-order-optimality-based guiding policy is involved during each iteration. Thanks to this feature, the convergence of GO-ADMM is independent of any architecture constraint. This new perspective enables us to automatically identify reliable modules to build our task-specific iterations. We further rigorously prove the convergence of GO-ADMM towards a solution. Besides, we analyze the convergence rate in terms of iteration complexity. To our best knowledge, GO-ADMM seems to be the first theoretically convergent task-specific ADMM, whose convergence, surprisingly as good as those well-designed numerical ADMMs, requires no additional assumptions on embedded modules. The main contributions of this work are summarized as:

\begin{itemize}
	\item A striking feature of GO-ADMM, which differs from previous approaches is that a new mechanism is incorporated to embed modules into the fidelity subproblem. Thanks to this design, we aggregate data fidelity and ad-hoc modules, with prior information kept in reserve for our task-specific iterations. 
	\item Different from existing schemes that usually require certain architecture constraints on the additional modules, GO-ADMM adopts an optimality-based guiding policy to automatically identify reliable modules, resulting in a self-controllable propagation scheme.
	\item We strictly prove the global convergence towards a solution with quality and estimate the exact convergence rate. Our theoretical results indeed inherit analytic properties of well-designed numerical ADMM, and thus are more convincing than those existing heuristic task-specific methods.
	\item Extensive experiments verify our theories. In particular, we can observe performance even better than some state-of-the-art deep learning methods. Furthermore, our GO strategy could be extended to other first-order schemes, achieving rigorous convergence results.
\end{itemize}

\section{Review on Existing ADMMs}
Specifically, by introducing the augmented Lagrangian function of Eq.~\eqref{eq:model} with multiplier $\bm{\bm{\lambda }}$ and penalty parameter $\beta\!>\!0$
\begin{equation*}\label{eq:lagrange}
\begin{array}{r}
\mathcal{L}_{\beta}(\mathbf{x},\mathbf{y},\bm{\bm{\lambda }})=f(\mathbf{x}) + g(\mathbf{y}) -\bm{\bm{\lambda }}^{\top}(\mathbf{A}\mathbf{x}+\mathbf{B}\mathbf{y}-\mathbf{c})\\
+ \frac{\beta}{2}\|\mathbf{A}\mathbf{x}+\mathbf{B}\mathbf{y}-\mathbf{c}\|_2^2,
\end{array}
\end{equation*}
the standard ADMM scheme reads as 
\begin{align}
\mathbf{x}^{k+1} &= \arg\min\limits_{\mathbf{x}} \mathcal{L}_{\beta}(\mathbf{x},\mathbf{y}^k,\bm{\bm{\lambda }}^k),\label{eq:x-iter}\\
\mathbf{y}^{k+1} &= \arg\min\limits_{\mathbf{y}}\mathcal{L}_{\beta}(\mathbf{x}^{k+1},\mathbf{y},\bm{\bm{\lambda }}^k),\label{eq:y-iter}\\
\bm{\bm{\lambda }}^{k+1} &= \bm{\bm{\lambda }}^k - \beta\left(\mathbf{A}\mathbf{x}^{k+1}+\mathbf{B}\mathbf{y}^{k+1}-\mathbf{c}\right),\label{eq:lambda-iter}
\end{align}
which can be understood as iteratively performing the alternating minimization for the primal variables ($\mathbf{x}$, $\mathbf{y}$) and gradient ascent for the dual variable $\bm{\lambda}$. Typically, $f$ is smooth and $g$ is nonsmooth.

\textbf{Task-specific Schemes:} 
The implementations of a variant of task-specific ADMM algorithms do not need a specified regularization/prior term $g(\mathbf{y})$. Instead, they adopt an off-the-shelf image denoising module to replace $\mathbf{y}$-subproblem. A wide range of task-specific modules have been plugged into the ADMM framework, such as CSF~\cite{schmidt2014shrinkage}, BM3D~\cite{dabov2007image,chan2017plug}, GMM~\cite{teodoro2018convergent,li2016rain}, Non-Local Means (NLM)~\cite{chan2019performance,sreehari2016plug}, deep learning based denoiser~\cite{zhang2017learning,liu2018toward} and denoising autoencoders~\cite{bigdeli2017deep}, and etc. According to the characteristics of optimization models, we roughly separate them into two parts: explicit task-specific schemes and implicit task-specific schemes.

Implicit task-specific schemes build on the use of an implicit prior, which regularizes general inverse problems. For example, the plug-and-play ADMM (firstly introduced in~\cite{venkatakrishnan2013plug}) replaces the $\mathbf{y}$-subproblem by an implicit off-the-shelf algorithm. Furthermore, the alternating minimization for the primal variables ($\mathbf{x}$, $\mathbf{y}$) can be regarded as two independent numerical modules, i.e., one for implementing a simplified reconstruction operator and the other for performing a denoising operator (such as~\cite{aljadaany2019douglas}). The modified scheme by incorporating a continuation form has been introduced in~\cite{chan2017plug,sun2019online,chan2019performance,meinhardt2017learning,ryu2019plug}. In particular, the $\mathbf{y}$ iteration step is updated by using an off-the-shelf nonexpansive denoising operator. However, the objective function is not clearly defined if arbitrary denoising engines are used. Consequently, this may lead to undesirable convergence results.

Explicit schemes explicitly leverage the natural image distribution as prior. In this category, RED~\cite{romano2017little,reehorst2018regularization}, Block Coordinate RED (BC-RED)~\cite{sun2019block}, and deep RED~\cite{mataev2019deepred} have gained much significance in recent years. Explicit schemes rely on a general structured smoothness penalty term to regularize the desired inverse problem with a constructed denoising module. Nevertheless, these explicit regularization forms lack supervision for the iteration steps. Moreover, the existing convergent results are less convincing.

Learning data-fidelity term has drawn much attention recently~\cite{dong2018learning,aljadaany2019douglas}. For example, in~\cite{dong2018learning}, a discriminative framework is used to learn the data term (i.e., $\mathbf{x}$-subproblem) in a cascaded manner. In~\cite{aljadaany2019douglas}, Dr-Net framework is developed to replace the proximal operator in both the data fidelity term and the prior term with two different networks that firmly satisfy the non-expansive condition. A drawback of these learning data-fidelity frameworks is the lacking of theoretical guarantees.

In summary, different from existing numerical variants of ADMM in which usually show weak performance in real-world applications, the proposed GO-ADMM allows any off-the-shelf modules of imaging systems to be inserted into ADMM scheme and achieves state-of-the-art results. Different from the task-specific ADMMs which usually require certain conditions of these specific modules and are prone to introduce unclear structures that may change the objective function, we rigorously prove the convergence of the established task-specific schemes according to objective function values and constraint violation with a flexible module and easy-to-calculate error control policy. Moreover, we analysis the convergence rate in terms of the iteration complexity. It is worth noting that our investigations not only reduce the gap between the practical implementations of ADMM and the strict convergence analyze for task-specific approaches but also provide a computationally feasible and theoretically guaranteed manner to address convex optimization in real-world application scenarios.

\section{The Proposed GO-ADMM}\label{sec:progpagation}

As aforementioned, most existing task-specific ADMM schemes perform the task-specific computational modules, instead of the non-differentiable subproblem (i.e., Eq.~\eqref{eq:y-iter}). Consequently, they fail to preserve the well-designed priors in the original optimization formulation. Notably, strict convergence properties (e.g., the global convergence to the original model and the exact convergence rate estimation) cannot be adequately guaranteed. In this section, we propose a new task-specific algorithmic framework, namely, the GO-ADMM, to address the above two fundamental issues.

Specifically, GO-ADMM aims to incorporate task information into the subproblem w.r.t. $\mathbf{x}$. The differentiable objective $f$ of the optimization model in Eq.~\eqref{eq:model} characterizes the fidelity/loss information of the task. In most learning and vision applications, the variables in this term are always measured after some given linear mappings (e.g., the degradation operation, mask matrix, transformation, and/or additional errors). Therefore, hereafter we further specify $f(\mathbf{x}):=l(\mathbf{Q}\mathbf{x})$, where $\mathbf{Q}\in \mathbb{R}^{p\times n}$ denotes a task-related linear mapping, and $l:\mathbb{R}^p\rightarrow\mathbb{R}$ refers to the measurement which is continuously differentiable and strongly convex. Many commonly used fidelity/loss functions (e.g., quadratic, exponential, and logistic losses) can be formulated as such specific $f$.

\subsection{Non-Euclidean Proximal Regularization} 

Instead of generating an iterative trajectory in Euclidean space as in standard ADMM, we first introduce a general proximal regularization for Eq.~\eqref{eq:x-itera-ne}. Specifically, we define a proximal term at the $k$-th iteration as $\mathcal{H}(\mathbf{x},\mathbf{x}^k,\mathbf{W})=\frac{1}{2}\|\mathbf{W}(\mathbf{x}-\mathbf{x}^k)\|_2^2$, where $\mathbf{W}$ defines a task-specific general metric (including Euclidean and non-Euclidean metrics) and will be specified for particular applications in Sec.~\ref{sec:application}. In this way, the $\mathbf{x}$-update reads as
\begin{equation}
\mathbf{x}^{k+1} = \arg\min\limits_{\mathbf{x}} \mathcal{L}_{\beta}(\mathbf{x},\mathbf{y}^k,\bm{\bm{\lambda }}^k)+\mathcal{H}(\mathbf{x},\mathbf{x}^k,\mathbf{W}).\label{eq:x-itera-ne}
\end{equation}
Particularly, $\mathbf{W}$ is a flexible scheme that can incorporate particular task information (e.g., mask, filter) or hyper-parameter into the optimization process. If $\mathbf{W}$ is a real number, $\frac{1}{2}\|\mathbf{W}(\x-\x^k)\|_2^2$ means Euclidean metric, while if we set $\mathbf{W}$ as a mask or filter, $\frac{1}{2}\|\mathbf{W}(\x-\x^k)\|_2^2$ implies non-Euclidean metric. In addition, the involvement of $\mathbf{W}$ also improves the computational performance in the differentiable subproblem (i.e., Eq.~\eqref{eq:x-itera-ne}), which will be clarified in the following subsection. 

\subsection{Differentiable Updating with Modules Ensemble}\label{subsec:differentiable_updating}

Now we design a new updating rule to further perform task-specific computations for the differentiable subproblem in Eq.~\eqref{eq:x-itera-ne}. Specifically, the $\mathbf{x}$-subproblem can be reformulated as the following linear system
\begin{equation}\label{eq:linear-system-x}
\begin{array}{l}
\nabla_{\mathbf{x}}\left(\mathcal{L}_{\beta}(\mathbf{x}^{k+1},\mathbf{y}^k,\bm{\bm{\lambda }}^k) + \mathcal{H}(\mathbf{x}^{k+1},\mathbf{x}^k,\mathbf{W})\right)\\
=\mathbf{Q}^\top\nabla l(\mathbf{Q}\mathbf{x}^{k+1})+(\mathbf{W}^\top\mathbf{W}+\beta\mathbf{A}^{\top}\mathbf{A})\mathbf{x}^{k+1}-\mathbf{s}^k=0,
\end{array}
\end{equation}
where $\mathbf{s}^k=\beta\mathbf{A}^{\top}(-\mathbf{B}\mathbf{y}^k+\mathbf{c}+\bm{\bm{\lambda }}^k/\beta)+\mathbf{W}^\top\mathbf{W}{\mathbf{x}}^k$. By defining the following operation on $\mathbf{x}$
\begin{equation*}
\begin{array}{l}
\mathcal{F}^k(\mathbf{x}):=(\mathbf{W}^\top\mathbf{W}+\beta\mathbf{A}^{\top}\mathbf{A})^{-1}\left(\mathbf{s}^k-\mathbf{Q}^{\top}\nabla l(\mathbf{Q}\mathbf{x}) \right),
\end{array}
\end{equation*}
and assuming that $(\mathbf{W}^\top\mathbf{W}+\beta\mathbf{A}^{\top}\mathbf{A})$ is invertible\footnote{We can easily obtain this invertible property by introducing a invertible metric matrix $\mathbf{W}^\top\mathbf{W}$.}, it is easy to see that solving Eq.~\eqref{eq:x-itera-ne} is equivalent to finding an approximate solution of the system $\mathbf{x}=\mathcal{F}^k(\x)$.

In standard ADMM scheme, one may directly adopt numerical techniques to solve this equation.\footnote{This solution can be derived with a closed-form solution or iterative methods. For quadratic cases, the approximate solution can be obtained either by an iterative method such as Preconditioned Conjugate Gradient (PCG)~\cite{eisenstat1981efficient} or direct method such as Cholesky factorization~\cite{chen2008algorithm}. For nonlinear cases, we can employ gradient descent or Newton-type methods to solve this system. } In contrast, we introduce an auxiliary variable $\hat{\mathbf{x}}^{k+1}$ to integrate the original updating trajectory and the (designed and/or trained) task-specific computations as follows
\begin{equation}\label{eq:inner-x-update1}
\hat{\mathbf{x}}^{k+1} = (1-\alpha)\mathbf{x}^{k}+\alpha\mathcal{D}^k(\mathbf{x}^{k}),
\end{equation}
where $\mathcal{D}^k$ denotes a given task-specific module and $\alpha\in[0,1]$ is an adaptive averaging parameter which reflects the influence of task-specific modules. Our algorithmic framework does not request any specific assumptions for the computational module $\mathcal{D}^k$. Indeed, we incorporate the guidance policy to automatically adjust the influence factor $\alpha$. That is, improper $\mathcal{D}^k$ can be eliminated. In a consequence, the formal updating of $\mathbf{x}^{k+1}$ reads as
\begin{equation}
\mathbf{x}^{k+1}=\mathcal{F}^k(\hat{\mathbf{x}}^{k+1}).\label{eq:update-x-formal}
\end{equation}

\subsection{The Guidance of Optimality (GO) Policy}\label{subsec:guidance}

To navigate the above task-specific iterations towards desired optimal solutions, we introduce a new policy, named Guidance of Optimality (GO), to identify the properly nested module and control the iteration trajectory. Specifically, we define an error term to measure the inexactness of our performed task-specific computation as follows
\begin{equation}\label{eq:ek}
\mathbf{e}^k(\mathbf{x}):=\nabla l(\mathbf{Q}\mathcal{F}^k(\mathbf{x}))-\nabla l(\mathbf{Qx}).
\end{equation} 
Upon together Eqs.~\eqref{eq:linear-system-x}, \eqref{eq:update-x-formal} and~\eqref{eq:ek}, we have
\begin{equation*}
\begin{aligned}
&\quad\nabla_{\mathbf{x}} {\mathcal{L}}_{\beta}( \mathbf{x}^{k+1},\mathbf{y}^k,\bm{\lambda }^k )+\mathbf{W}^\top\mathbf{W}(\mathbf{x}^{k+1}-\mathbf{x}^k)\\
& = \mathbf{Q}^\top \nabla l(\mathbf{Q}\mathcal{F}^k(\hat{\mathbf{x}}^{k+1} ))-\mathbf{Q}^\top \nabla l(\mathbf{Q}\hat{\mathbf{x}}^{k+1} )
=\mathbf{Q}^\top \mathbf{e}^k(\hat{\mathbf{x}}^{k+1}).
\end{aligned}
\end{equation*}
Therefore, $\|\mathbf{Q}^\top  \mathbf{e}_k (\hat{\mathbf{x}}^{k+1})\|_2$ can be quantitatively regarded as the difference between the task-specific solution of $\mathbf{x}$-subproblem generated by GO-ADMM and the exact solution obtained by standard ADMM in Eq.~\eqref{eq:x-iter}, in the sense of the partial gradient residual from the augmented Lagrangian function. Inspired by this observation, we introduce the following relaxed condition as our guidance policy:
\begin{equation}\label{eq:ek-condition}
\|{\mathbf{e}}^k(\hat{\mathbf{x}}^{k+1})\|_2\leq \eta\|{\mathbf{e}}^k(\hat{\mathbf{x}}^{k})\|_2,
\end{equation}
where $\eta\in(0,1)$ is a constant. In this way, if the task-specific computation in Eq.~\eqref{eq:update-x-formal} satisfies the inequality in Eq.~\eqref{eq:ek-condition}, we actually obtain a proper updating for the differentiable subproblem. Rather, we should reduce the influence of the nested module, until it does not work for our iterations. So in the worst case, GO-ADMM will temporarily reduce to a standard ADMM at some iterations if improper computational modules are utilized. Overall, our GO-ADMM algorithmic framework is summarized in Alg.~\ref{alg:GO-ADMM}.

\begin{algorithm}[htb]
	\caption{GO-ADMM for Eq.~\eqref{eq:model}}
	\label{alg:GO-ADMM}
	\begin{algorithmic}[1]
		\REQUIRE Input $\mathbf{x}^0$, $\mathbf{y}^0$, $\bm{\bm{\lambda }}^0$,  $\{\mathcal{D}^k\}$, and necessary parameters.
		\ENSURE $\mathbf{x}^{\ast}$, $\mathbf{y}^{\ast}$, and $\bm{\bm{\lambda }}^{\ast}$.
		\WHILE{not converged}
		\STATE $\hat{\mathbf{x}}^{k+1} = (1-\alpha){\mathbf{x}}^{k}+\alpha\mathcal{D}^k(\mathbf{x}^{k})$.\label{step:network}
		\IF{$ \|\mathbf{e}^k(\hat{\mathbf{x}}^{k+1})\|_2 > \eta\|\mathbf{e}^k({\hat{\mathbf{x}}}^{k}) \|_2$}\label{step:check}
		\STATE Set $\alpha=\rho\alpha$ with $0 < \rho \ll 1$ and go to Step~\ref{step:network}.
		\IF{$\alpha<\epsilon$}
		\STATE $\hat{\mathbf{x}}^{k+1}=\mathtt{Num}(\mathbf{x}^k;\mathbf{s}^k,\beta)$
		\ENDIF
		\ENDIF
		\STATE $\mathbf{x}^{k+1}=\mathcal{F}^k(\hat{\mathbf{x}}^{k+1})$.
		\STATE 
		$\mathbf{y}^{k+1}=\arg\min\limits_{\mathbf{y}}\mathcal{L}_{\beta}(\mathbf{x}^{k+1},\mathbf{y},\bm{\bm{\lambda }}^{k})$.
		\STATE
		$\bm{\bm{\lambda }}^{k+1}=\bm{\bm{\lambda }}^k-\beta\left(\mathbf{A}\mathbf{x}^{k+1}+\mathbf{B}\mathbf{y}^{k+1}-\mathbf{c}\right)$.
		\ENDWHILE
	\end{algorithmic}
\end{algorithm}

\begin{remark}
	We argue that our error condition (i.e., Eq.~\eqref{eq:ek-condition}) can always be satisfied in Alg.~\ref{alg:GO-ADMM}, for parameter $\alpha$ is dramatically decreasing while iterations. Thus, in the worst case, Eq.~\eqref{eq:ek-condition} will be satisfied when $\alpha\to 0$. Actually, equation $ \hat{\mathbf{x}}^{k+1} = (1-\alpha)\mathbf{x}^{k}+\alpha\mathcal{D}^k(\mathbf{x}^{k}) $ reduces to $\mathbf{x}^k$ as $\alpha\to 0$. Then, iteration of $\mathbf{x}$-subproblem is a pure numerical scheme and one may directly adopt numerical techniques to solve this equation. 
	We summarize the numerical algebra solver as $\hat{\mathbf{x}}^{k+1}=\mathtt{Num}(\x^k;\mathbf{s}^k,\beta)$ and explain this numerical scheme as follows. 
   For quadratic cases of function $f$, the approximate system $\x=\F^k(\x)$ is linearized and its solution can be obtained either by an iterative method such as Preconditioned Conjugate Gradient (PCG) or direct method such as Cholesky factorization. For example, we consider the instance $\mathbf{A} = 1$, $\mathbf{B} = -1$, $\mathbf{c} = 0$, $\mathbf{W} = 0$, $l(\x)=1/2 \x^2$. For this case, we have $\mathcal{F}^k(\x)=(\mathbf{s}^k-\x)/\beta$. Thus, $\x$-subproblem becomes $\x = (\mathbf{s}^k-\x)/\beta $ and the solution can be derived by closed-form solution, i.e., $\hat{\mathbf{x}}^{k+1} = \x = \mathbf{s}^k/(1+\beta)$. With the defined error term $\mathbf{e}^k(\mathbf{x}):=\nabla l(\mathbf{Q}\mathcal{F}^k(\mathbf{x}))-\nabla l(\mathbf{Qx})$, the above closed-form solution implies 
   \begin{equation*}\label{eq:ineq_ek}
   \begin{aligned}
   \|\mathbf{e}^k(\hat{\x}^{k+1})\| &= \|\nabla l(\mathbf{Q}\mathcal{F}^k(\hat{\x}^{k+1}))-\nabla l(\mathbf{Q}\hat{\mathbf{x}}^{k+1})\|\\
   & \leq \|L\mathbf{Q}(\mathcal{F}^k(\hat{\x}^{k+1})-\hat{\mathbf{x}}^{k+1})\|\\
   & = \left\|L\mathbf{Q}\left(\frac{\mathbf{s}^k}{1+\beta}-\frac{\mathbf{s}^k-\hat{\mathbf{x}}^{k+1}}{\beta}\right)\right\|\\
   &=0.
   \end{aligned}
   \end{equation*}
   Based on the above formulation, we have $\|\mathbf{e}^k(\hat{\x}^{k+1})\| \leq \eta \|\mathbf{e}^k(\hat{\x}^{k})\|$. Also, we can apply PCG or Cholesky factorization to obtain the approximate solution $\hat{\x}^{k+1}$ until satisfying $\|\mathbf{e}^k(\hat{\x}^{k+1})\| \leq \eta \|\mathbf{e}^k(\hat{\x}^{k})\|$. For nonlinear cases, we can employ gradient descent or Newton-type methods to find $\hat{\mathbf{x}}^{k+1}$ satisfying Eq.~\eqref{eq:ek-condition}. 
\end{remark}
\begin{remark}
	As for $\mathcal{D}^k$, our algorithmic framework actually does not rely directly on any specific assumptions for these computational modules and the guidance policy in Section~\ref{sec:progpagation} can automatically reduce the influence up to reject these improper $\mathcal{D}^k$. 
\end{remark}

\section{Theoretical Investigations}

In this section, we investigate the convergence behaviors of task-specific iterations within the proposed GO-ADMM paradigm. Rather than enforcing restrictive constraints on these nested modules, here we consider the following mild assumptions on the function $l$ defined in Sec.~\ref{sec:progpagation}.
\begin{assum}
	For $\forall \mathbf{z}^1,\mathbf{z}^2\in\mathbb{R}^p$, $l(\cdot)$ satisfies that
	\begin{equation}\label{eq:StrongConvex}
	\theta\|\mathbf{z}^1-\mathbf{z}^2\|_2^2\leq (\mathbf{z}^1-\mathbf{z}^2)^{\top}(\nabla l(\mathbf{z}^1)-\nabla l(\mathbf{z}^2)), 
	\end{equation}
	and its gradient is Lipschitz continuous such that 
	\begin{equation}\label{eq:fLipschitz}
	\|\nabla l(\mathbf{z}^1)-\nabla l(\mathbf{z}^2)\|_2\leq L\|\mathbf{z}^1-\mathbf{z}^2\|_2,
	\end{equation}
	where $\theta$ and $L$ are two positive constants.
\end{assum}

The following definitions are used in the sequel. Let $\mathbf{\Omega}:={\mathbb{R}}^{n}\times {\mathbb{R}}^{m}\times {\mathbb{R}}^{\ell}$ and  denote $\bm{w}=( \mathbf{x}, \mathbf{y}, \bm{\bm{\lambda } })^{\top}\in \mathbf{\Omega}$. Then define the operator $\mathbf{F}(\bm{w}):{\mathbb{R}}^{(n+m + \ell )}\rightarrow{\mathbb{R}}^{(n+ m + \ell )}$ and the matrix ${\mathbf{M}}\in {\mathbb{R}}^{(n+ m + \ell )\times (n+ m + \ell )}$ as follows:
\begin{equation}\label{eq:notation}
\begin{array}{l}
\!\!\mathbf{F}(\bm{w})\!=\! \left( \mathbf{Q}^\top \nabla l( \mathbf{Q} \mathbf{x}) \!-\! \mathbf{A}^\top  \bm{\bm{\lambda } },\ -\mathbf{B}^\top  \bm{\bm{\lambda } },\  \mathbf{A} \mathbf{x} \!+\! \mathbf{B} \mathbf{y} - \mathbf{c} \right)^{\top},\\
\!\!\mathbf{M} = \mathtt{diag}\left(\mathbf{W}^\top\mathbf{W},\ \beta \mathbf{B}^\top  \mathbf{B},\ \frac{1}{\beta}\mathbf{I}_{\ell\times \ell}\right),
\end{array}
\end{equation}
where $\mathtt{diag}(\cdot)$ denotes the diagonal array. Note that ${\mathbf{M}}$ is not necessarily positive definite because the matrix $\mathbf{B}$ in Eq.~\eqref{eq:model} is not assumed to be full column rank. 

We first reformulate Eq.~\eqref{eq:model} into a Variational Inequality (VI). This reformulation helps to analyze convergent properties via VI approach. As initiated in~\cite{he20121}, the Lagrangian function of Eq.~(1) is defined as
\begin{equation*}
L(\x,\y,\bm{\lambda}) = f(\mathbf{x}) + g(\mathbf{y}) - \bm{\lambda}^{\top}(\mathbf{A}\mathbf{x}+\mathbf{B}\mathbf{y} -\mathbf{c}),
\end{equation*}
where $\bm{\lambda}$ is Lagrangian multiplier. Let $\bm{\omega}^{\ast}=(\mathbf{x}^{\ast},\mathbf{y}^{\ast},\bm{\lambda}^{\ast})^{\top}$ be a saddle point of the Lagrangian function. Then for any $\bm{\omega}=(\mathbf{x},\mathbf{y},\bm{\lambda})\in\Omega$, $\bm{\omega}^{\ast}\in\Omega$ satisfies
\begin{equation}\label{eq:optimal_condition}
\left\{ 
\begin{aligned}
f(\mathbf{x}) - f(\mathbf{x}^{\ast}) + (\mathbf{x}-\mathbf{x}^{\ast})^{\top}(-\mathbf{A}^{\top}\bm{\lambda}^{\ast})&\geq 0,\\
g(\mathbf{y}) - g(\mathbf{y}^{\ast}) + (\mathbf{y}-\mathbf{y}^{\ast})^{\top}(-\mathbf{B}^{\top}\bm{\lambda}^{\ast})&\geq 0,\\
(\bm{\lambda}-\bm{\lambda}^{\ast})^{\top}(\mathbf{A}\mathbf{x}^{\ast}+\mathbf{B}\mathbf{y}^{\ast}-\mathbf{c})&\geq 0.
\end{aligned}
\right.
\end{equation}
With specified $f(\mathbf{x}):=l(\mathbf{Q}\mathbf{x})$, then $f(\mathbf{x}) - f(\mathbf{x}^{\ast}) = (\mathbf{x}-\mathbf{x}^{\ast})^{\top}\mathbf{Q}^{\top}\nabla l(\mathbf{Q}\mathbf{x}^{\ast})$. Note that these first-order optimal conditions (i.e. Eq.~\eqref{eq:optimal_condition}) can be written in a compact form
\begin{equation}\label{eq:Algo_General_VI}
{\hbox{VI}}(\mathbf{\Omega}, \mathbf{F}): g(\mathbf{y})-g(\mathbf{y}^{\ast})+ ( \bm{w}-\bm{w}^{\ast} )^\top  \mathbf{F}(\bm{w}^{\ast})\ge 0. 
\end{equation}
Thus, the VI can be designed as finding $\bm{w}^{\ast}=( \mathbf{x}^{\ast},\mathbf{y}^{\ast},\bm{\lambda }^{\ast})^{\top} \in \mathbf{\Omega}$, $\forall\bm{w}\in\mathbf{\Omega}$, that satisfies the above VI inequality~\eqref{eq:Algo_General_VI}. Actually, throughout the work, we consider the case of practical interest that the KKT solution set of Eq.~\eqref{eq:Algo_General_VI} is nonempty. Now we are ready to present our theoretical results. Specifically, to prove the convergence of the sequence generated by GO-ADMM, it is crucial to analyze how the residual $\|\mathbf{e}^k (\mathbf{x})\|_2$ evolves according to the iterations. For this reason, we first provide the following proposition. 
\begin{prop}\label{prop:ek_relation}
	Let $\{\mathbf{e}^k(\mathbf{\mathbf{x}})\}$ be the sequence defined in Eq.~\eqref{eq:ek},  $\{\hat{\mathbf{\mathbf{x}}}^{k}\}$ and $\{\bm{w}^k\}$ are the sequences generated by GO-ADMM. If constant $\eta$ satisfies %the following inequality
	\begin{equation}\label{eq:etaCriterion}
	0 <\eta < \sqrt{2\theta}/\left(\sqrt{2\theta}+L\|\mathcal{N}\|_2\right)\in(0,1),
	\end{equation}
	with $\mathcal{N}\!=\!\mathbf{Q}(\mathbf{W}^{\top}\mathbf{W}\!+\!\beta\mathbf{A}^{\top}\mathbf{A})^{-1}\left[\mathbf{W}^{\top},\sqrt{\beta}\mathbf{A}^{\top}\right]\in\mathbb{R}^{p\times(n+l)}$,  then we have the following inequality
	\begin{equation}\label{eq:cretirion_relation}
	\| \mathbf{e}^k (\hat{\mathbf{x}}^{k+1})  \|_2 \le \eta \| \mathbf{e}^{k-1} (\hat{\mathbf{x}}^{k})  \|_2 + \eta\gamma \|\bm{w}^{k-1}-\bm{w}^k\|_{\mathbf{M}},
	\end{equation}
	where $ \gamma = L \| {\mathcal{N}}\|_2$. 
\end{prop}

%%%%%%%%%%%%%%%%%%%%%

Now we prove the convergence of the sequence generated by GO-ADMM. To simplify the notation, we denote 
\begin{equation}\label{eq:wbar}
\!\bar{\bm{w}}^{k+1}=\left(\x^{k+1},\y^{k+1},\bm{\lambda}^k-\beta(\mathbf{A}\x^{k+1}+\mathbf{B}\y^k-\c)\right)^{\top}.
\end{equation}
In fact, this notation is not required to be practically calculated when implementing GO-ADMM. Then in the following proposition, we analyze the difference between $\bar{\bm{w}}^k$ defined in Eq.~\eqref{eq:wbar} and the solution point formulated by Eq.~\eqref{eq:Algo_General_VI}.
\begin{prop}\label{prop:re_Var_Inequality}
	Let $\left\{ \bm{w}^k \right\}$ be the sequence generated by the GO-ADMM, $\bar{\bm{w}}^k$ and ${\mathbf{M}}$ are defined in Eqs.~\eqref{eq:wbar} and~\eqref{eq:notation} respectively. Then, for all $\bm{w}\in\mathbf{\Omega}$, it holds that
	\begin{equation}\label{eq:re_Var_Inequality}
	\begin{array}{l}
	g(\y^{k+1}) - g(\y) + (\bar{\bm{w}}^{k+1}-\bm{w})^{\top}\mathbf{F}(\bar{\bm{w}}^{k+1})\\
	\leq \bm{q}_k(\x)^\top \mathbf{e}^k(\hat{\mathbf{x}}^{k+1})-\Delta_{\mathbf{M}}(\bm{w},\bm{w}^k,\bm{w}^{k+1}),
	\end{array}
	\end{equation}
	where $\Delta_{\mathbf{M}}(\bm{w},\bm{w}^k,\bm{w}^{k+1}) = \frac{1}{2}(\|\bm{w}-\bm{w}^{k+1}\|_{\mathbf{M}}^2-\|\bm{w}-\bm{w}^k\|_{\mathbf{M}}^2+\|\bm{w}^k-\bm{w}^{k+1}\|_{\mathbf{M}}^2)$ and  $\bm{q}_k(\x)=\mathbf{Q} ( \mathbf{x}^{k+1} - \mathbf{x} )$.
\end{prop}

The difference between the inequality in Eq.~\eqref{eq:re_Var_Inequality} and the variational inequality reformulation in Eq.~\eqref{eq:Algo_General_VI} reflects the difference between the point ${\bar{\bm{w}}}^k$ and a solution point $\bm{w}^{\ast}$. For the right-hand side of Eq.~\eqref{eq:re_Var_Inequality}, the second term (i.e., three quadratic terms) is easy to be manipulated over different indicators by algebraic operations, but it is not clear about how to control the crossing term (i.e., $\bm{q}_k(\mathbf{x})^{\top}\mathbf{e}^k(\hat{\mathbf{x}}^{k+1})$) towards the eventual goal, i.e., illustrate the convergence of the sequence $\{\bm{w}^k\}$. We thus explore this term particularly and derive that the sum of these crossing terms over $K$ iterations can be bounded by some quadratic terms as well. This result is summarized in the following proposition.

\begin{prop}\label{prop:sum_Corssing_Term}
	Let $\left\{ \bm{w}^k \right\}$ be the sequence generated by GO-ADMM. For all $\mathbf{x}  \in \mathbb{R}^n$, $K > 1$ and $\mu>0$, it holds that
	%	\footnotesize{
	\begin{equation}\label{eq:sum_Crossing_Term}
	\begin{array}{l}
	\sum\limits_{k = 1}^{K} \bm{q}_k(\mathbf{x})^\top\mathbf{e}^k(\hat{\mathbf{x}}^{k+1})\le\frac{\eta}{1-\eta} \left\{ \frac{1}{2\mu} \sum\limits^{K-1}_{k = 1} \gamma^2 \|\bm{w}^k-\bm{w}^{k+1}\|_{\mathbf{M}}^2 \right.\\
	\left. +\frac{\mu}{2} \sum\limits^{K}_{k = 1} \| \bm{q}_k(\mathbf{x})\|_2^2+\frac{1}{2\mu} \left(\| \mathbf{e}^0(\hat{\mathbf{x}}^1) \|_2 + \gamma \|\bm{w}^0-\bm{w}^{1}\|_{\mathbf{M}} \right)^2 \right\}.
	\end{array}
	\end{equation}
\end{prop}

Now we establish the convergence results of GO-ADMM in the following theorem.
\begin{thm}\label{thm:Convergence}
	Let $\left\{ \bm{w}^k \right\}$ be the sequence generated by the GO-ADMM and denote $\bm{\Omega}^{\ast}$ as the solution set of the variational inequality in Eq.~\eqref{eq:Algo_General_VI}. Then, we have the following assertions:
	\begin{enumerate}
		\item $\|\mathbf{e}^{k} (\hat{\mathbf{x}}^{k+1}) \|_2\overset{k\rightarrow \infty}{\longrightarrow} 0$, and $\| \mathbf{B}( \mathbf{y}^{k} - \mathbf{y}^{k+1} )\|_2 \overset{k\rightarrow \infty}{\longrightarrow} 0$.
		\item $\|\mathbf{A} \mathbf{x}^{k+1} + \mathbf{B} \mathbf{y}^{k+1} - \mathbf{c}\|_2 \overset{k\rightarrow \infty}{\longrightarrow} 0$, and $f(\mathbf{x}^{k+1}) + g(\mathbf{y}^{k+1}) \overset{k\rightarrow \infty}{\longrightarrow} f(\mathbf{x}^{\ast}) + g (\mathbf{y}^{\ast})$ for any given $\bm{w}^{\ast}\in\mathbf{\Omega}^{\ast}$.
	\end{enumerate}
\end{thm}
It is easy to verify that $\bm{w}^{k+1}$ is a solution of Eq.~\eqref{eq:Algo_General_VI} if and only if $\|\bm{w}^k-\bm{w}^{k+1}\|_{\mathbf{M}}^2=0$ and $\|\mathbf{e}^k(\hat{\mathbf{x}}^{k+1})\|_2^2=0$. Hence, it is reasonable to measure the accuracy of the iterate $\bm{w}^{k+1}$ by $\|\bm{w}^k-\bm{w}^{k+1}\|_{\mathbf{M}}^2$ and $\|\mathbf{e}^k(\hat{\mathbf{x}}^{k+1})\|_2^2$.
\begin{cor}\label{corollary}
	The upper bounds of $\min_{1\le k\le K}\{\|\bm{w}^k-\bm{w}^{k+1}\|_{\mathbf{M}}^2\}$ and $\min_{1\le k\le K}\{\|\mathbf{e}^k(\hat{\mathbf{x}}^{k+1})\|_2^2\}$ is in order of  ${\cal{O}}(\frac{1}{K})$. That is, our proposed GO-ADMM actually obtains ${\cal{O}}(\frac{1}{K})$ worst-case convergence rate in a non-ergodic sense.
\end{cor}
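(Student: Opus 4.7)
The plan is to build the corollary directly on top of the key summable estimate that is already established inside the proof of Theorem~\ref{thm-convergence}, and then transfer the summability from $\Lambda^{k,k+1}$ to $\|\mathbf{e}_k(\hat{\mathbf{x}}^{k+1})\|_2^2$ via the recursion from Proposition~\ref{prop:ek_relation}. Once both $\sum_{k=1}^{K-1}\Lambda^{k,k+1}$ and $\sum_{k=1}^{K}\|\mathbf{e}_k(\hat{\mathbf{x}}^{k+1})\|_2^2$ are bounded by a constant independent of $K$, taking the minimum over the index $k$ immediately yields the $\mathcal{O}(1/K)$ rate.

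First, I would recall the chain of inequalities culminating in (\ref{Theory_Contractive_7}) from the proof of Theorem~\ref{thm-convergence}. Setting $\bm{w}=\bm{w}^{\star}\in\mathbf{\Omega}^{\star}$ and choosing $\mu>0$ so that both coefficients $\alpha-\tfrac{\mu}{2}\tfrac{\eta}{1-\eta}$ and $\tfrac{1}{2}-\tfrac{\gamma^2}{2\mu}\tfrac{\eta}{1-\eta}$ are strictly positive (which is possible thanks to (\ref{eq:etaCriterion})), one obtains
\begin{equation*}
\sum_{k=1}^{K-1}\Lambda^{k,k+1}\le C_1:=\frac{1}{\kappa}\Bigl[\tfrac{1}{2}\Lambda^{1,\star}+\tfrac{1}{2\mu}\tfrac{\eta}{1-\eta}\bigl(\|\mathbf{e}_0(\hat{\mathbf{x}}^1)\|_2+\gamma\Lambda^{0,1}_{1/2}\bigr)^2\Bigr],
\end{equation*}
where $\kappa:=\tfrac{1}{2}-\tfrac{\gamma^2}{2\mu}\tfrac{\eta}{1-\eta}>0$. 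The constant $C_1$ is independent of $K$, so $\min_{1\le k\le K}\Lambda^{k,k+1}\le C_1/(K-1)=\mathcal{O}(1/K)$.

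Next, I would upgrade Proposition~\ref{prop:ek_relation} into a summable bound on the squared errors. Writing $a_k=\|\mathbf{e}_k(\hat{\mathbf{x}}^{k+1})\|_2$ and $b_k=\Lambda^{k-1,k}_{1/2}$, the inequality (\ref{cretirion_relation}) becomes $a_k\le\eta a_{k-1}+\eta\gamma b_k$. Applying Young's inequality with a parameter $\epsilon>0$ chosen so that $\rho:=(1+\epsilon)\eta^2\in(0,1)$ (which is possible because $\eta\in(0,1)$ by (\ref{eq:etaCriterion})), I obtain
\begin{equation*}
a_k^2\le\rho\,a_{k-1}^2+C_2\,b_k^2,\qquad C_2:=(1+\tfrac{1}{\epsilon})\eta^2\gamma^2.
\end{equation*}
Unrolling this linear recursion and swapping the order of summation gives
\begin{equation*}
\sum_{k=1}^{K}a_k^2\le\frac{a_0^2}{1-\rho}+\frac{C_2}{1-\rho}\sum_{i=1}^{K}b_i^2=\frac{a_0^2}{1-\rho}+\frac{C_2}{1-\rho}\sum_{i=1}^{K}\Lambda^{i-1,i},
\end{equation*}
and by the first step the right-hand side is bounded by a constant $C_3$ independent of $K$. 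Therefore $\min_{1\le k\le K}\|\mathbf{e}_k(\hat{\mathbf{x}}^{k+1})\|_2^2\le C_3/K=\mathcal{O}(1/K)$, establishing both claims in a non-ergodic sense.

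The main obstacle I expect is the bookkeeping of constants to ensure that the coefficient $\rho=(1+\epsilon)\eta^2$ can indeed be made strictly less than one simultaneously with the positivity of both coefficients in the first step; this amounts to verifying that the range of admissible $\mu$ from (\ref{Theory_Contractive_7}) is nonempty (already done in the proof of Theorem~\ref{thm-convergence}) and that a valid $\epsilon$ exists for the chosen $\eta$, which follows directly from $\eta<1$. Everything else is routine algebraic manipulation and a telescoping/Fubini argument on the double sum.
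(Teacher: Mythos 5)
Your proof is correct and follows essentially the same road-map as the paper's (sketched) argument: extract a $K$-independent bound on $\sum_k \Lambda^{k,k+1}$ from the estimate \eqref{Theory_Contractive_5} (i.e.\ its specialization at $\bm{w}=\bm{w}^{\star}$) and then transfer summability to the errors through the recursion of Proposition~\ref{prop:ek_relation}, so that taking minima gives the $\mathcal{O}(1/K)$ non-ergodic rate. The only cosmetic difference is that you square the one-step recursion via Young's inequality instead of working with the unrolled bound \eqref{Basic_equation_3} that the paper cites; both routes are equivalent.
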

\begin{remark}
	 Instant of making assumptions on $\mathbf{A}$ directly, we suppose that $(\mathbf{W}^\top\mathbf{W}+\beta\mathbf{A}^{\top}\mathbf{A})$ is invertible. This invertible property can be easily obtained by introducing an invertible metric matrix $\mathbf{W}^\top\mathbf{W}$. Second, we must clarify that in this work we provide general theoretical results. The convergence regarding $\mathbf{y}$ is actually $\| \mathbf{B}( \mathbf{y}^{k} - \mathbf{y}^{k+1} )\|_2 \overset{k\rightarrow \infty}{\longrightarrow} 0$, without making any assumption on $\mathbf{B}$. Trivially, if $\mathbf{B}$ is full column rank, we can obtain $\|\mathbf{y}^{k} - \mathbf{y}^{k+1}\|_2\overset{k\rightarrow \infty}{\longrightarrow} 0$ immediately. It is worth mentioning that, in image processing tasks (the object is usually formulated as $f(\mathbf{x}) + g(\mathbf{Px})$), we often reformulate these problems to general forms (i.e, $f(\mathbf{x})+g(\mathbf{y})$, s.t., $\mathbf{Px}-\mathbf{y}=\mathbf{0}$) by introducing $\mathbf{Px}=\mathbf{y}$ where $\mathbf{y}$ is an auxiliary variable. In this case, $\mathbf{B}$ is denoted as a negative identity matrix (i.e., $\mathbf{B}=-\mathbf{I}$) which must be full column rank.
\end{remark}

\begin{figure*}[t]
	\centering
	\begin{tabular}{c@{\extracolsep{0.2em}}c@{\extracolsep{0.2em}}c@{\extracolsep{0.2em}}c@{\extracolsep{0.2em}}c}
		\includegraphics[height=0.157\textwidth]{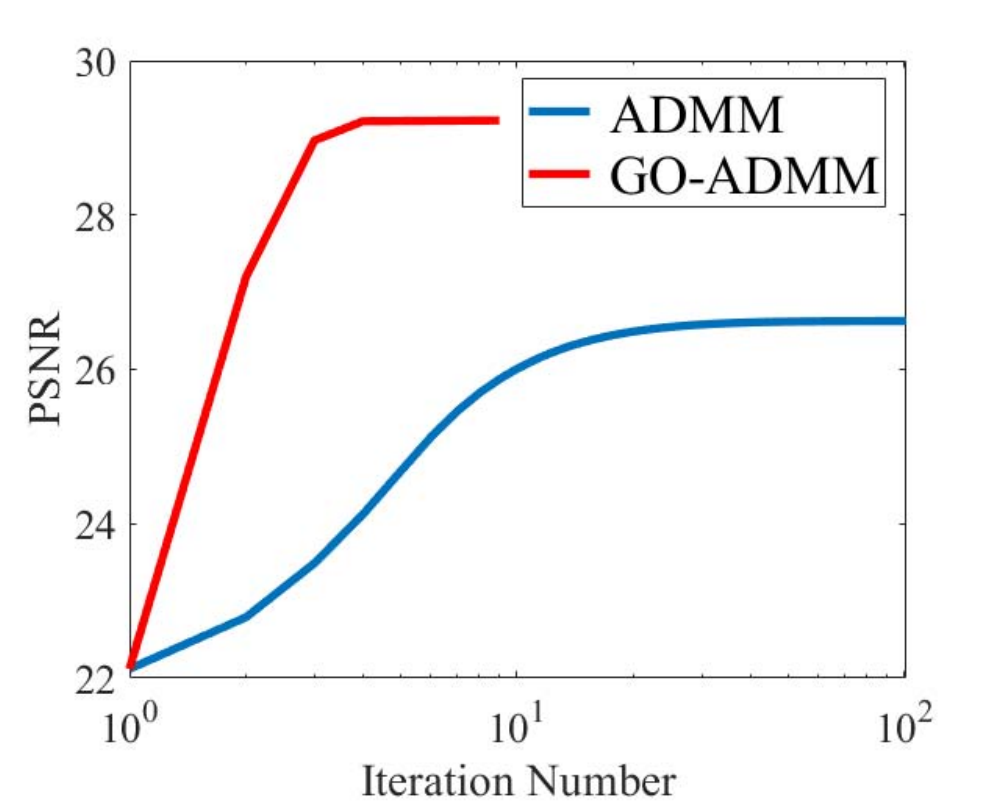}&
		\includegraphics[height=0.157\textwidth]{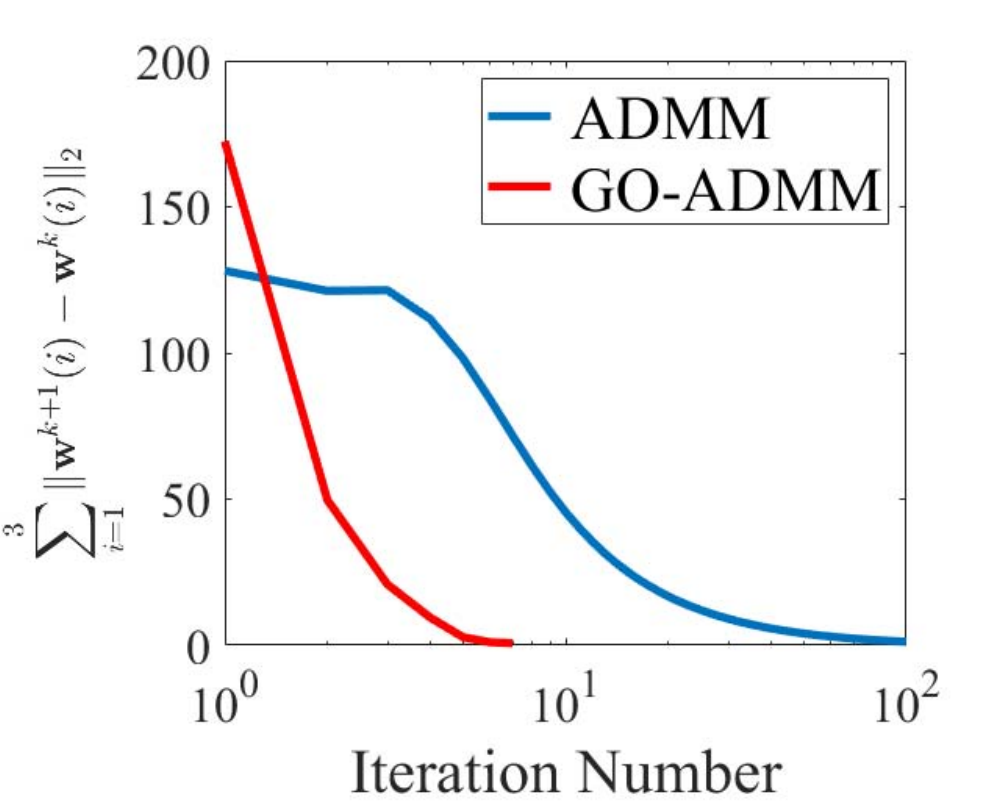}&
		\includegraphics[height=0.157\textwidth]{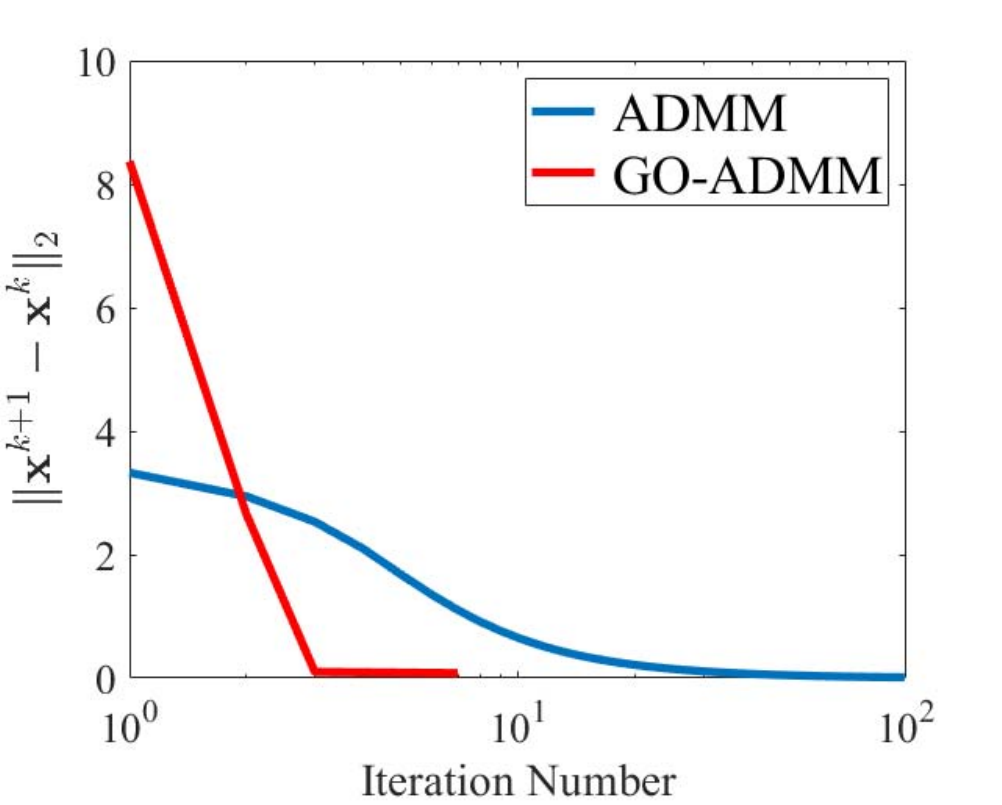}&
		\includegraphics[height=0.157\textwidth]{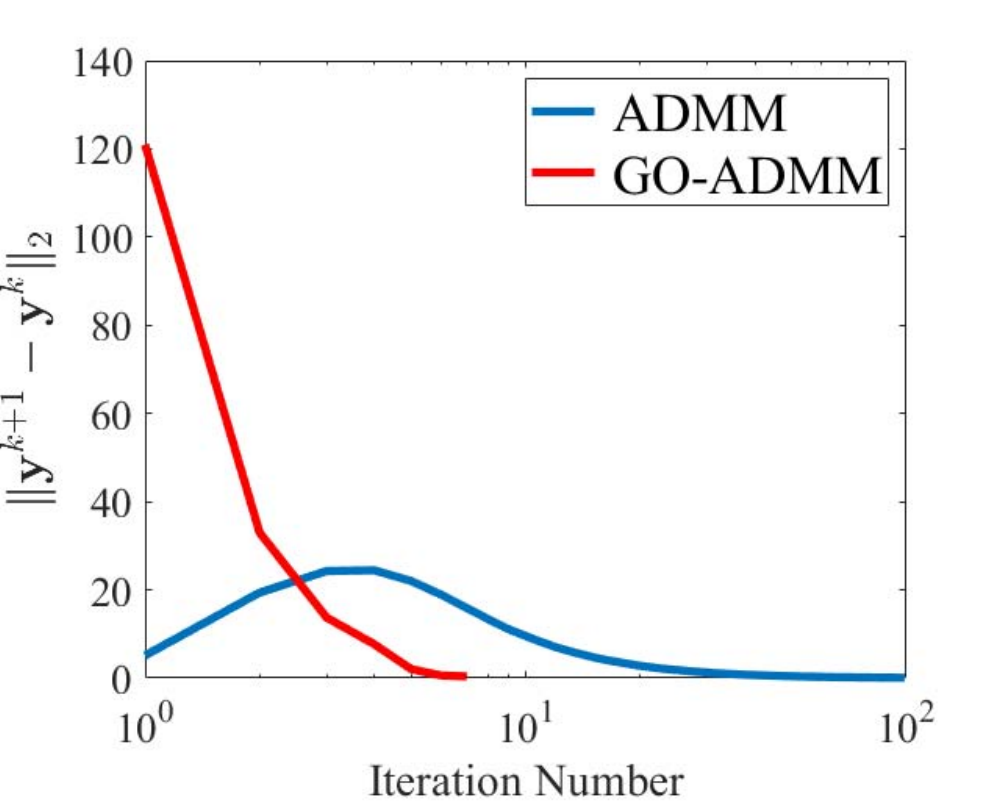}&
		\includegraphics[height=0.157\textwidth]{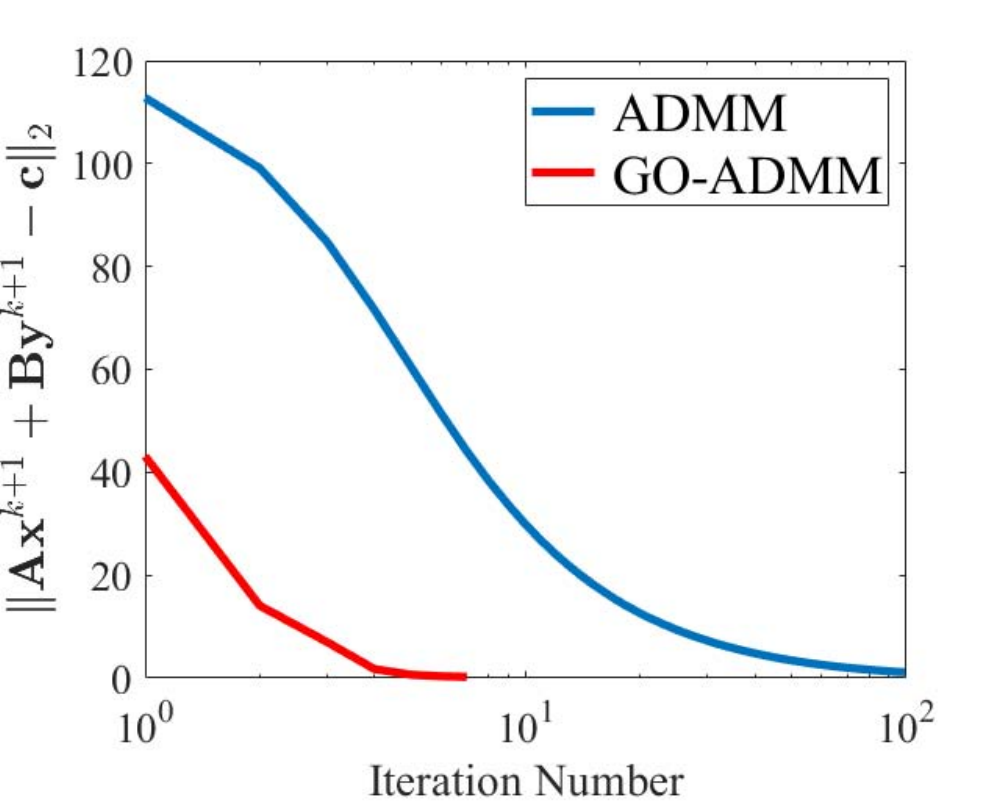}\\
		\footnotesize (a) &\footnotesize (b) & \footnotesize (c) &\footnotesize (d)&\footnotesize (e)
	\end{tabular}
	\caption{Comparing iteration behaviors of GO-ADMM with standard ADMM for image denoising task in $20\%$ noise level. The PSNR, iterative error ($\sum_{i=1}^{3}\|\bm{w}^{k+1}(i)-\bm{w}^k(i)\|_2=\|\mathbf{x}^{k+1}-\mathbf{x}^k\|_2+\|\mathbf{y}^{k+1}-\mathbf{y}^k\|_2+\|\bm{\lambda}^{k+1}-\bm{\lambda}^k\|_2$), $\|\mathbf{x}^{k+1}-\mathbf{x}^k\|_2$, $\|\mathbf{y}^{k+1}-\mathbf{y}^k\|_2$ and $\|\mathbf{Ax}^{k+1}+\mathbf{By}^{k+1}-\mathbf{c}\|_2$ are plotted in subfigures (a)-(e), respectively. } 
	\label{fig:CompNum}
\end{figure*}

\begin{figure*}[t]
	\centering
	\begin{tabular}{c@{\extracolsep{0.3em}}c@{\extracolsep{0.1em}}c@{\extracolsep{0.3em}}c@{\extracolsep{0.1em}}c}
		\includegraphics[height=0.158\textwidth]{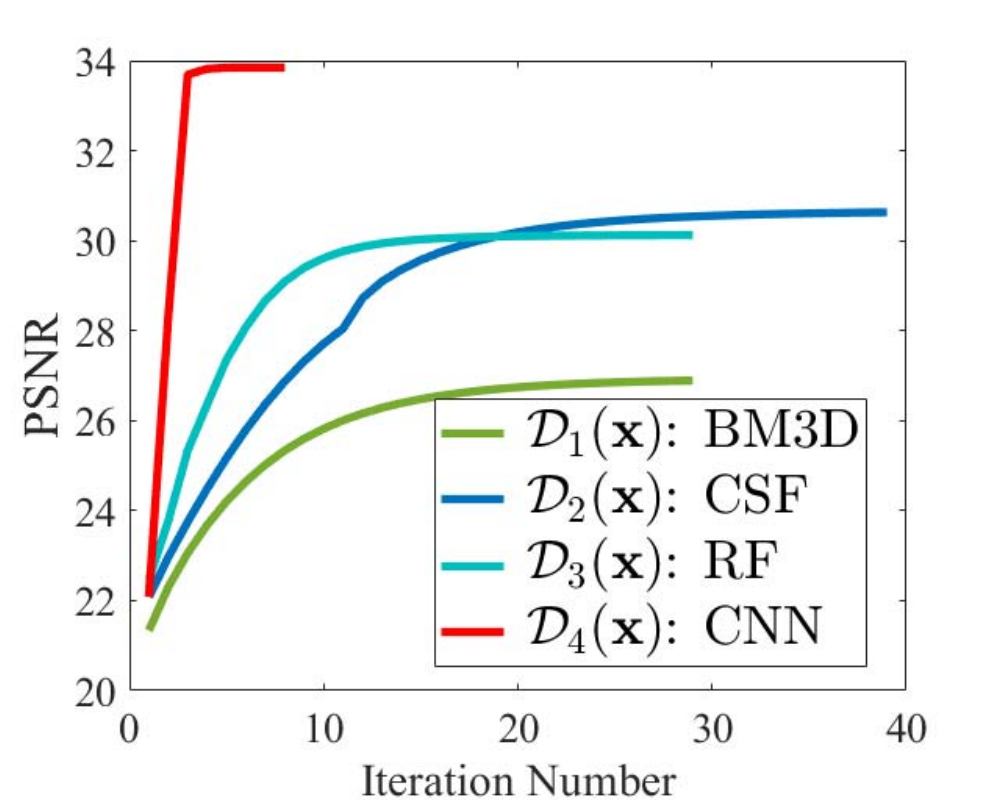}&
		\includegraphics[height=0.158\textwidth]{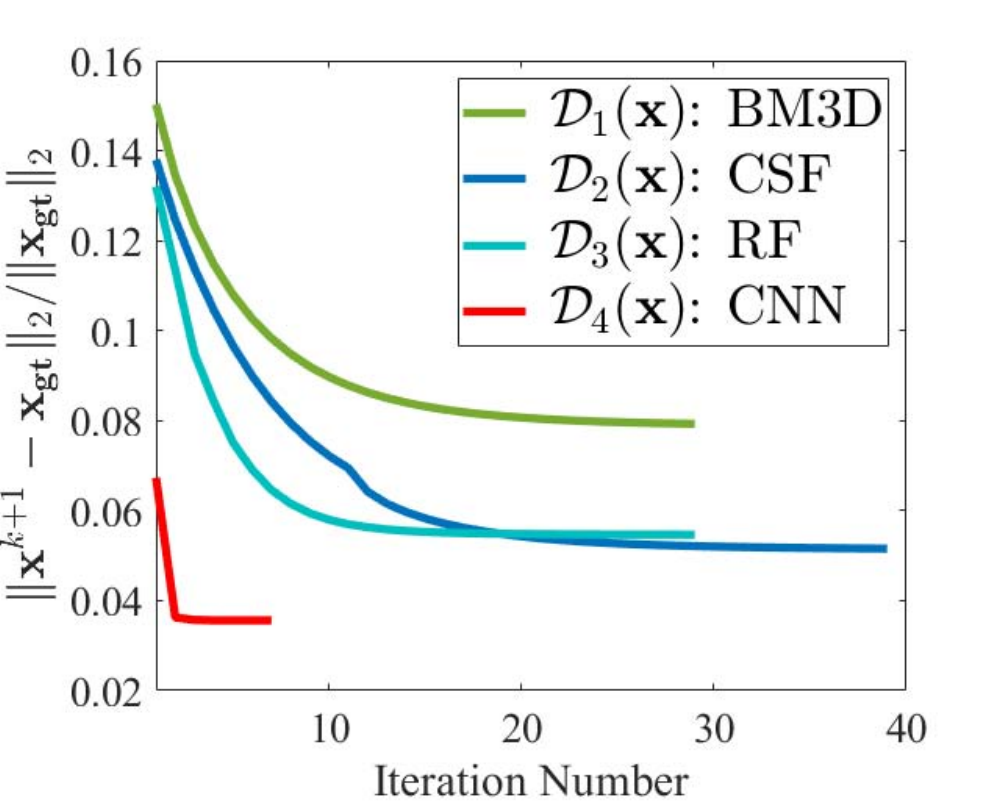}&
		\includegraphics[height=0.158\textwidth]{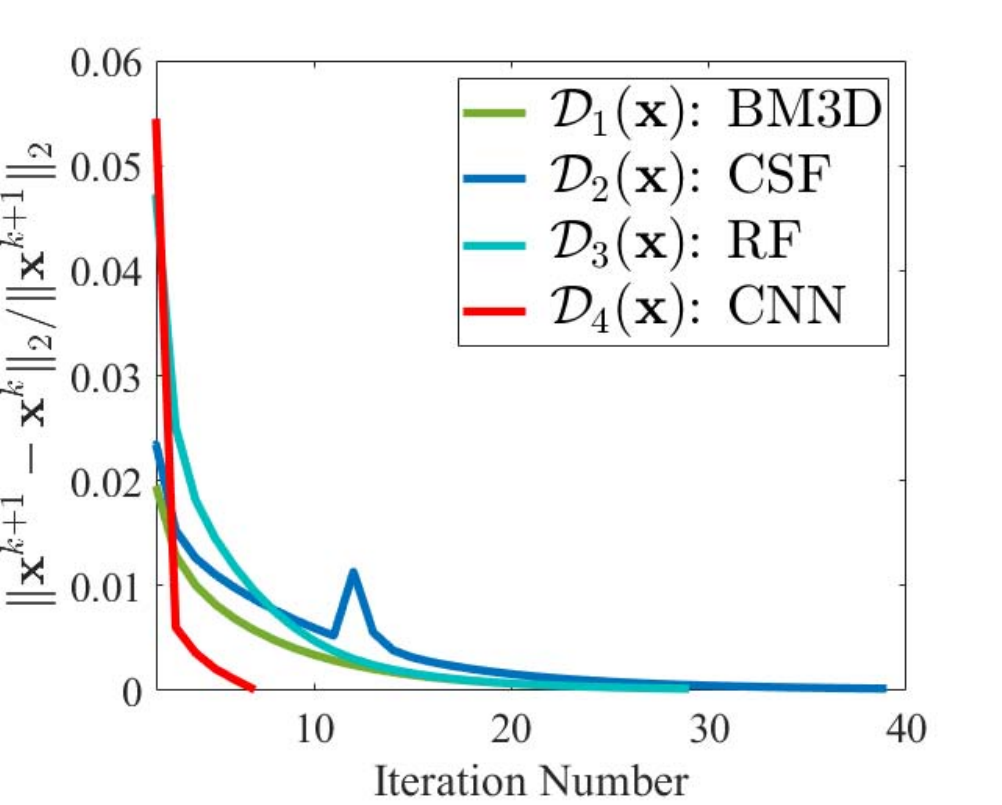}&
		\includegraphics[height=0.158\textwidth]{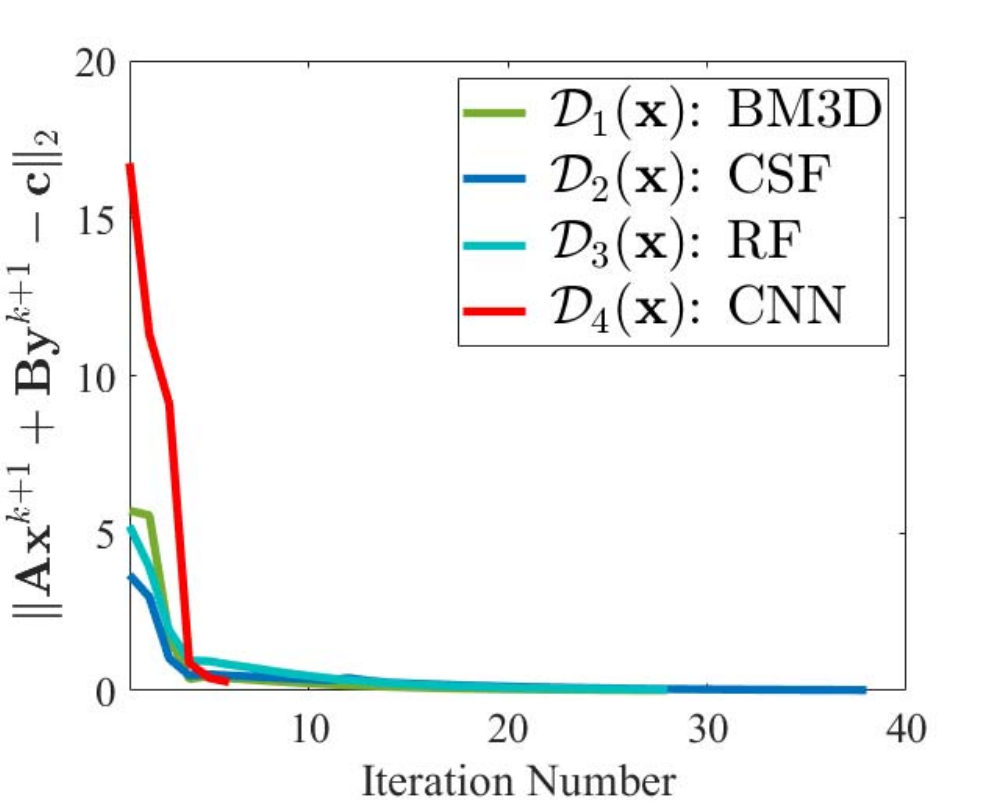}&
		\includegraphics[height=0.158\textwidth]{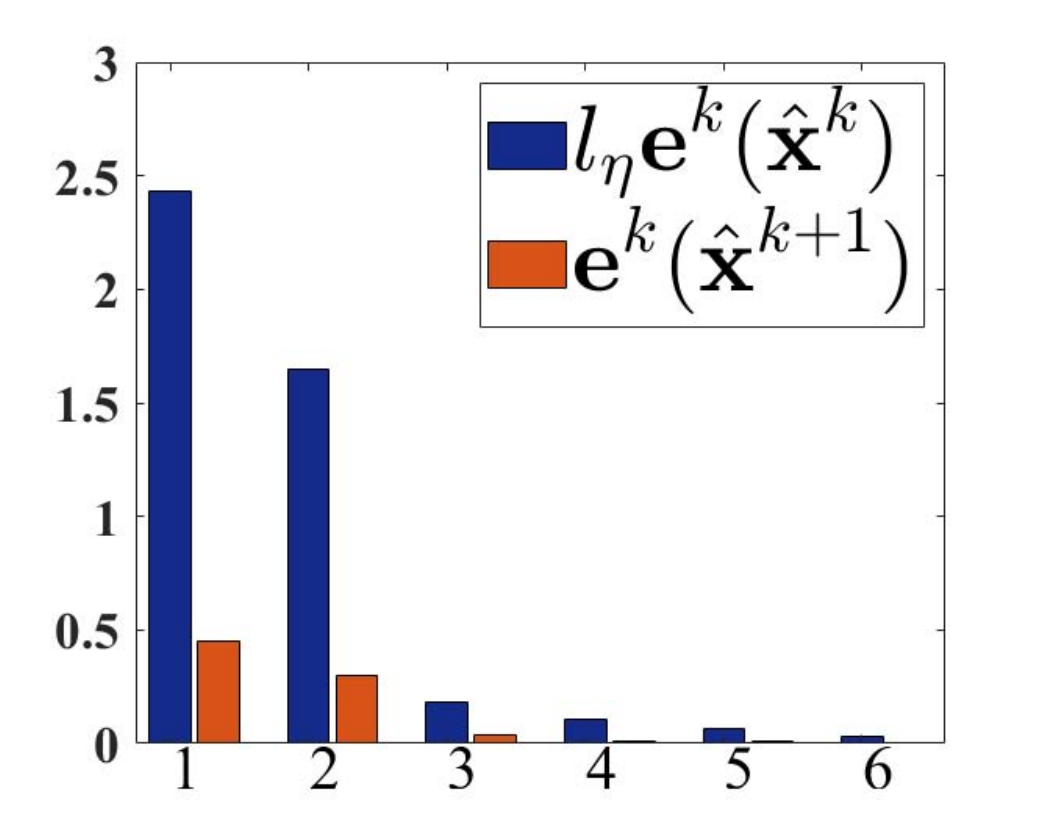}\\
	\end{tabular}
	\caption{Illustrating the convergence behaviors of GO-ADMM with different $\mathcal{D}$ in image denoising task. Denote $\mathcal{D}_{\mathtt{BM3D}}^k$, $\mathcal{D}_{\mathtt{CSF}}^k$, $\mathcal{D}_{\mathtt{RF}}^k$ and $\mathcal{D}_{\mathtt{CNN}}^k$ as the conditions with BM3D, CSF, RF and CNN respectively. The performance metric (i.e., PSNR), relative error ($\|\mathbf{x}^{k+1}-\mathbf{x}^{k}\|_2/\|\mathbf{x}^{k+1}\|_2$), reconstruction error ($\|\mathbf{x}^{k+1}-\mathbf{x}_{gt}\|_2/\|\mathbf{x}_{gt}\|_2$), iterative errors ($\sum_{i=1}^{3}\|\bm{w}^{k+1}(i)-\bm{w}^k(i)\|_2=\|\mathbf{x}^{k+1}-\mathbf{x}^k\|_2+\|\mathbf{y}^{k+1}-\mathbf{y}^k\|_2+\|\bm{\lambda}^{k+1}-\bm{\lambda}^k\|_2$) of the four strategies and the error control condition under $\mathcal{D}_{\mathtt{CNN}}^k$ are plotted in the subfigures (a)-(e), respectively.}
	\label{fig:CompNets}
\end{figure*}

\section{Applications}\label{sec:application}

As a nontrivial byproduct, we first demonstrate how to apply GO-ADMM to image restoration problems in real-world low-level vision applications, such as image denoising, inpainting, and compressed sensing MRI tasks. Then, we illustrate the implementation of GO-ADMM to tackle more difficult rain streaks removal problems that we estimate rain streaks and background simultaneously.

\textbf{Image Restoration}: Individually, we consider the following Total Variation (TV) minimization model that is popularly performed in various low-level computer vision tasks, especially in image processing problems. Then we define the model as
\begin{equation*}\label{eq:model_denoise}
\begin{array}{l}
\min\limits_{\mathbf{x}} \frac{1}{2}\|\mathbf{Q}\mathbf{x}-\mathbf{b}\|_2^2 + \mu\|\nabla\mathbf{x}\|_1,
\end{array}
\end{equation*}
where $\mathbf{x}$, $\mathbf{b}$ denote the latent and observed image respectively, and $\mathbf{Q}$ is a linear operator. $\nabla =[\nabla_h;\nabla_v]$ represents the gradient of $\mathbf{x}$ in horizontal and vertical directions. By introducing variable $\mathbf{u}=[\mathbf{u}_h;\mathbf{u}_v]$, the above equation can be transformed into the following form
\begin{equation}\label{eq:model_admm}
\begin{array}{l}
\min\limits_{\mathbf{x},\mathbf{u}} \frac{1}{2}\|\mathbf{Q}\mathbf{x}-\mathbf{b}\|_2^2 + \mu\|\mathbf{u}\|_1,\
s.t., \nabla\mathbf{x}-\mathbf{u} = \mathbf{0}.
\end{array}
\end{equation}
Next, applying GO-ADMM to Eq.~\eqref{eq:model_admm}, we have that
\begin{equation*}
\begin{array}{l}
\!\!\mathcal{F}^k(\mathbf{x})\!=\!\left(\mathbf{W}^\top\mathbf{W}\!+\!\beta(\nabla_h^{\top}\nabla_h\!+\!\nabla_v^{\top}\nabla_v)\right)^{\!-1}\left(\mathbf{s}^k\!-\!\mathbf{Q}^{\top}(\mathbf{Q}\mathbf{x}\!-\!\mathbf{b})\right),\ 
\end{array}
\end{equation*}
where $\mathbf{s}^k = \beta(\nabla_h^{\top}\mathbf{u}_h+\nabla_v^{\top}\mathbf{u}_v^k)+(\nabla_h^{\top}\bm{\bm{\lambda }}_h^k+\nabla_v^{\top}\bm{\bm{\lambda }}_v^k) + \mathbf{W}^\top\mathbf{W}\mathbf{x}^k$. 
Now, we are ready to design inner iterative strategy to update $\mathbf{x}^{k+1}$. First, the task-specific module $\mathcal{D}^k(\mathbf{x}^k)$ is designed as a denoiser operator. Then the candidate variable $\hat{\mathbf{x}}^{k+1}$ can be rewritten as $\hat{\x}^{k+1}=(1-\alpha)\x^k+\alpha\mathcal{D}^k(\mathbf{x}^k)$. If $\hat{\x}^{k+1}$ satisfies the error control condition~\eqref{eq:ek-condition}, we set  $\mathbf{x}^{k+1}=\mathcal{F}^k(\hat{\mathbf{x}}^{k+1})$, else decrease the weight $\alpha$ by $\rho\alpha$ to obtain the candidate $\hat{\mathbf{x}}^{k+1}$. 
Actually, if $\alpha=0$, we estimated $\hat{\mathbf{x}}^{k+1}$ by solving the following linear equation 
\begin{equation*}
\begin{array}{l}
\left( \mathbf{Q}^{\top}\mathbf{Q}+\mathbf{W}^\top\mathbf{W}+\beta\left(\nabla_h^{\top}\nabla_h+\nabla_v^{\top}\nabla_v\right)\right)\mathbf{x}^{k+1} =  \mathbf{Q}^{\top}\mathbf{b}+\mathbf{s}^k,
\end{array}
\end{equation*}
with PCG method as stated in Subsection~\ref{subsec:differentiable_updating} satisfying $\|\mathbf{e}_k(\hat{\x}^{k+1})\|_2^2\leq \eta\|\mathbf{e}_k(\hat{\x}^k)\|_2^2$ which is defined as in~\eqref{eq:etaCriterion}. Then, variables of $\mathbf{x}^{k+1}$, $\mathbf{u}^{k+1}$ and $\bm{\bm{\lambda }}^{k+1}$ are updated following Alg.~\ref{alg:GO-ADMM}.

\textbf{Rain Streaks Removal}: For rain streaks removal application, we reformulate this problem with unknown background $\mathbf{x}_b$ layer and rain streaks layer $\mathbf{x}_r$ by the setting function $l(\mathbf{Qx})=\frac{1}{2}\|\mathbf{Qx}-\mathbf{b}\|_2^2$ and $g(\mathbf{x})=\mu_1\|\nabla\mathbf{x}_b\|_1+\mu_2\|\mathbf{x}_r\|_1$, 
where $\mathbf{x}:=[\mathbf{x}_b;\mathbf{x}_r]$ and $\mathbf{b}$ denotes the rainy image. In this case, we set $\mathbf{Q}$ as block unit matrix, i.e., $\mathbf{Qx} = \mathbf{x}_b+\mathbf{x}_r$. By introducing two auxiliary variables $\mathbf{u}$ and $\mathbf{v}$, we reformulate Eq.~\eqref{eq:model} as follows
\begin{equation*}
\begin{array}{l}
\min\limits_{\mathbf{x}_b,\mathbf{x}_r} \frac{1}{2}\|\mathbf{Qx}-\mathbf{b}\|_2^2+\mu_1\|\nabla\mathbf{x}_b\|_1+\mu_2\|\mathbf{x}_r\|_1,\\
s.t., \nabla\mathbf{x}_b-\mathbf{u} = \mathbf{0}\ \text{and}\ \mathbf{x}_r-\mathbf{v} = \mathbf{0}.
\end{array}
\end{equation*}
By introducing the dual multipliers $\bm{\lambda}=[\bm{\lambda_1};\bm{\lambda}_2]$ and the penalty parameter $\beta>0$, generalized updates form of variables are summarized as follows
\begin{equation*}
\!\left\{
\begin{array}{l}
\!\mathbf{x}_b^{k+1}\!\in\!\arg\min\mathcal{L}_{\beta}(\mathbf{x}_b,\mathbf{x}_r^k,\mathbf{u}^k,\mathbf{v}^k,\bm{\bm{\lambda }}^k)+\frac{1}{2}\|\mathbf{W}(\mathbf{x}_b-\mathbf{x}_b^k)\|_2^2,\\
\!\mathbf{x}_r^{k+1}\!\in\!\arg\min\mathcal{L}_{\beta}(\mathbf{x}_b^k,\mathbf{x}_r,\mathbf{u}^k,\mathbf{v}^k,\bm{\bm{\lambda }}^k)+\frac{1}{2}\|\mathbf{W}(\mathbf{x}_r-\mathbf{x}_r^k)\|_2^2,\\
\!\mathbf{u}^{k+1}\!\in\!\arg\min\mathcal{L}_{\beta}(\mathbf{x}_b^{k+1},\mathbf{x}_r^{k+1},\mathbf{u},\mathbf{v}^k,\bm{\bm{\lambda }}^k),\\
\!\mathbf{v}^{k+1}\!\in\!\arg\min\mathcal{L}_{\beta}(\mathbf{x}_b^{k+1},\mathbf{x}_r^{k+1},\mathbf{u}^{k},\mathbf{v},\bm{\bm{\lambda }}^k),\\
\!\bm{\bm{\lambda }}_1^{k+1}\!=\!\bm{\bm{\lambda }}_1^{k}+\beta(\nabla\mathbf{x}_b^{k+1}-\mathbf{u}^{k+1}),\\
\!\bm{\bm{\lambda }}_2^{k+1}\!=\!\bm{\bm{\lambda }}_2^{k}+\beta(\mathbf{x}_r^{k+1}-\mathbf{v}^{k+1}).
\end{array}
\right.
\end{equation*}
Indeed, for $\mathbf{x}_b$ and $\mathbf{x}_r$-subproblems, we set $\hat{\x}_b^{k+1}=(1-\alpha)\x_b^k+\alpha\mathcal{D}^k_b(\mathbf{x}_b^k)$ and $\hat{\x}_r^{k+1}=(1-\alpha)\x_r^k+\alpha\mathcal{D}^k_r(\mathbf{x}_r^k)$ as the denoiser and rain streaks removal operators respectively with different input (i.e., $\mathbf{x}_b^k$ and $\mathbf{x}_r^k$). We then follow the form in image restoration applications to update $\hat{\mathbf{\mathbf{x}}}_b^{k+1}$ and $\hat{\mathbf{x}}_r^{k+1}$.

\begin{figure*}[t]
	%	\vskip 0.2in
	\centering
	\begin{tabular}{c@{\extracolsep{0.3em}}c@{\extracolsep{0.3em}}c@{\extracolsep{0.3em}}c@{\extracolsep{0.3em}}c}
		\includegraphics[height=0.143\textwidth]{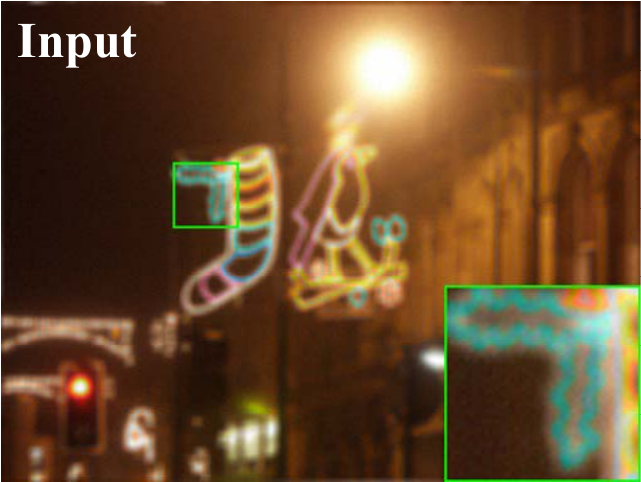}&
		\includegraphics[height=0.143\textwidth]{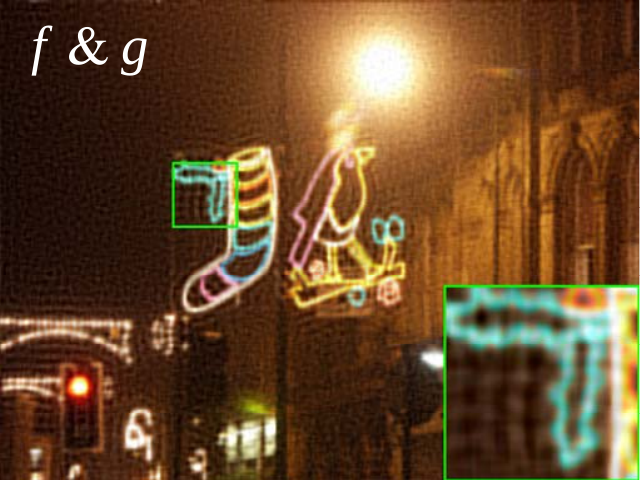}&
		\includegraphics[height=0.143\textwidth]{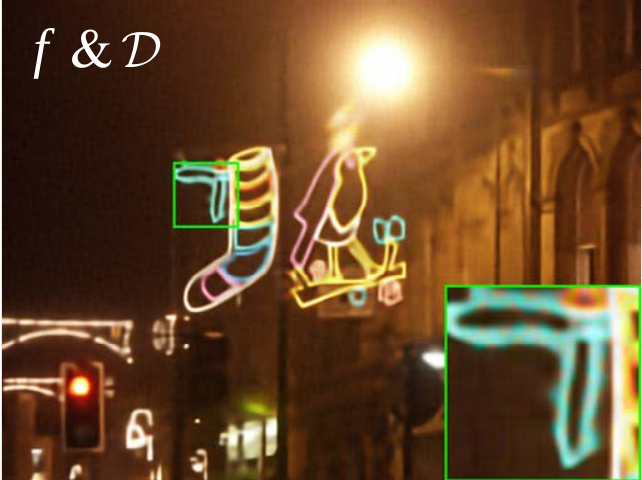}&
		\includegraphics[height=0.143\textwidth]{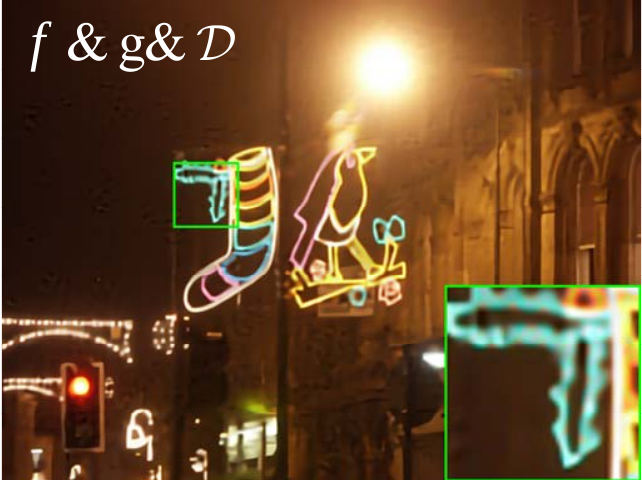}&
		\includegraphics[height=0.143\textwidth]{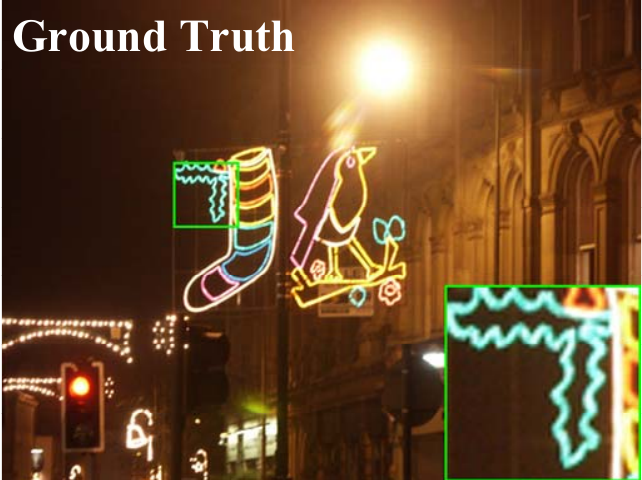}\\
	\end{tabular}
	\caption{Visual quality comparison of different models on image deblurring task. $f$, $g$ and $\mathcal{D}$ denote fidelity term, regularization/prior term and introduced data term respectively.}
	\label{fig:CompPriors}
\end{figure*}

\section{Numerical Results}\label{sec:exp}

This section first conducts experiments to verify our theoretical results. Then we compare the performance of the proposed algorithm scheme with other state-of-the-art methods in real-world problems, such as image denoising, inpainting, compressed sensing MRI, and rain streaks removal. All experiments are performed on a PC with Intel Core i7 CPU at 3.2 GHz, 32 GB RAM and a NVIDIA GeForce GTX 1070 GPU.

\subsection{Numerical Validation}\label{subsec:verification}
To verify the convergence properties of the proposed approach, this subsection is organized as follows: We first apply our algorithm on image denoising that aims to restore the latent grayscale or color image $\mathbf{x}$ from the corrupted observation $\mathbf{b}$ which relies on the linear model $\mathbf{b}=\mathbf{x}+\mathbf{n}$ to analyze the iterative behaviors and the error conditions with different $\mathcal{D}^k$ settings. In this application, we set $\mathbf{Q}$ as a unit matrix. Then, to make a further analysis, we illustrate the convergence behaviors of the proposed scheme on a general form. Here, we conduct experiments on image deconvolution, $\mathbf{b}=\mathbf{Q}\mathbf{x}+\mathbf{n}$, that aims to recover the unknown latent image $\mathbf{x}$ from the missing pixels of the observation $\mathbf{b}$ with the mask matrix $\mathbf{Q}$ and the noise $\mathbf{n}$. Detailed analyses are provided in the following.

We first compare the proposed method with standard ADMM in image denoising application with $20\%$ noise level and the corresponding metrics are plotted in Fig.~\ref{fig:CompNum}. To calculate the performance, we plotted the PSNR curves in Fig.~\ref{fig:CompNum}(a). Observed that the proposed GO-ADMM performed the best with fewer iteration steps when compared with standard ADMM. Indeed, the task-specific module is used to calculate a task-related optimal solution that plays a critical part in real-world applications. To further compare the convergence of GO-ADMM with other numerical ones, we plotted the iteration errors (i.e., $\sum_{i=1}^{3}\|\bm{w}^{k+1}(i)-\bm{w}^k(i)\|_2$ where $\bm{w}^k$ is defined in~\eqref{eq:wbar}) in Fig.~\ref{fig:CompNum}(b). Moreover, we provide a detailed analysis of the convergence by showing the errors about $\|\mathbf{x}^{k+1}-\mathbf{x}^k\|_2$, $\|\mathbf{y}^{k+1}-\mathbf{y}^k\|_2$ and $\|\mathbf{Ax}^{k+1}+\mathbf{By}^{k+1}-\mathbf{c}\|_2$ (i.e., plotted in Fig.~\ref{fig:CompNum}(c), (d), and (e) respectively). Note that, in image denoising applications, $\mathbf{A}$ is the gradient operator and $\mathbf{B}=\mathbf{I}_{l\times l}$ denotes the $l\times l$ identity matrix. Moreover, Fig.~\ref{fig:CompNum}(b)-(e) verified the convergence theories provided in Theorem~\ref{thm:Convergence}.

To illustrate the flexibility of the task-specific module in GO-ADMM, we then consider the performance of GO-ADMM with four different strategies for $\mathcal{D}^k$ settings, such as filtering (BM3D~\cite{dabov2007image}, RF~\cite{gastal2011domain}), discriminant learning (CSF)~\cite{schmidt2014shrinkage}, and the convolution neural networks (CNNs)~\cite{zhang2017learning}, in image denoising task under $20\%$ noise level. The corresponding convergence behaviors are plotted in the first four subfigures of Fig.~\ref{fig:CompNets} with reconstruction errors, relative errors ($\|\mathbf{x}^{k+1}-\mathbf{x}^{k}\|_2/\|\mathbf{x}^{k+1}\|_2$), and iterative errors of the dual variable (i.e., $\|\mathbf{Ax}^{k+1}+\mathbf{By}^{k+1}-\mathbf{c}\|_2$). These subfigures illustrate that different $\mathcal{D}^k$ settings have distinct effects on experimental performance and convergent behaviors. Observed that GO-ADMM with CNNs strategy performed the best than the others (i.e., with BM3D, CSF, RF settings). Consequently, we set the task-specific module as a set of CNNs in the following work. Additionally, it is crucial to show the guidance behaviors (i.e., the error control condition $\mathbf{e}_k(\hat{\mathbf{x}}^{k+1})$ and $\mathbf{e}_k(\hat{\mathbf{x}}^k)$ as described in Eq.~\eqref{eq:ek}) under CNNs setting when conduct the experiment. Then, the corresponding curves are plotted in Fig.~\ref{fig:CompNets} (e) in which $l_{\eta}$ denotes the lower bound of $\sqrt{2\theta}/(\sqrt{2\theta}+L\|\mathcal{N}\|_2)>l_{\eta}$. As for the estimation of $\|\mathcal{N}\|_2$, we have that $\|\mathcal{N}\|_2\leq\left\|(\mathbf{W}^\top\mathbf{W}+\beta\mathbf{A}^{\top}\mathbf{A})^{-1}\begin{pmatrix}\mathbf{W}^{\top}&\sqrt{\beta}\mathbf{A}^{\top}\end{pmatrix}\right\|_2=1/\sqrt{\lambda_{\min}}$, where $\lambda_{\min}$ is the minimum characteristic value of $(\mathbf{W}^\top\mathbf{W}\!+\!\beta\mathbf{A}^{\top}\mathbf{A})$ and $\mathbf{W}=\tau\mathbf{I}$ with $\tau>0$. Thus, we have $\lambda_{\min}>\tau$. In this experiment, parameters $\tau$ and $\mu$ are set as $\sqrt{2}$ and $1e-4$, respectively. With $\theta= L=1$, the relationship between $\|\mathcal{N}\|_2 < 1/\sqrt{2}$ and $l_{\eta} \leq 2/3$ is satisfied. Clearly, experimental results imply that iterations with CNNs module met the error control condition stated in Eq.~\eqref{eq:ek-condition} (i.e., $\|\mathbf{e}_k(\hat{\mathbf{x}}^{k+1})\|_2\leq l_{\eta}\|\mathbf{e}_k(\hat{\mathbf{x}}^k)\|_2$). Further, convergent behaviors in Fig.~\ref{fig:CompNets} shows that GO-ADMM has no limit on the task-specific module $\mathcal{D}^k$. 

As for $\mathcal{D}^k$, we actually train a series of CNNs on images with different noise levels or rain streaks in image derain task as our ``bank'' of modules. As for CNNs, we just adopt the standard residual architecture, consisting of nineteen layers, i.e., seven dilated convolutions with $3 \times 3$ filter size, six ReLu operations (plugged between each two convolution layers) and five batch normalizations (plugged between convolution and ReLU, except the first convolution layer). In the training phase of image denoising, deblurring and inpainting, we randomly sample 800 natural images from the ImageNet database~\cite{russakovsky2015imagenet} and add Gaussian noise with different noise levels. In rain streaks removal application, we select Rain100L~\cite{martin2001database} and Rain1400~\cite{fu2017removing} datasets to train the CNNs. We use ADAM with a weight decay of 0.0001 to optimize MSE loss to train our network. The learning rate is initially set as 0.001 and decayed by multiplying 0.1 at the 30th, 60th and 80th epochs. As for $\mathbf{W}$, in image denoising, inpainting and MRI applications, it is selected as a parameter. In rain streaks removal task, $\mathbf{W}$ is designed as rain streaks mask measured by CNN networks.

\begin{figure}[t]
	\centering
	\begin{tabular}{c@{\extracolsep{0.2em}}c}
		\includegraphics[height=0.192\textwidth]{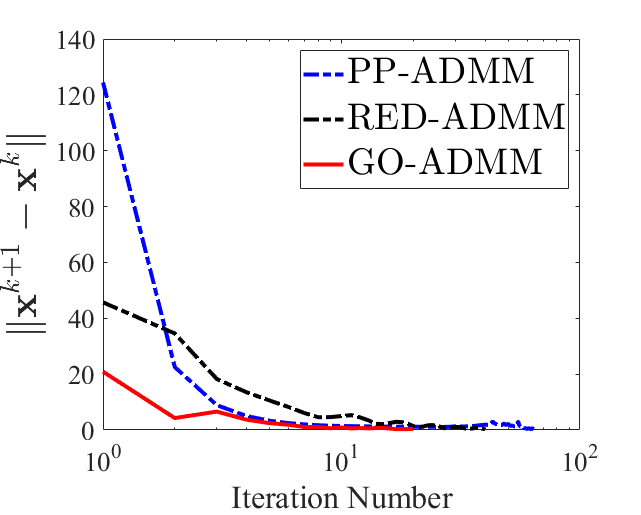}&
		\includegraphics[height=0.192\textwidth]{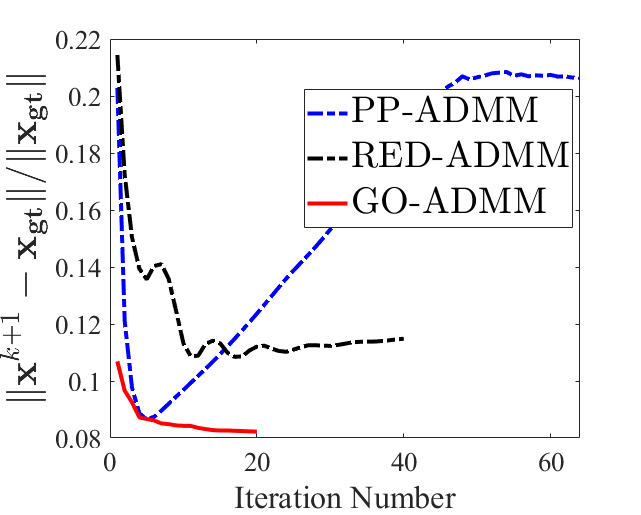}\\
	\end{tabular}
	\caption{Comparing iteration behaviors of GO-ADMM with PP-ADMM and RED-ADMM.}
	\label{fig:CompPP}
\end{figure}

To evaluate the effectiveness of regularization/prior knowledge and inserted data information, we conduct an ablation experiment on image deconvolution application with a $2\%$ noise level, and the corresponding results are shown in Fig.~\ref{fig:CompPriors}. In this experiment, $\mathbf{Q}$ represents a blurry kernel. Observed that, Fig.~\ref{fig:CompPriors} attributes the effectiveness of prior term and inserted data information. 

We further conduct an experiment on image deblurring task to compare the proposed method with implicit and explicit plug-in form (i.e., PP-ADMM and RED-ADMM). As shown in Fig.~\ref{fig:CompPP}, PP-ADMM and RED-ADMM curves are oscillating. This is mainly because the introduced denoiser $\mathcal{D}$ does not satisfy the strict smoothness and symmetrical structure required in RED-ADMM, and also does not meet the non-expansive requirement in PP-ADMM. Fortunately, GO-ADMM has no demand for the proposed task-specific module directly. In other words, the introduced guidance policy prevents the iteration sequence from tending to unwanted solutions.

\subsection{State-of-the-Art Comparisons}\label{sec:comparisons}

\textbf{Image Denoising.}
Image denoising aims to restore the latent  grayscale or color image $\mathbf{x}$ from the corrupted observation $\mathbf{b}$ that relies on the linear model $\mathbf{b} = \mathbf{x} + \mathbf{n}$. Here we compared our GO-ADMM with several state-of-the-art image restoration approaches, including plug-in methods (PP-ADMM~\cite{chan2017plug}, RED-ADMM~\cite{romano2017little}) and learning-based methods (DnCNN~\cite{zhang2017beyond}, CBDNet~\cite{guo2019toward}). We conducted experiments on the challenging real-world noisy image provided in~\cite{lebrun2015noise} and the corresponding results are shown in Fig.~\ref{fig:DenoiseReal}. Observed that our method removes more noise and restores a clearer image than the compared approaches.

\begin{figure*}[htb]
	\centering
	\begin{tabular}{c@{\extracolsep{0.3em}}c@{\extracolsep{0.3em}}c@{\extracolsep{0.3em}}c@{\extracolsep{0.3em}}c}
		\includegraphics[height=0.145\textwidth]{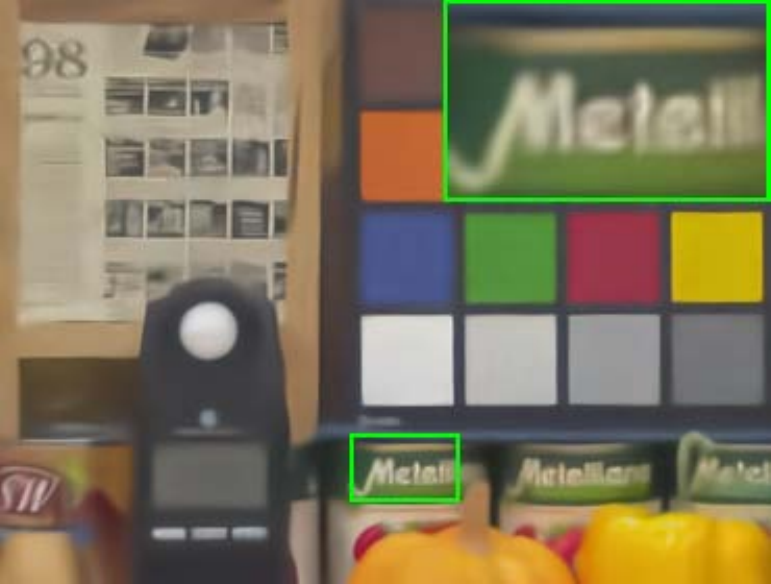}&
		\includegraphics[height=0.145\textwidth]{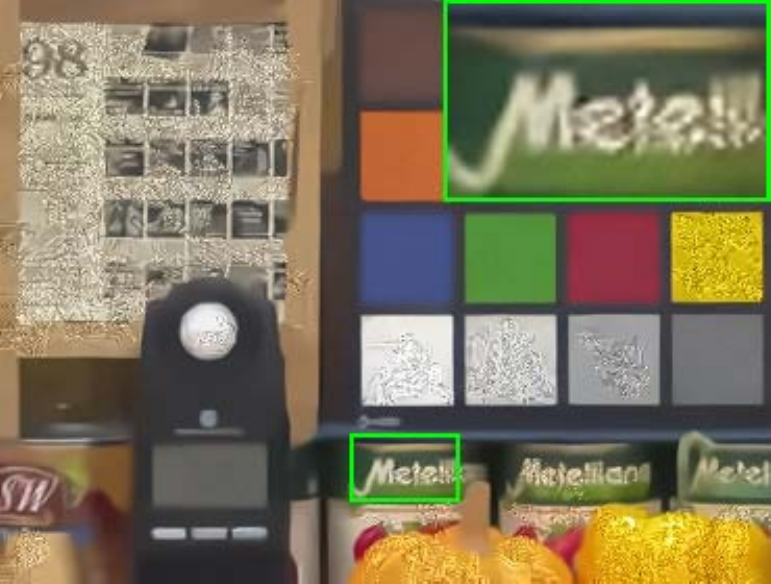}&
		\includegraphics[height=0.145\textwidth]{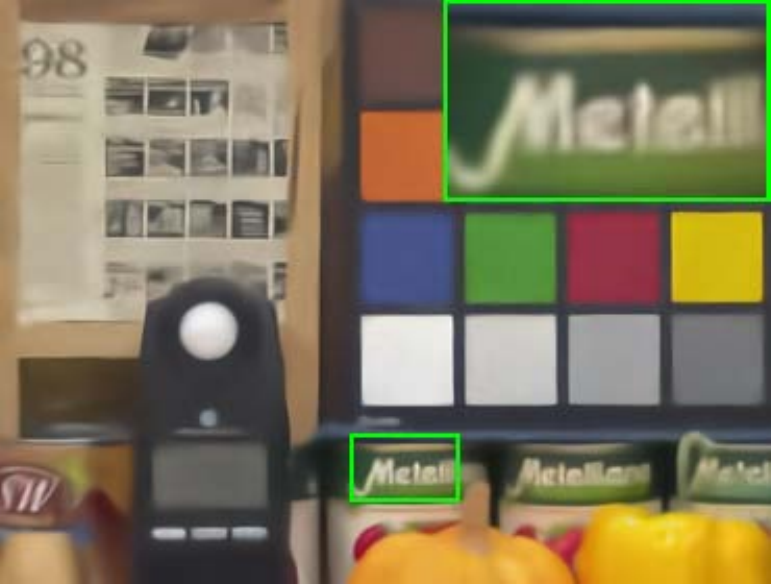}&
		\includegraphics[height=0.145\textwidth]{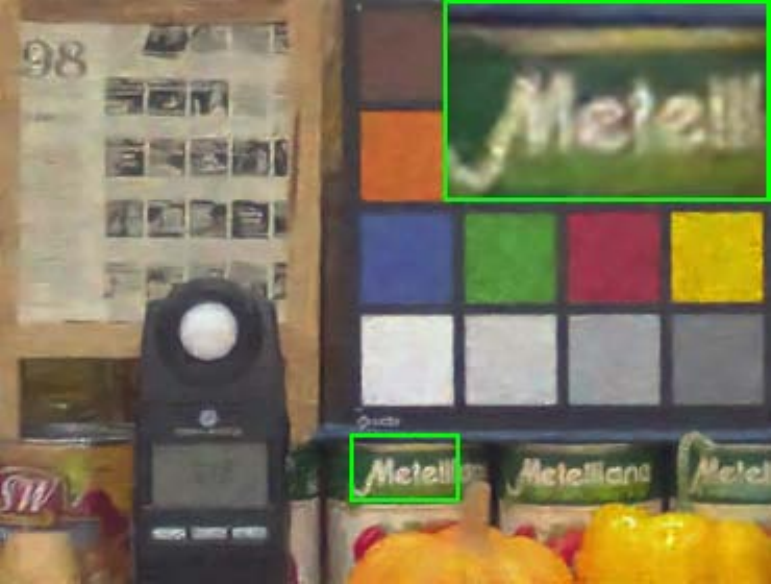}&
		\includegraphics[height=0.145\textwidth]{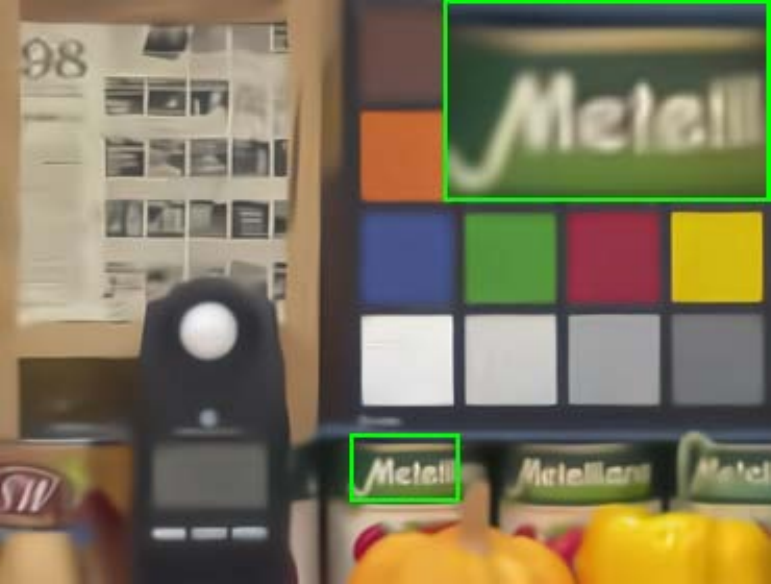}\\
		\footnotesize PP-ADMM &\footnotesize RED-ADMM  &\footnotesize DnCNN & \footnotesize CBDNet & \footnotesize Ours \\
	\end{tabular}
	\caption{Comparisons with the state-of-the-art methods on real-world noisy image.}
	\label{fig:DenoiseReal}
\end{figure*}

\begin{figure*}[t]
	%	\vskip 0.2in
	\centering
	\begin{tabular}{c@{\extracolsep{0.2em}}c@{\extracolsep{0.2em}}c@{\extracolsep{0.2em}}c@{\extracolsep{0.2em}}c@{\extracolsep{0.2em}}c}
		\includegraphics[height=0.24\textwidth]{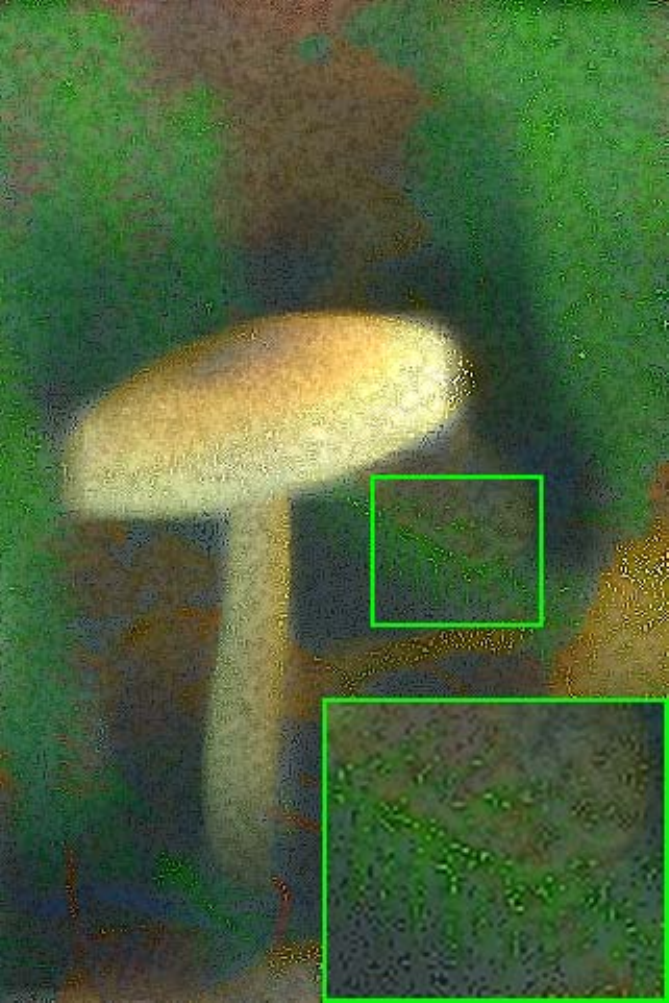}&
		\includegraphics[height=0.24\textwidth]{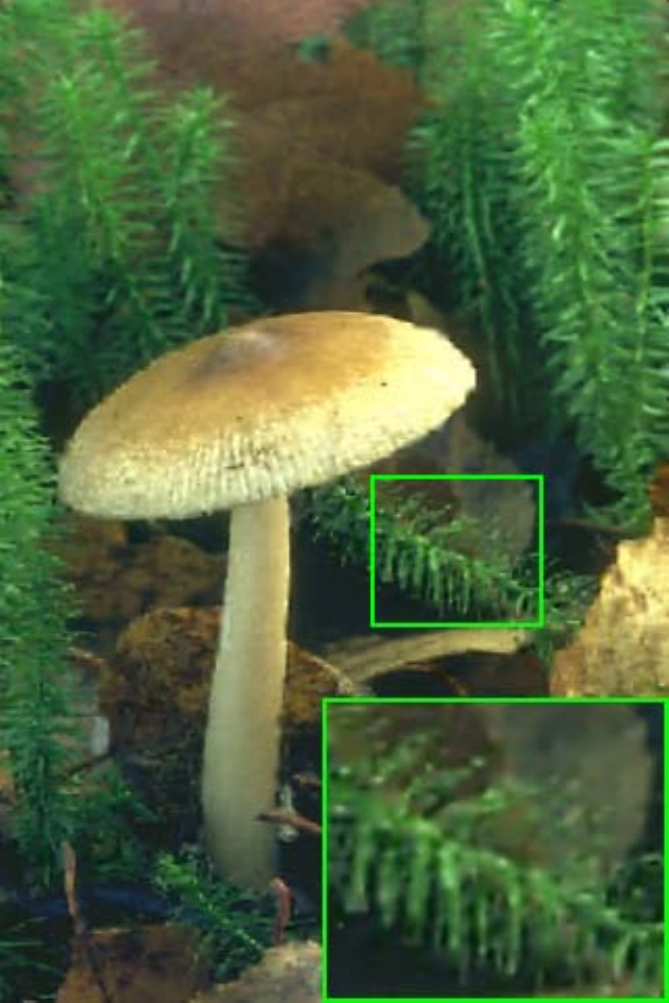}&
		\includegraphics[height=0.24\textwidth]{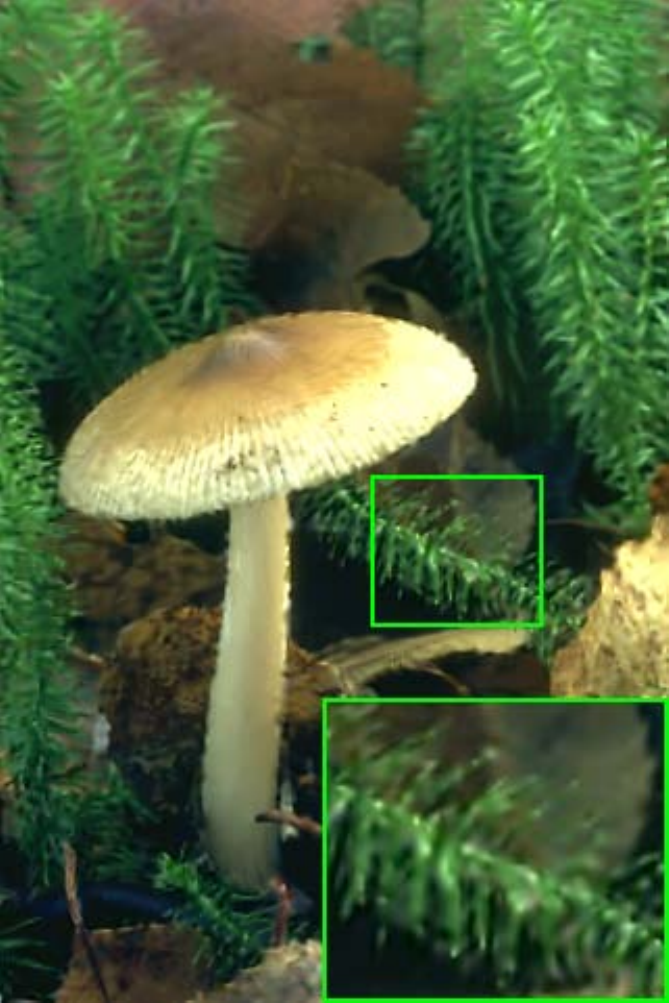}&
		\includegraphics[height=0.24\textwidth]{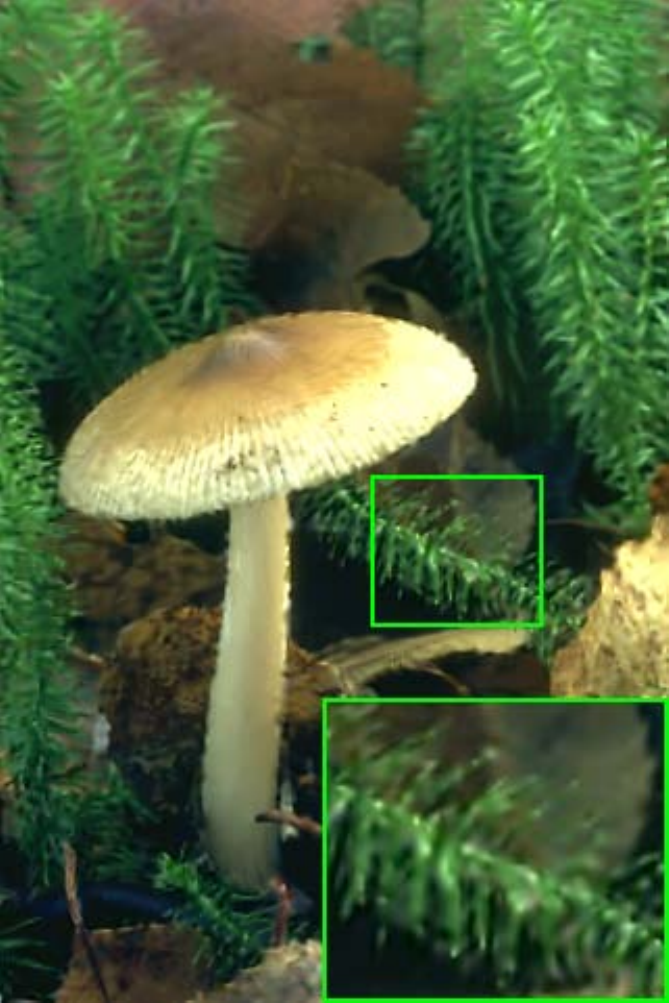}&
		\includegraphics[height=0.24\textwidth]{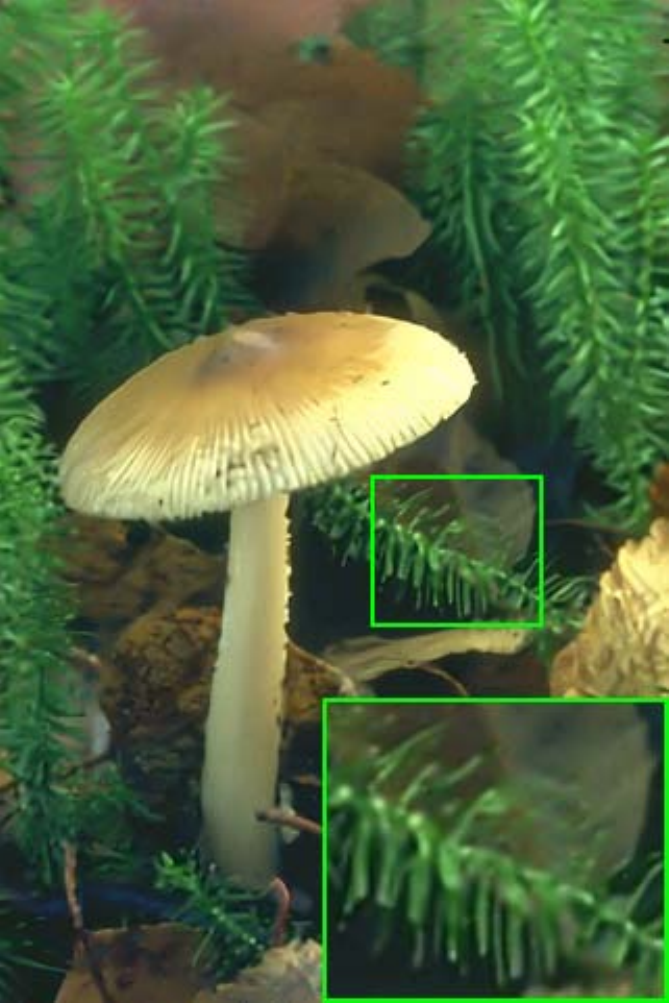}&
		\includegraphics[height=0.24\textwidth]{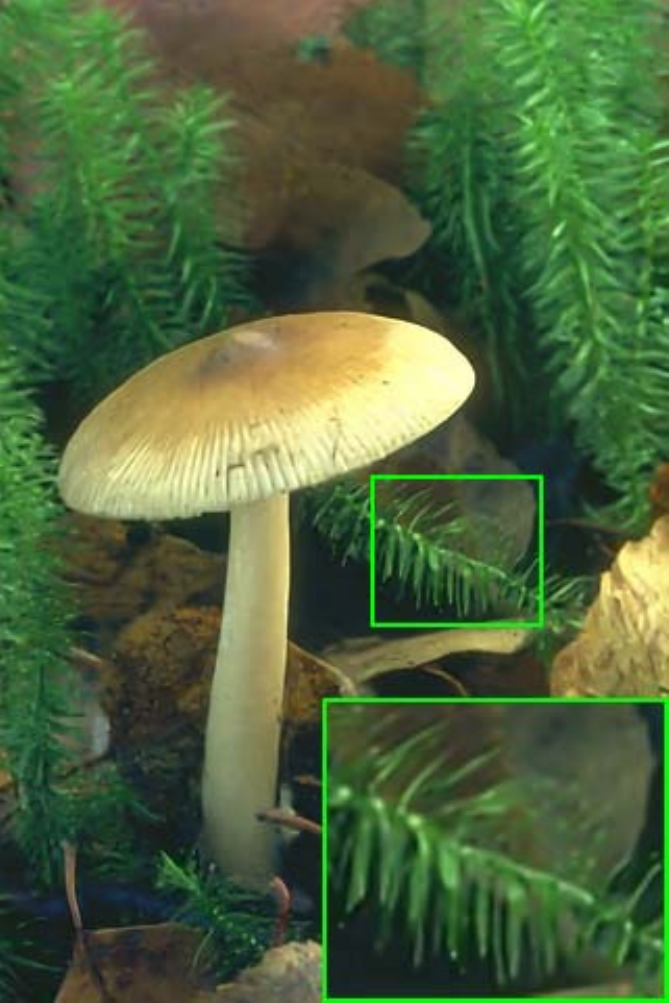}\\
		\footnotesize 19.86 / 0.32 &\footnotesize 28.77 / 0..82 &\footnotesize 26.69 / 0.82 &\footnotesize 28.67 / 0.80 &\footnotesize  28.89 / 0.82 &\footnotesize 29.54 / 0.84 \\
		\footnotesize ISDSB &\footnotesize FoE &\footnotesize 
		WNNM &\footnotesize IRCNN&\footnotesize LBS&\footnotesize Ours \\
	\end{tabular}
	\caption{Comparisons with the state-of-the-art methods on image inpainting with PSNR and SSIM scores.}
	\label{fig:Inpainting}
	%	\vskip -0.2in
\end{figure*}

\begin{table}[t]
	\renewcommand\arraystretch{1.1}
	\centering
	\caption{Averaged image completion performance with three different noise levels of missing pixels and text mask on CBSD68 dataset~\cite{zhang2017beyond}. The first column is the proportion of missing pixels and the text mask. The first column is the comparison methods on inpainting.}
	\label{tab:inpainting}
	\vspace{-0.2em}
	\setlength{\tabcolsep}{0.8mm}{
		\begin{center}
			\begin{small}
				\begin{tabular}{|p{1cm}<{\centering}|p{0.8cm}<{\centering}|p{0.9cm}<{\centering}|p{1.2cm}<{\centering}|p{1.2cm}<{\centering}|p{0.8cm}<{\centering}|p{1.5cm}<{\centering}|}%{c|cccccc}%
					\hline%\Xhline{1pt}
					Mask & FoE & ISDSB & WNNM & IRCNN &LBS& Ours\\
					\hline%\Xhline{1pt}
					\hline
					\multirow{2}{*}{$40\%$} & 34.01 & 31.32&31.75 & 34.92 & 34.54&\textbf{34.96}\\
					\cline{2-7} 
					&0.90  &0.91 & 0.94& 0.95& 0.95&\textbf{0.98}\\
					\hline
					\hline
					\multirow{2}{*}{$60\%$} & 30.81 &28.23 &28.71 &31.45 &31.27&\textbf{31.56}\\
					\cline{2-7} 
					&0.81  &0.83 & 0.89& \textbf{0.91}& 0.90&\textbf{0.91}\\
					\hline
					\hline
					\multirow{2}{*}{$80\%$} &27.64& 24.92 &25.63 &26.44 & 27.71&\textbf{27.89}\\
					\cline{2-7} 
					&0.65  &0.70 & 0.78& 0.79& 0.80&\textbf{0.81}\\
					\hline
					\hline
					\multirow{2}{*}{Text} & 37.05 &34.91 & 34.89&37.26& 36.88 &\textbf{37.47}\\
					\cline{2-7} 
					&0.95  &0.96 & 0.97& 0.97& 0.94&\textbf{0.98}\\
					\hline%\Xhline{1pt}
				\end{tabular}
			\end{small}
		\end{center}
	}
\end{table}

\begin{figure*}[t]
	\centering
	\begin{tabular}{c@{\extracolsep{0.2em}}c@{\extracolsep{0.2em}}c@{\extracolsep{0.2em}}c@{\extracolsep{0.2em}}c@{\extracolsep{0.2em}}c}
		\includegraphics[height=0.158\textwidth]{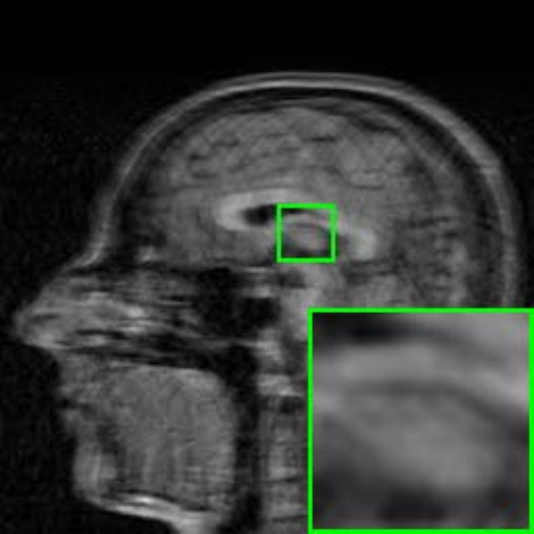}&
		\includegraphics[height=0.158\textwidth]{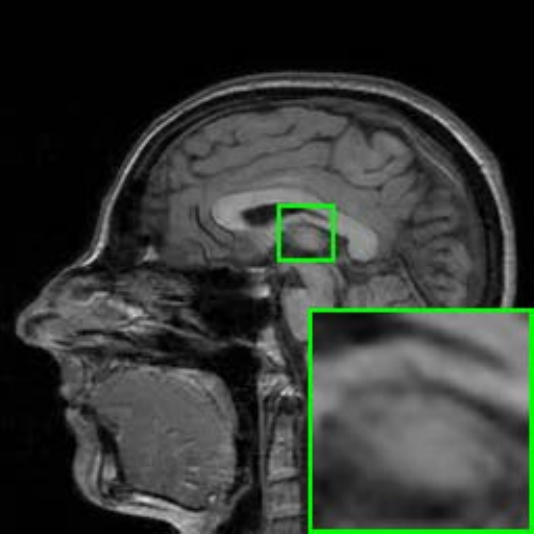}&
		\includegraphics[height=0.158\textwidth]{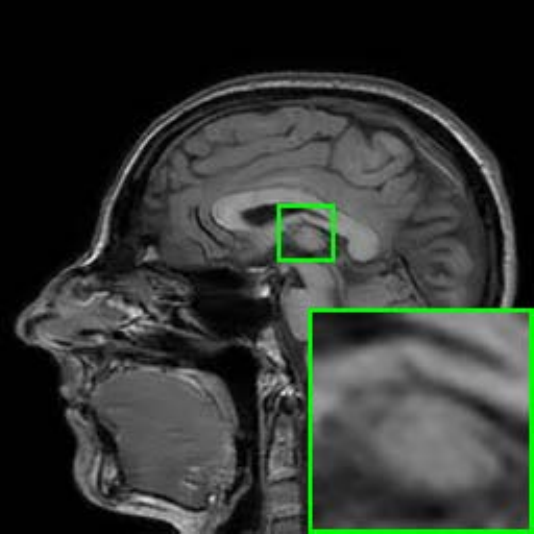}&
		\includegraphics[height=0.158\textwidth]{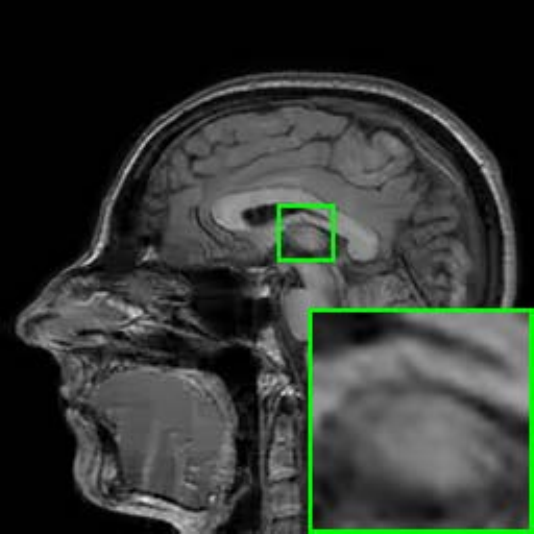}&
		\includegraphics[height=0.158\textwidth]{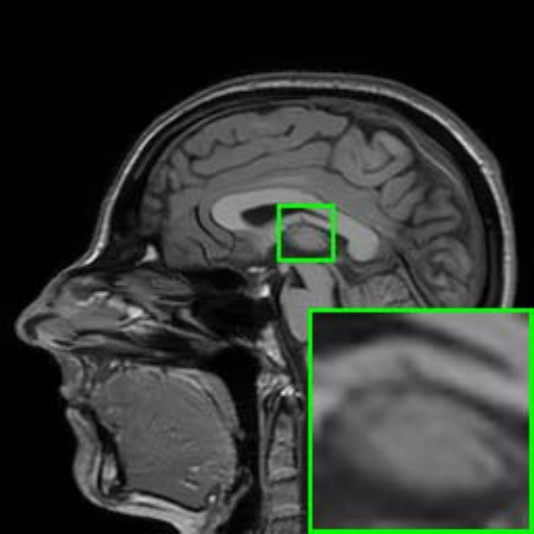}&
		\includegraphics[height=0.158\textwidth]{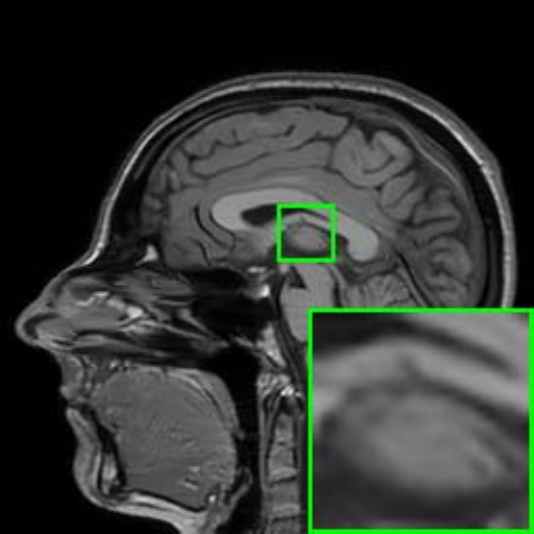}\\
		\footnotesize Input  &\footnotesize PANO (29.75) & \footnotesize FDLCP (29.99) &\footnotesize BM3D-MRI (28.97) 
		&\footnotesize TGDOF (32.06) &\footnotesize Ours (32.30)\\
	\end{tabular}
	\caption{Comparisons with the state-of-the-art methods on compressed sensing MRI with PSNR scores.}
	\label{fig:MRI}
\end{figure*}

\textbf{Image Inpainting.}
In image inpainting task, the matrix $\mathbf{Q}$ denotes mask, and $\mathbf{b}$ represents the missing pixels image. Then, we conducted experiments on image inpainting to recover missing pixels from observation. Here we compared our GO-ADMM with FOE~\cite{roth2009fields}, ISDSB~\cite{he2014iterative}, WNNM~\cite{li2009markov}, IRCNN~\cite{zhang2017learning}, and LBS~\cite{liu2018toward} on this task. We generated random masks of different levels (i.e., $40\%$, $60\%$, and $80\%$) and missing pixels on the CBSD68 dataset~\cite{zhang2017beyond}. Further, we use 12 different text masks to evaluate the developed approach. The comparison results are listed in Tab.~\ref{tab:inpainting} with averaged quantitative results (PSNR and SSIM scores). Regardless of the proportion of masks, our GO-ADMM can achieve better performance when compared with the other methods. Furthermore, the visual performance of $80\%$ missing pixels are shown in Fig.~\ref{fig:Inpainting}. It can be seen that the proposed method outperformed all the compared methods on both visualization and metrics (PSNR and SSIM).

\begin{table}[t]
	\renewcommand\arraystretch{1.1}
	\centering
	\caption{Comparison with different approaches on the compressed sensing MRI problem in three kind patterns at a unified sampling ratio of 30\%.}
	\label{tab:MRI}
	\vspace{-0.2em}
	\small
	\begin{center}
		\begin{small}
			\begin{tabular}{|p{1.65cm}<{\centering}|p{0.65cm}<{\centering}|p{0.75cm}<{\centering}|p{0.65cm}<{\centering}|p{0.75cm}<{\centering}|p{0.65cm}<{\centering}|p{0.75cm}<{\centering}|}
				\hline
				\multirow{2}{*}{Methods } & \multicolumn{2}{c|}{Cartesian}  & \multicolumn{2}{c|}{Radial} & \multicolumn{2}{c|}{Gaussian}\\
				\cline{2-7}
				&  PSNR & RLNE &  PSNR & RLNE &  PSNR & RLNE\\
				\hline%\Xhline{1pt}
				\hline
				ZeroFilling  & 23.95& 0.2338& 27.66  & 0.1524 &26.80& 0.1683 \\
				\hline
				TV & 26.30 & 0.1790 & 31.11 & 0.1032& 32.45 & 0.0885\\
				\hline
				SIDWT &25.77 & 0.1896 & 31.44 & 0.0994 & 32.49 & 0.0880\\
				\hline
				PANO  & 29.73 & 0.1206 & 33.49  & 0.0786 & 36.98 & 0.0527 \\
				\hline
				FDLCP	& 29.65 & 0.1218 & 34.31  & 0.0713 & 38.26 & 0.0452 \\
				\hline
				ADMMNet	 & 27.72 & 0.1520 & 33.31  & 0.0802 & 36.56 & 0.0550 \\
				\hline
				BM3D-MRI & 28.86 &	0.1330 & 34.59 & 0.0692 & 39.16 & 0.0410\\
				\hline
				TGDOF & 31.30 &	0.1007 & 35.47 & 0.0625  &	39.75 & 0.0382\\
				\hline
				Ours & \textbf{31.33} & \textbf{0.1003} & \textbf{35.51} & \textbf{0.0622}  &	\textbf{39.93} & \textbf{0.0374}\\
				\hline
			\end{tabular}
		\end{small}
	\end{center}
\end{table}

\begin{figure*}[t]
	\centering
	\begin{tabular}{c@{\extracolsep{0.2em}}c@{\extracolsep{0.2em}}c@{\extracolsep{0.2em}}c@{\extracolsep{0.2em}}c@{\extracolsep{0.2em}}c}
		\includegraphics[height=0.175\textwidth]{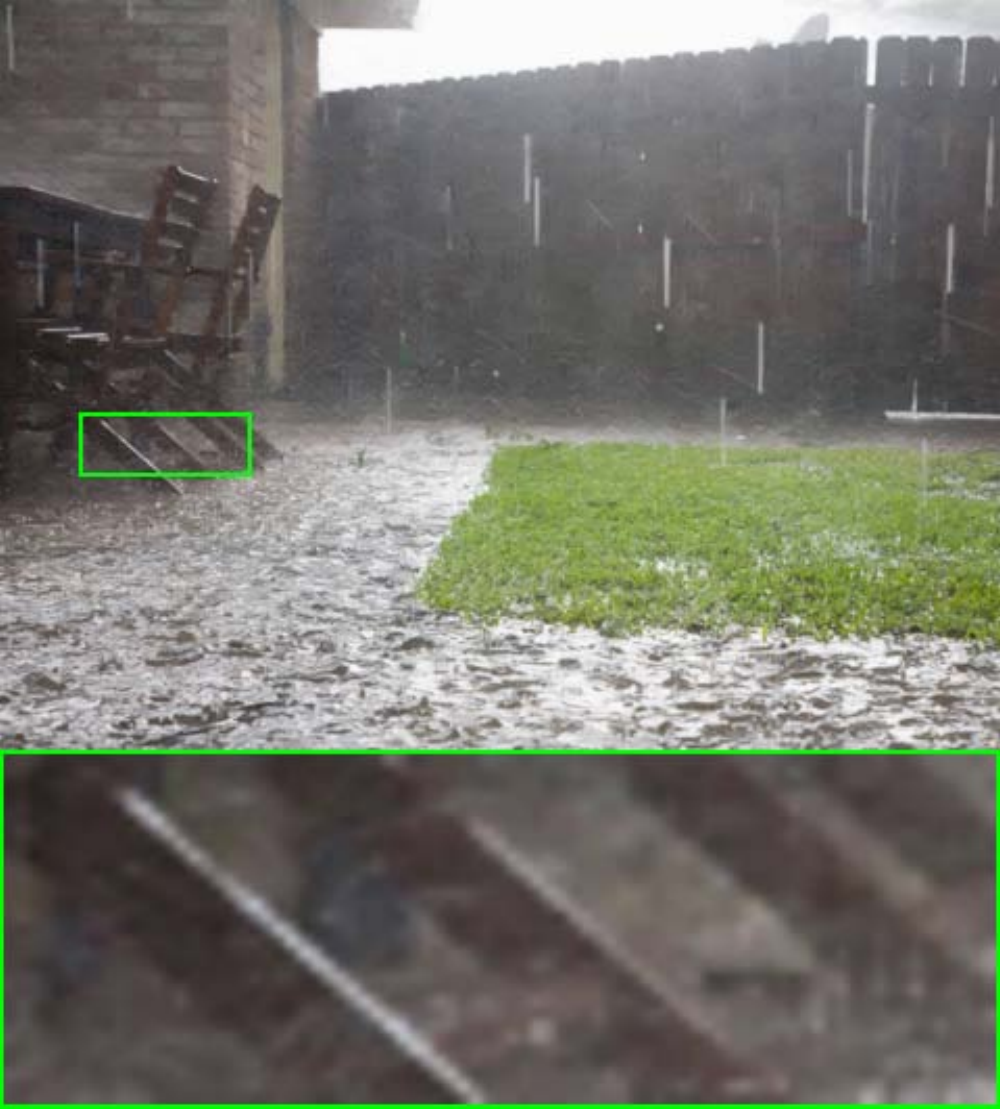}&
		\includegraphics[height=0.175\textwidth]{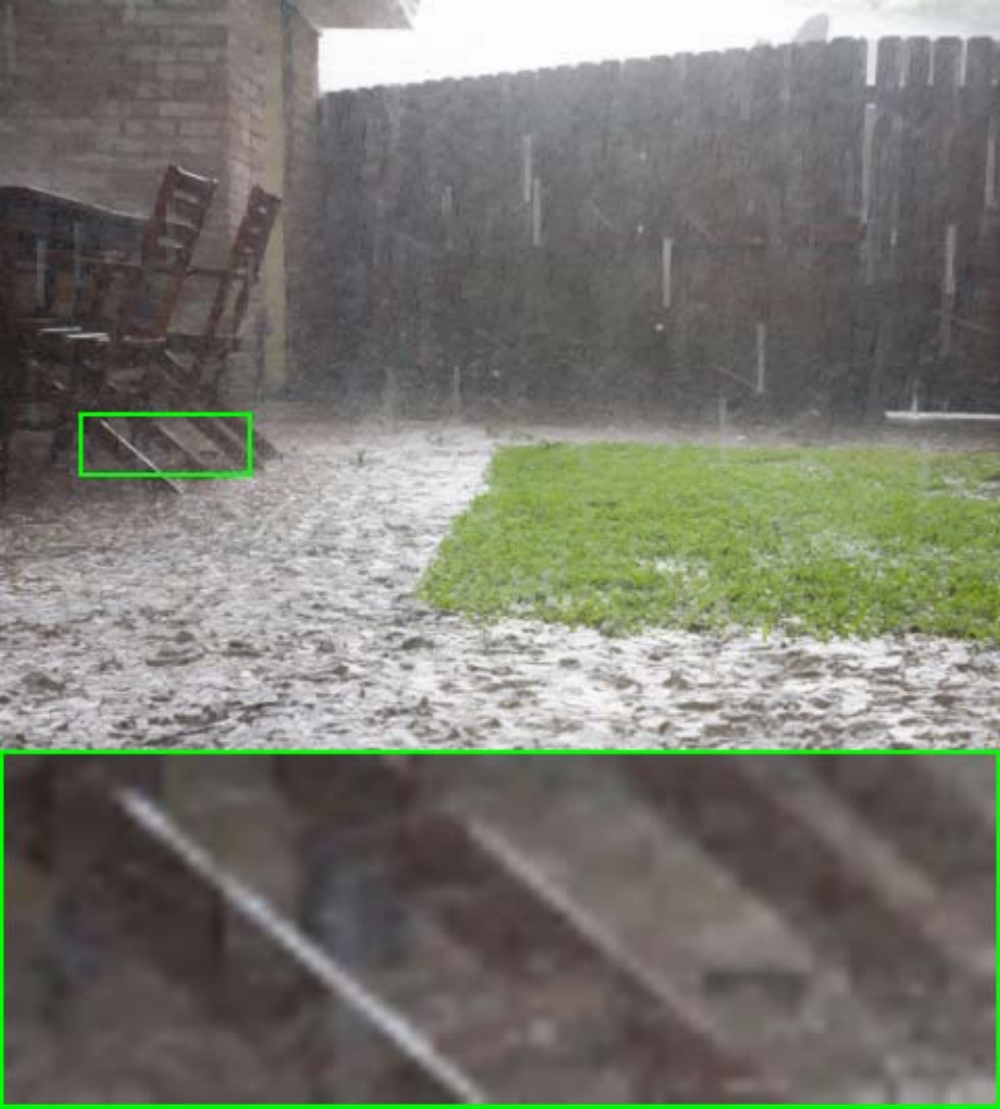}&
		\includegraphics[height=0.175\textwidth]{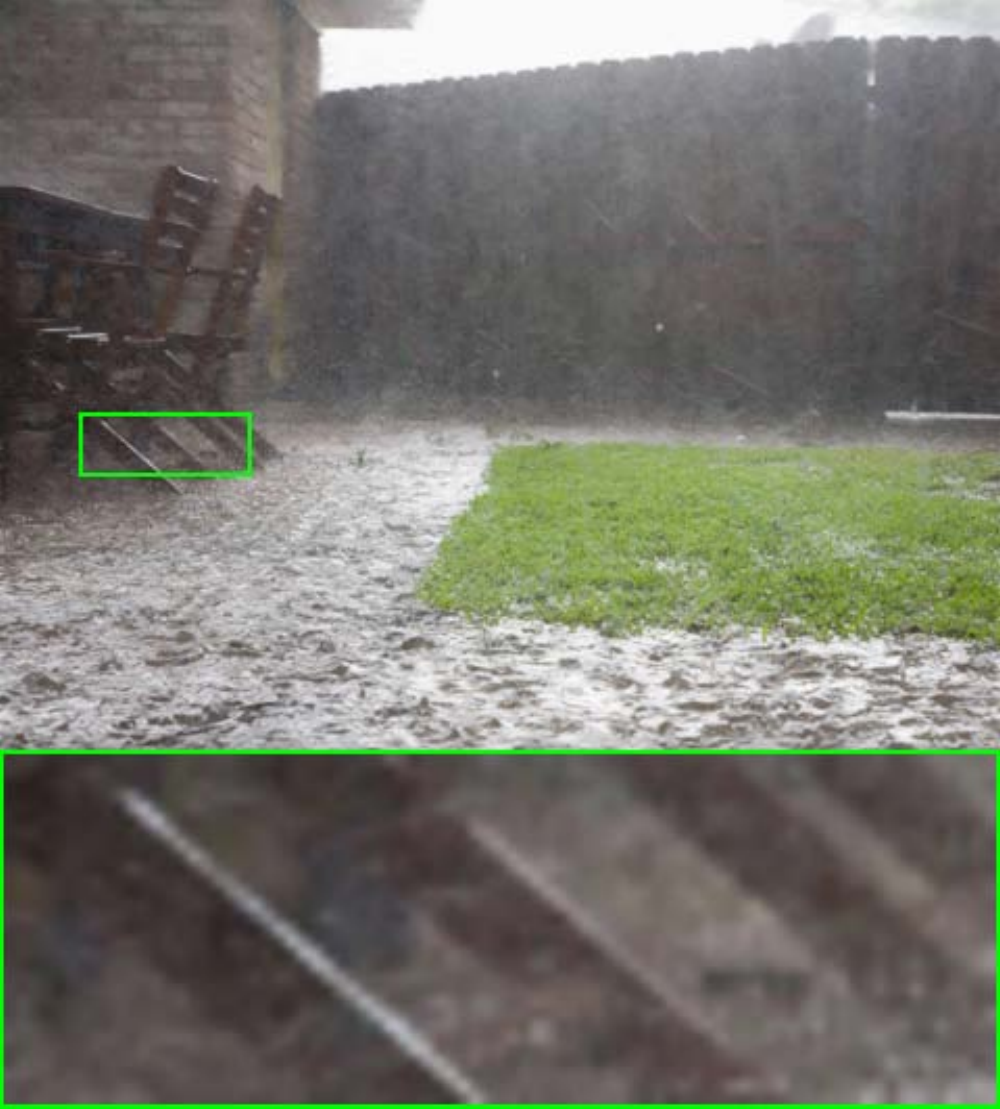}&
		\includegraphics[height=0.175\textwidth]{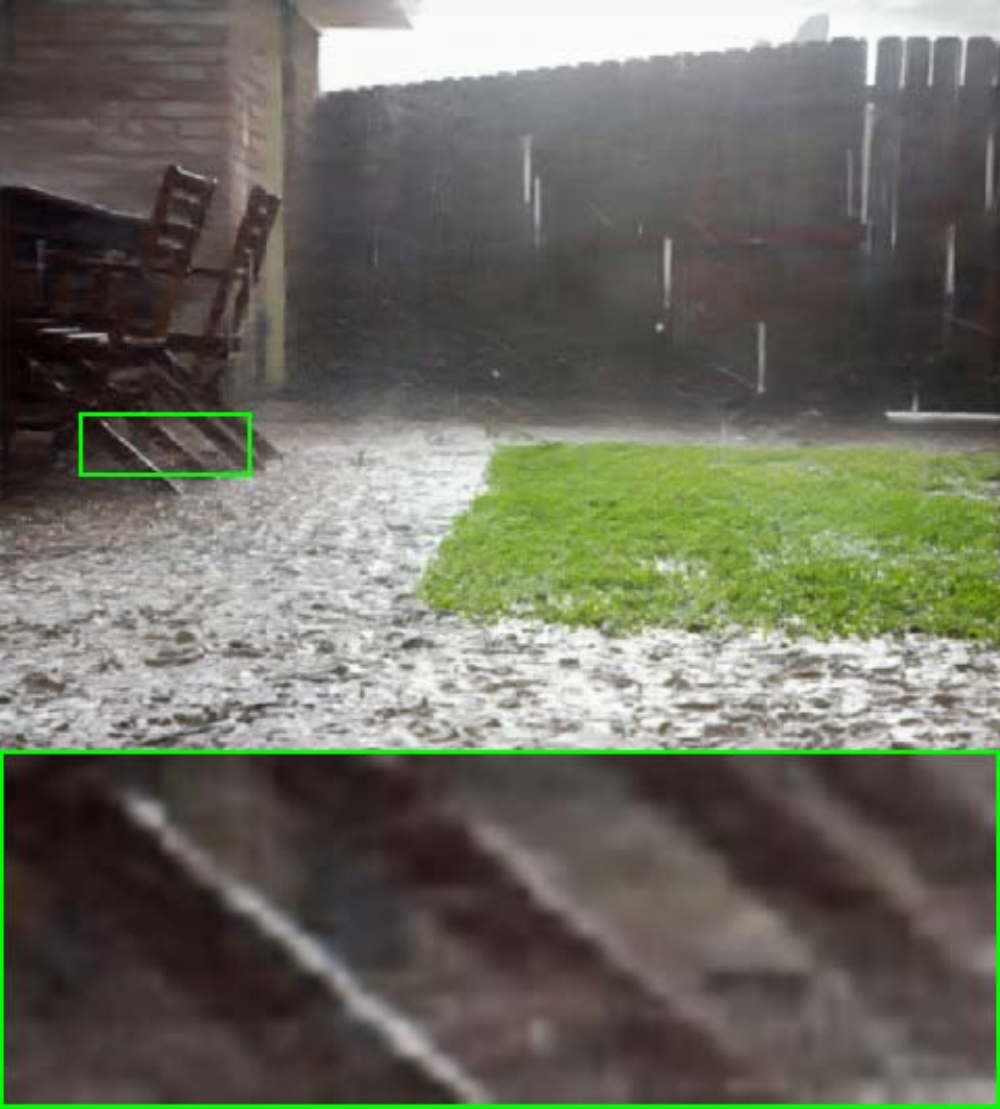}&
		\includegraphics[height=0.175\textwidth]{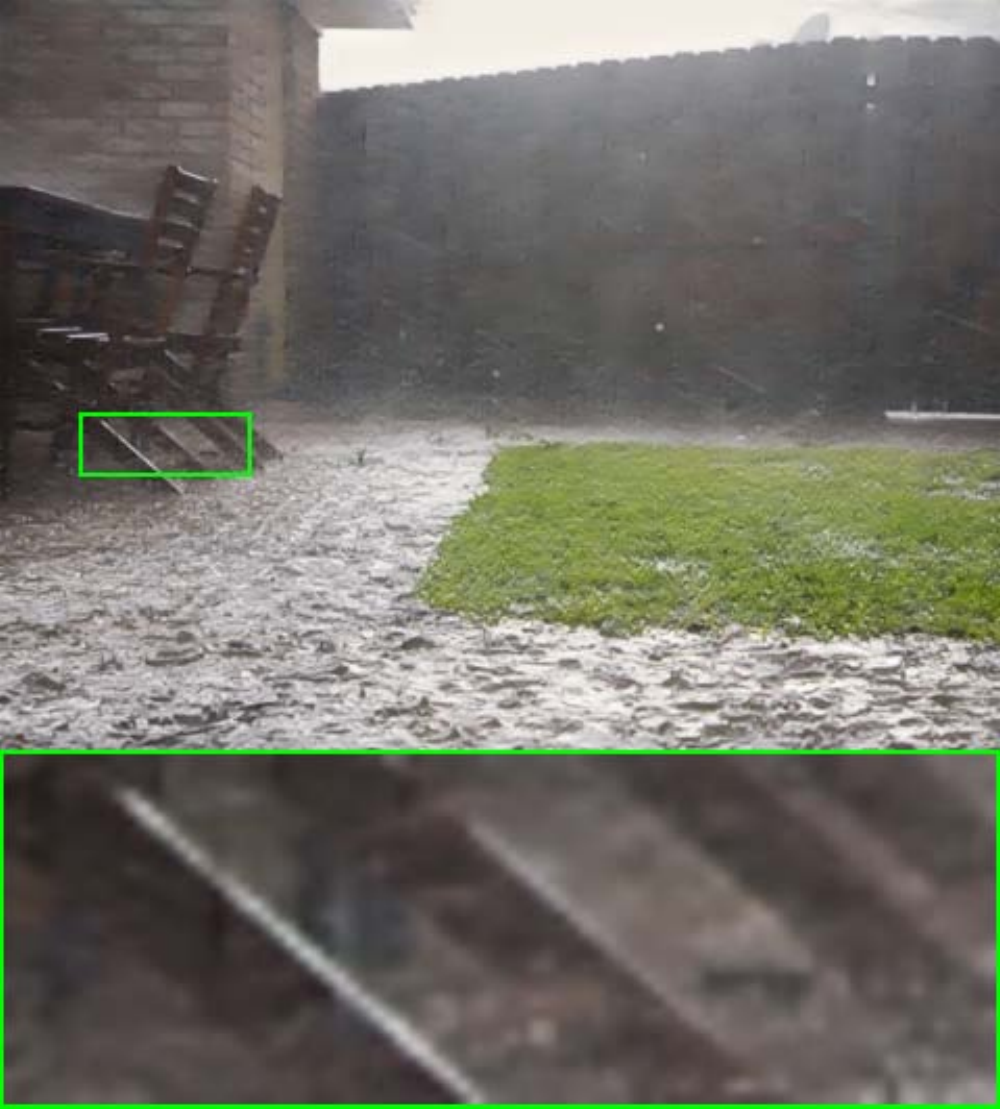}&
		\includegraphics[height=0.175\textwidth]{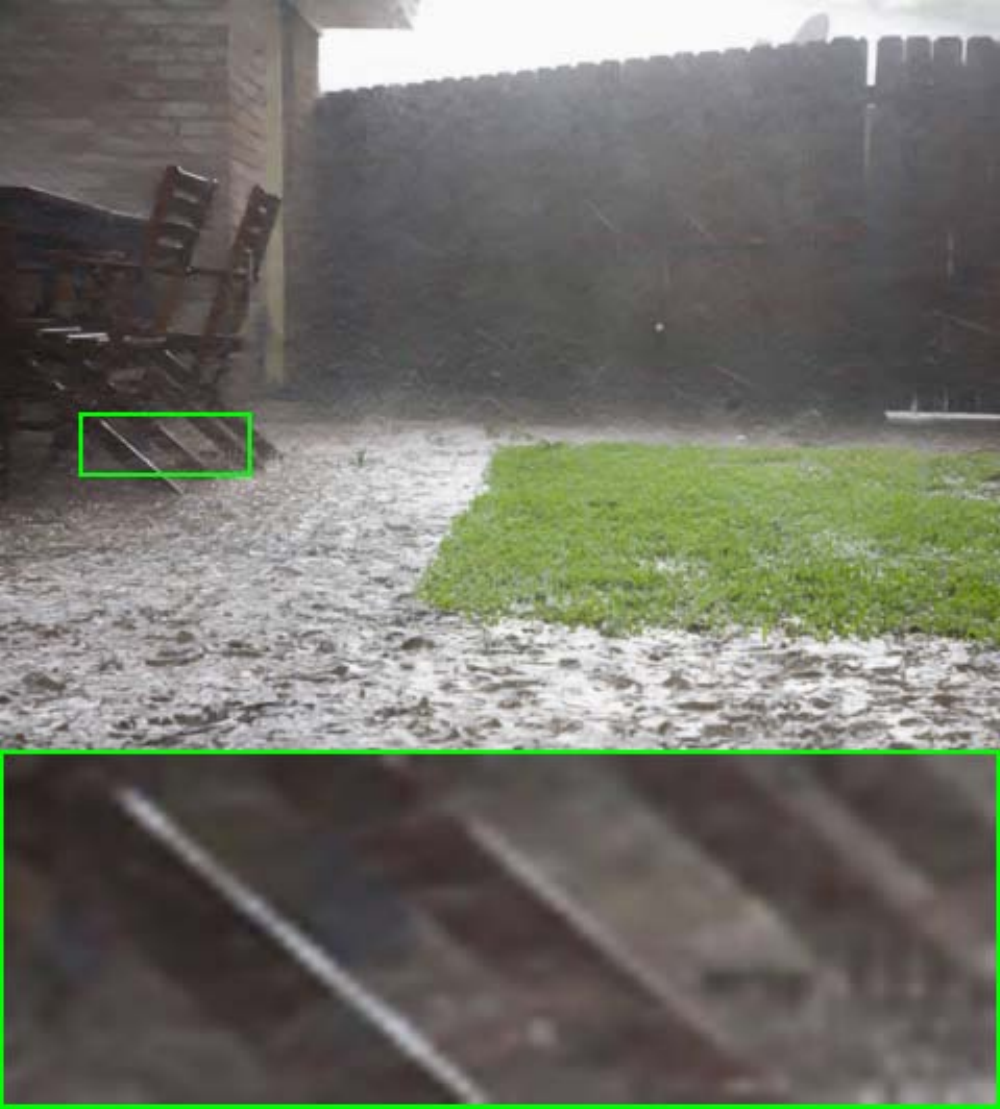}\\
		\footnotesize DDN &\footnotesize UGSM & \footnotesize JORDER&\footnotesize DID-MDN & \footnotesize PReNet  &\footnotesize Ours
	\end{tabular}
	\caption{Comparisons with the state-of-the-art methods on real-world rain streaks removal.}
	\label{fig:DerainReal}
\end{figure*}

\textbf{Compressed Sensing MRI.}
In compressed sensing Magnetic Resonance Imaging (MRI) task, we define $\mathbf{Q}$ as under-sampling matrix $\mathbf{P}$ and Fourier transformation $\mathbf{F}$~\cite{sun2016deep}, i.e., $\mathbf{Q}\mathbf{x}=\mathbf{PFx}$. We randomly choose 25 $T_1$-weighted MRI data from 50 different subjects in IXI datasets~\footnote{http://brain-development.org/ixi-dataset/} as the testing data for our comparison. Then we adopt three types of sampling masks, i.e., Cartesian pattern~\cite{qu2012undersampled}, Radial pattern~\cite{sun2016deep}, and Gaussian mask~\cite{yang2018dagan} on selected $T_1$-weighted dataset. We compare our GO-ADMM with ZeroFilling~\cite{bernstein2001effect}, TV~\cite{lustig2007sparse}, SIDWT~\cite{baraniuk2007compressive}, PANO~\cite{qu2014magnetic}, FDLCP~\cite{zhan2016fast}, ADMMNet~\cite{sun2016deep}, BM3D-MRI~\cite{eksioglu2016decoupled}, and TGDOF~\cite{liu2019converged}, and the comparison results are shown in Tab.~\ref{tab:MRI}. Observed that, our paradigm shows great superiority in reconstruction accuracy and has a better ability to accommodate sampling patterns. We then provide the visual comparisons in Fig.~\ref{fig:MRI} for three methods with relatively high PSNR and RLNE scores (i.e., PANO, FDLCP, and TGDOF). Consistently, our GO-ADMM achieves the best performance in terms of both the restoration of detail and the PSNR scores.

\begin{table}[t]
	\renewcommand\arraystretch{1.1}
	\centering
	\caption{Averaged PSNR and SSIM results among different rain streaks removal methdos on three different rain streaks synthesized form.}
	\vspace{-0.2em}
	\small
	\begin{tabular}{|p{1.52cm}<{\centering}|p{0.7cm}<{\centering}|p{0.7cm}<{\centering}|p{0.7cm}<{\centering}|p{0.7cm}<{\centering}|p{0.7cm}<{\centering}|p{0.7cm}<{\centering}|}
		\hline
		\multirow{2}*{Methods} &\multicolumn{2}{c|}{Rain100L} &\multicolumn{2}{c|}{Rain1400} %Fu \emph{et al.}'
		&\multicolumn{2}{c|}{Rain100H}\\
		\cline{2-7}
		& PSNR & SSIM & PSNR & SSIM & PSNR & SSIM\\
		\hline\hline
		DN    & 27.27 & 0.8746 & 25.51 & 0.8885 & 13.72 & 0.4417\\
		\hline
		DDN   & 29.73 & 0.9177 & 29.90 & 0.8999 & 17.93 & 0.5655\\
		\hline
		UGSM  & 28.59 & 0.8772 & 26.38 & 0.8261 & 14.90 & 0.4674\\
		\hline
		JORDER &36.61 & 0.9722 & 27.50 & 0.8515 & 23.45 & 0.7490\\
		\hline
		DID-MDN & 25.45 & 0.8550 & 27.94 & 0.8696 & 17.28 & 0.6035\\
		\hline
		PReNet & 37.63 & 0.9792 & 32.32 & 0.9320 & 29.46 & 0.8990 \\
		\hline%\hline
		Ours & 37.64 & 0.9792 & 32.57 & 0.9319 & 29.50 & 0.8990\\
		\hline
		Ours (w) & \textbf{37.76} &\textbf{0.9798} & \textbf{32.64 }& \textbf{0.9320} & \textbf{29.51} & \textbf{0.8991}\\
		\hline
	\end{tabular}%}
	\label{tab:derain}
\end{table}

\textbf{Rain Streaks Removal.} To evaluate the performance of our method, we use both synthetic test data and real-world images to compare our approach with state-of-the-art approaches (including DerainNet (DN)~\cite{fu2017clearing}, Deep Detailed Network (DDN)~\cite{fu2017removing}, JORDER~\cite{yang2017deep}, UGSM~\cite{Jiang2018FastDeRain}, DID-MDN~\cite{zhang2018density}, and PReNet~\cite{ren2019progressive}) removing rain from single images. For measuring the performance quantitatively, we employ PSNR, SSIM as the metrics. All the comparisons shown in this paper are conducted under the same hardware configuration.

Tab.~\ref{tab:derain} shows the results of different methods on  Rain100L, Rain100H~\cite{martin2001database}, and Rain1400~\cite{fu2017removing} datasets. As observed, our method considerably outperforms others in terms of both PSNR and SSIM. Fig.~\ref{fig:DerainReal} compares the visual performance of GO-ADMM to the first five scores methods listed in Tab.~\ref{tab:derain} on real-world challenging rainy image. Observed that DDN tends to retain excessive rain streaks while GMM tends to keep rain streaks for images with over-smooth background details. Qualitatively, the proposed GO-ADMM achieves the best visual results in terms of effectively removing the rain streaks while preserving the scene details. 
We further evaluate the performance with designed $\mathbf{W}$. Here, $\mathbf{W}$ are selected as rain streaks mask measured by CNN networks~\cite{zhang2017learning}. We also listed the GO-ADMM results with designed $\mathbf{W}$ (i.e., GO-ADMM (w)) in Tab.~\ref{tab:derain}.

\section{Conclusions}\label{sec:conclusion}

In this paper, we proposed a collaborative learning scheme with the task-specific flexible module for specific optimization problems to solve vision and learning tasks. We provided strict theoretical analysis for the proposed GO-ADMM by introducing an error based criterion condition to measure the inexactness of the inner iterations. Further, the experimental results verified that GO-ADMM can even obtain better performance against most other state-of-the-art approaches.

\appendices

\section{Proofs}

We first provide some preparations. 
Since we just adopt standard computation strategy in Eqs.~\eqref{eq:y-iter}-\eqref{eq:lambda-iter} to update $\mathbf{y}$- and $\bm{\lambda }$-subproblems, the first-order optimality condition for sequence $\{\bm{w}^{k}\}$ generated by our GO-ADMM can be summarized as
\begin{equation}\label{eq:optimality}
\left\{
\begin{aligned}
& \nabla_{\mathbf{x}} {\mathcal{L}}_{\beta}( \mathbf{x}^{k+1},\mathbf{y}^k,\bm{\lambda }^k )+\bar{\mathbf{W}}(\mathbf{x}^{k+1}-\mathbf{x}^k)=\mathbf{Q}^\top \mathbf{e}^k (\hat{\mathbf{x}}^{k+1}) ,\\
& g(\mathbf{y})-g (\mathbf{y}^{k+1}) + (\mathbf{y}-\mathbf{y}^{k+1})^\top (-\mathbf{B}^\top \bm{\lambda }^k\\
&+\beta \mathbf{B}^\top (\mathbf{A}\mathbf{x}^{k+1}+\mathbf{B}\mathbf{y}^{k+1}-\mathbf{c}))\geq 0, \forall \mathbf{y}\in {\mathbb{R}}^m,\\
&\bm{\lambda }^{k+1}=\bm{\lambda }^{k}-\beta(\mathbf{A}\mathbf{x}^{k+1}+\mathbf{B}\mathbf{y}^{k+1}-\mathbf{c}).
\end{aligned}
\right.
\end{equation}
For convenience, we denote $\Lambda^{i,j}:=\|\bm{w}^i-\bm{w}^{j}\|_{\mathbf{M}}^2$,  $\Lambda^{i,j}_{1/2}=\|\bm{w}^i-\bm{w}^{j}\|_{\mathbf{M}}$ and $\Lambda^{j}:=\|\bm{w}-\bm{w}^{j}\|_{\mathbf{M}}^2$.

\subsection{Proof of the Proposition~\ref{prop:ek_relation}}% Proposition 1.

\begin{proof}
	Recall that the $\mathbf{y}$- and $\bm{\lambda }$-subproblems are assumed to be solved exactly in the GO-ADMM. We thus know that $\mathbf{B}^\top \bm{\lambda }^{k-1}\in \partial g(\mathbf{y}^{k-1}),  \mathbf{B}^\top \bm{\lambda }^{k}\in \partial g(\mathbf{y}^{k}),$ 
	and thus
	\begin{equation}\label{Basic_equation_00}
	\begin{array}{l}
	( \mathbf{y}^{k-1} - \mathbf{y}^{k} )^\top  \mathbf{B}^\top  ( \bm{\lambda }^{k-1} - \bm{\lambda }^{k} )\geq 0.
	\end{array}
	\end{equation}
	Hence, it is easily derived that
	\begin{equation}\label{Basic_equation_0}
	\begin{array}{l}
	\quad\| -\sqrt{\beta} \mathbf{B}(\mathbf{y}^{k-1}-\mathbf{y}^k)+\frac{1}{\sqrt{\beta}}(\bm{\lambda }^{k-1}-\bm{\lambda }^k) \|_2^2 \\
	\leq\frac{1}{\beta}\| \bm{\lambda }^{k-1} - \bm{\lambda }^{k} \|_2^2 + \beta  \| \mathbf{B} ( \mathbf{y}^{k-1} - \mathbf{y}^{k} )\|_2^2.
	\end{array}
	\end{equation}
	With the definition of $\mathbf{e}_k$ in Eq.~\eqref{eq:ek}, we have
	\begin{equation}\label{new-ek}
	\begin{array}{l}
	\| \mathbf{e}_k (\hat{\mathbf{x}}^{k+1}) \|_2
	\le  \eta \| \mathbf{e}_{k-1}(\hat{\mathbf{x}}^{k}) \|_2 + \eta \| \mathbf{e}_{k-1}(\hat{\mathbf{x}}^{k})-\mathbf{e}_{k}(\hat{\mathbf{x}}^{k})\|_2 \nonumber\\
	\overset{\eqref{eq:fLipschitz}}{\le} 
	\eta \| \mathbf{e}_{k-1} (\hat{\mathbf{x}}^{k})  \|_2 + \eta L \| \mathbf{Q}(\bar{\mathbf{W}}+\beta \mathbf{A}^\top \mathbf{A})^{-1}(\mathbf{s}^{k-1}-\mathbf{s}^k)\|_2 \nonumber\\
	\le \eta \| \mathbf{e}_{k-1} (\hat{\mathbf{x}}^{k})   \|_2\nonumber\\
	+\eta L \cdot \left\| {\mathcal{N}}\begin{pmatrix} \mathbf{W}(\mathbf{x}^{k-1}-\mathbf{x}^k)\\ -\sqrt{\beta} \mathbf{B}(\mathbf{y}^{k-1}-\mathbf{y}^k)+\frac{1}{\sqrt{\beta}}(\bm{\lambda }^{k-1}-\bm{\lambda }^k) \end{pmatrix}\right\|_2\nonumber\\
	\overset{(\ref{Basic_equation_0})}{\le}  \eta \| \mathbf{e}_{k-1} (\hat{\mathbf{x}}^{k})   \|_2 + \eta L \| \mathcal{N}\|_2 \Lambda^{k-1,k}_{1/2},
	\end{array}
	\end{equation}
	where $\bar{\mathbf{W}}=\mathbf{W}^\top\mathbf{W}$ and $\Lambda^{k-1,k}_{1/2}= \| \bm{w}^{k-1} - \bm{w}^k \|_{\mathbf{M}}$. 
	Therefore, it follows that 
	\begin{equation*}
	\| \mathbf{e}_k (\hat{x}^{k+1})  \|_2 \le \eta \| \mathbf{e}_{k-1} (\hat{x}^{k})  \|_2 + \eta\gamma \| \bm{w}^{k-1} - \bm{w}^k \|_{\mathbf{M}},
	\end{equation*}
	with $\gamma = L \| {\cal{N}}\|_2$. This relationship will be often used in the coming analysis. This complete the proof.
\end{proof}

\subsection{Proof of the Proposition~\ref{prop:re_Var_Inequality}}

\begin{proof}
	First we rewrite $\nabla_{\mathbf{x}} {\cal{L}}_{\beta} ( \mathbf{x}^{k+1},\mathbf{y}^k,\bm{\lambda }^k )$ as
	$$
	\quad\nabla_{\mathbf{x}} {\cal{L}}_{\beta}( \mathbf{x}^{k+1}, \mathbf{y}^k, \bm{\lambda }^k ) 
	= \mathbf{Q}^\top  \left( \nabla l(\mathbf{Q} \mathbf{x}^{k+1}) \right) - \mathbf{A}^\top  \bar{\bm{\lambda }}^k.\nonumber
	$$
	Then for all $\bm{w} \in\mathbf{\Omega}$, the following holds	
	\begin{equation}
	\begin{array}{l}
	V(\mathbf{y},\bar{\mathbf{y}}^k,\bm{w},\bar{\bm{w}}^k)
	= ( \mathbf{x} - \mathbf{x}^{k+1} )^\top  (\mathbf{Q}^\top \nabla l(\mathbf{Q} \mathbf{x}^{k+1}) - \mathbf{A}^\top  \bar{\bm{\lambda }}^k ) \nonumber\\
	+g(\mathbf{y}) - g(\mathbf{y}^{k+1}) + ( \mathbf{y}- \mathbf{y}^{k+1} )^\top  ( - \mathbf{B}^\top  \bar{\bm{\lambda }}^k ) \nonumber\\
	+( \bm{\lambda } - \bar{\bm{\lambda }}^k ) (\mathbf{A} \mathbf{x}^{k+1} + \mathbf{B} \mathbf{y}^{k+1} - \mathbf{c})\nonumber\\		
	= ( \mathbf{x} - \mathbf{x}^{k+1} )^\top  \nabla_{\mathbf{x}} {\cal{L}}_{\beta} (    \mathbf{x}^{k+1},\mathbf{y}^k,\bm{\lambda }^k )+g(\mathbf{y}) - g(\mathbf{y}^{k+1}) \nonumber\\+
	( \mathbf{y}- \mathbf{y}^{k+1}) \left[ -\mathbf{B}^\top  \bm{\lambda }^k + \beta \mathbf{B}^\top  ( \mathbf{A} \mathbf{x}^{k+1} + \mathbf{B} \mathbf{y}^{k+1} - \mathbf{c} ) \right] \nonumber\\
	+ \beta ( \mathbf{y}- \mathbf{y}^{k+1} ) \mathbf{B}^\top  \mathbf{B} ( \mathbf{y}^{k} - \mathbf{y}^{k+1} )	\!+\!\frac{1}{\beta} ( \bm{\lambda } \!-\! \bar{\bm{\lambda }}^k )^\top  ( \bm{\lambda }^k \!-\! \bm{\lambda }^{k+1} ).
	\end{array}
	\end{equation}
	where $V(\mathbf{y},\bar{\mathbf{y}}^k,\bm{w},\bar{\bm{w}})=g(\mathbf{y})-g(\bar{\mathbf{y}}^k)+ ( \bm{w}-\bar{\bm{w}} )^\top  \mathbf{F}(\bar{\bm{w}})$. 
	Combining it with the optimality \eqref{eq:optimality}, the above inequality yields that
	\begin{equation}
	\begin{array}{l}
	\!\!V(\mathbf{y},\bar{\mathbf{y}}^k,\bm{w},\bar{\bm{w}}^k)\\
	\!\!\ge ( \mathbf{x} - \mathbf{x}^{k+1})^\top (\mathbf{Q}^\top \mathbf{e}_k(\hat{\mathbf{x}}^{k+1})-
	\bar{\mathbf{W}}(\mathbf{x}^{k+1}-\mathbf{x}^k)) \nonumber\\
	\!\!+ \beta ( \mathbf{y}\!-\! \mathbf{y}^{k+1} ) \mathbf{B}^\top  \mathbf{B} ( \mathbf{y}^{k} \!-\! \mathbf{y}^{k+1} )  \nonumber\\
	\!\!+\frac{1}{\beta} ( \bm{\lambda } \!-\! \bm{\lambda }^{k+1} )^\top  ( \bm{\lambda }^k \!-\! \bm{\lambda }^{k+1} ) +\frac{1}{\beta} ( \bm{\lambda }^{k+1} \!-\! \bar{\bm{\lambda }}^k )^\top  ( \bm{\lambda }^k \!-\! \bm{\lambda }^{k+1} ).
	\end{array}	
	\end{equation}
	Moreover, notice the elementary equation
	\begin{equation}\label{Basic_equation_2}
	( \mathbf{a} - \mathbf{c} )^\top  ( \mathbf{b} - \mathbf{c} ) = \frac{1}{2} ( \| \mathbf{a} - \mathbf{c} \|_2^2 - \| \mathbf{a} - \mathbf{b} \|_2^2 + \| \mathbf{b} - \mathbf{c} \|_2^2 ).
	\end{equation}
	We denote the right hand of Eq.~\eqref{Basic_equation_2} as $\Delta_2(\mathbf{a},\mathbf{b},\mathbf{c})$.
	Thus, for all $\bm{w} \in \mathbf{\Omega}$, we have
	\begin{equation}
	\begin{array}{l}
	V(\mathbf{y},\bar{\mathbf{y}}^k,\bm{w},\bar{\bm{w}}^k) \nonumber \\
	\overset{(\ref{Basic_equation_2})}{\ge} ( \mathbf{x} - \mathbf{x}^{k+1} )^\top  \mathbf{Q}^\top \mathbf{e}_k(\hat{\mathbf{x}}^{k+1})+ \Delta_{\bar{\mathbf{W}}} (\mathbf{x},\mathbf{x}^k,\mathbf{x}^{k+1})	\nonumber\\
	+\beta\Delta_2(\mathbf{By},\mathbf{By}^k,\mathbf{By}^{k+1})
	+ \frac{1}{\beta} \Delta_2(\bm{\lambda},\bm{\lambda}^k,\bm{\lambda}^{k+1}) \nonumber\\+\frac{1}{\beta} ( \bm{\lambda }^{k+1} - \bar{\bm{\lambda }}^k )^\top  ( \bm{\lambda }^k - \bm{\lambda }^{k+1} )\\
	\overset{(\ref{Basic_equation_00})}{\ge} ( \mathbf{x} - \mathbf{x}^{k+1} )^\top   \mathbf{Q}^\top \mathbf{e}_k(\hat{\mathbf{x}}^{k+1})
	+ \Delta_{\mathbf{M}}(\bm{w},\bm{w}^k,\bm{w}^{k+1}),\label{Theory_Contractive_3}
	\end{array}
	\end{equation}
	where $\Delta_{\mathbf{M}}(\bm{w},\bm{w}^k,\bm{w}^{k+1}) = \frac{1}{2}(\Lambda^{k+1}-\Lambda^k+\Lambda^{k,k+1})$ and $\Delta_{\bar{\mathbf{W}}}$ has the similar form with $\Delta_{2}$ and $\Delta_{\mathbf{M}}$. 
	The proof is complete.
\end{proof}

\subsection{Proof of the Proposition~\ref{prop:sum_Corssing_Term}}% Proposition 3.
\begin{proof}
	Recall the result \eqref{eq:cretirion_relation}. By mathematical induction, for all $k \ge 1$, we have
	\begin{equation}\label{eq:Basic_equation_3}
	\vspace{-0.3cm}
	\| \mathbf{e}_k (\hat{\mathbf{x}}^{k+1}) \|_2 \!\le\! \sum\limits_{i=0}^{k-1} \eta^{k - i} \gamma \Lambda^{i,i+1}_{1/2} + \eta^{k} \|  \mathbf{e}_0 (\hat{\mathbf{x}}^{1}) \|_2.
	\end{equation}
	With $\sum\limits_{k=1}^{K} \bm{q}_k(\mathbf{x})^\top  \mathbf{e}_k (\hat{\mathbf{x}}^{k+1})
	\le \sum\limits_{k=1}^{K} \| \bm{q}_k(\mathbf{x})\|_2  \|\mathbf{e}_k (\hat{\mathbf{x}}^{k+1}) \|_2$ and Eq.~\eqref{eq:Basic_equation_3}, for all $\mathbf{x}\in \mathbb{R}^n$, $\mu>0$ and $K > 1$, then we have%yields 
	\begin{equation}
	\begin{array}{l}
	\sum\limits_{k=1}^{K} \bm{q}_k(\mathbf{x})^\top  \mathbf{e}_k (\hat{\mathbf{x}}^{k+1})\\
	\le \frac{\mu}{2} \sum\limits^{K}_{k=2}\sum\limits_{i=1}^{k-1} \eta^{k-i} \| \bm{q}_k(\mathbf{x})\|_2^2 + \frac{\mu}{2} \sum\limits^{K}_{k=1} \eta^{k} \| \bm{q}_k(\mathbf{x})\|_2^2 \nonumber\\
	+ \frac{1}{2\mu} \sum\limits^{K}_{k=2} \sum\limits_{i=1}^{k-1} \eta^{k-i}\gamma^2\Lambda^{i,i+1}
	\!+\! \frac{1}{2\mu} \sum\limits^{K}_{k=1} \eta^{k} [ \|  \mathbf{e}_0 (\hat{\mathbf{x}}^{1}) \|_2 \!+\! \gamma \Lambda^{i,i+1}_{1/2} ]^2\nonumber\\
	= \frac{\mu}{2} \sum\limits^{K}_{k=1}\sum\limits_{i=0}^{k-1} \eta^{k-i} \| \bm{q}_k(\mathbf{x})\|_2^2
	+\frac{1}{2\mu} \sum\limits^{K}_{k=2} \sum\limits_{i=1}^{k-1} \eta^{k-i}\gamma^2\Lambda^{i,i+1}  \nonumber\\
	+\frac{1}{2\mu} \sum\limits^{K}_{k=1} \eta^{k} \left( \|  \mathbf{e}_0 (\hat{\mathbf{x}}^{1}) \|_2 + \gamma \Lambda^{0,1}_{1/2} \right)^2.\nonumber
	\end{array}
	\end{equation}
	Furthermore, for all $\mathbf{x} \in \mathbb{R}^n$, $K > 1$ and $\eta\in(0,1)$, we have
	$
	\sum\limits_{k=1}^{K}\sum\limits_{i=0}^{k-1} \eta^{k-i} \| \bm{q}_k(\mathbf{x})\|_2^2 =
	\sum\limits_{k=1}^{K} \frac{\eta - \eta^{k+1}}{1-\eta}  \| \bm{q}_k(\mathbf{x})\|_2^2
	$
	and
	\begin{equation}
	\begin{array}{l}
	\sum\limits_{k = 2}^{K} \sum\limits_{i=1}^{k-1} \eta^{k-i}\gamma^2\Lambda^{i,i+1}
	= \sum\limits_{i=1}^{K-1}\sum\limits_{k=i+1}^{K} \eta^{k-i}\gamma^2\Lambda^{i,i+1}\nonumber\\
	= \sum\limits_{i=1}^{K-1}\frac{\eta - \eta^{K-i+1}}{1 - \eta} \gamma^2\Lambda^{i,i+1}.\label{Basic_equation_5}
	\end{array}
	\end{equation}
	Combining the above equalities and inequalities, we obtain
	$$
	\begin{array}{l}
	\sum\limits_{k=1}^{K} \bm{q}_k(\mathbf{x})^\top  \mathbf{e}_k (\hat{\mathbf{x}}^{k+1}) \\
	= \frac{\mu}{2} \sum\limits^{K}_{k=1} \frac{\eta - \eta^{k+1}}{1-\eta}  \| \bm{q}_k(\mathbf{x})\|_2^2
	+ \frac{1}{2\mu} \sum\limits^{K-1}_{i=1}\frac{\eta - \eta^{K-i+1}}{1 - \eta} \gamma^2\Lambda^{i,i+1} \\
	+ \frac{1}{2\mu} \frac{ \eta - \eta^{K+1} }{1 - \eta} \left( \| \mathbf{e}_0 (\hat{\mathbf{x}}^{1}) \|_2 + \gamma \Lambda^{0,1}_{1/2} \right)^2 \\
	\le \frac{\mu}{2} \sum\limits^{K}_{k=1} \frac{\eta}{1-\eta}  \| \bm{q}_k(\mathbf{x})\|_2^2 + \frac{1}{2\mu} \sum\limits^{K-1}_{i=1}\frac{\eta}{1 - \eta} \gamma^2\Lambda^{i,i+1} \\
	\quad  + \frac{1}{2\mu} \frac{ \eta }{1 - \eta} \left( \| \mathbf{e}_0 (\hat{\mathbf{x}}^{1}) \|_2 + \gamma \Lambda^{0,1}_{1/2} \right)^2, 
	\end{array}	$$
	which implies the conclusion \eqref{eq:re_Var_Inequality}. 
	The proof is complete.
\end{proof}

\subsection{Proof of the Theorem~\ref{thm:Convergence}}
\begin{proof}
	First, we define $\Gamma=( \bm{w} - \bar{\bm{w}}^k )^\top  ( \mathbf{F} (\bm{w}) -\mathbf{F} (\bar{\bm{w}}^k))$ and recall the definition of $\mathbf{F}(\bm{w})$ in \eqref{eq:notation}. We have
	\begin{equation}\label{Basic_equation_8}
	\begin{array}{l}
	\Gamma
	= ( \mathbf{x}- \mathbf{x}^{k+1} )^\top  \mathbf{Q}^\top  ( \nabla l( \mathbf{Q} \mathbf{x} ) - \nabla l(\mathbf{Q} \mathbf{x}^{k+1}))\\
	\quad \overset{\eqref{eq:StrongConvex}}{\ge} \theta \| \bm{q}_k(\mathbf{x}) \|_2^2.
	\end{array}
	\end{equation}
	Then, using the results~\eqref{eq:re_Var_Inequality} and Propositions~\ref{prop:re_Var_Inequality} and~\ref{prop:sum_Corssing_Term}, respectively, we obtain	
	\begin{equation}\label{eq:Theory_Contractive_5}
	\begin{array}{l}
	\!\!\sum\limits^{K}_{k=1} V(\bar{\mathbf{y}}^k,\mathbf{y},\bar{\bm{w}}^k,\bm{w})\\
	\!\!= \sum\limits^{K}_{k=1}\left\{g( \bar{\mathbf{y}}^k ) \!-\! g( \mathbf{y} )
	\!+\! ( \bar{\bm{w}}^k \!-\!\bm{w})^\top  \mathbf{F}( \bar{\bm{w}}^k)
	-\Gamma\right\}\\
	\!\!\overset{\eqref{eq:re_Var_Inequality}}{\le}\! \frac{1}{2} \left( \Lambda^{1} \!-\! \Lambda^{K\!+\!1} \right) \!+\!\sum\limits^{K}_{k=1} \left\{\bm{q}_k(x)^\top  \mathbf{e}_k (\hat{\mathbf{x}}^{k+1}) \!-\! \Gamma\right\} \!-\! \sum\limits_{k=1}^{K} \frac{1}{2}\Lambda^{k,k+1}\\
	\!\!\overset{\eqref{eq:sum_Crossing_Term}\eqref{Basic_equation_8}}{\le} \frac{1}{2} ( \Lambda^1 \!-\! \Lambda^{K+1}) 
	\!+\!\sum\limits^{K}_{k=1} ( \frac{\mu}{2} \frac{\eta}{1 - \eta} \!-\! \theta ) \| \bm{q}_k(\mathbf{x})\|_2^2  \\
	\!\!+\sum\limits^{K-1}_{k=1} \frac{1}{2} ( \frac{\eta}{1 - \eta} \frac{\gamma^2}{\mu} \!-\! 1 ) \Lambda^{k,k+1} \!-\! \frac{1}{2} \Lambda^{K,K\!+\!1} \\
	\!\!+\frac{1}{2\mu} \frac{\eta}{1 - \eta} ( \| \mathbf{e}_0 (\hat{\mathbf{x}}^{1}) \|_2 \!+\! \gamma \Lambda^{0,1}_{1/2} )^2.
	\end{array}
	\end{equation}
	For any given $\bm{w}^{\ast}\in\mathbf{\Omega}^{\ast}$, we have
	$V(\bar{\mathbf{y}}^k,\mathbf{y}^{\ast},\bar{\bm{w}}^k,\bm{w}^{\ast})\ge 0$, $\forall k$.
	Setting $\bm{w} = \bm{w}^{\ast}$ in \eqref{eq:Theory_Contractive_5}, together with the above property, for any $K > 1$, we have
	\begin{equation*}
	\begin{array}{l}
	\!\!\sum\limits^{K}_{k=1} (\theta- \frac{\mu}{2} \frac{\eta}{1 - \eta} ) \|  \bm{q}_k(\mathbf{x}^{\ast})\|_2^2 
	\!+\!\sum\limits^{K-1}_{k=1} ( \frac{1}{2} - \frac{\gamma^2}{2\mu} \frac{\eta}{1 - \eta} ) \Lambda^{k,k+1}\\
	\!\!\le\! \frac{1}{2} \Lambda^{1,\ast}\!+\!\frac{1}{2\mu} \frac{\eta}{1 - \eta} ( \| \mathbf{e}_0 (\hat{\mathbf{x}}^{1}) \|_2 \!+\! \gamma \Lambda^{0,1}_{1/2} )^2
	\!-\!\frac{1}{2} \Lambda^{K+1,\ast} \!-\! \frac{1}{2} \Lambda^{K,K+1}.
	\end{array}
	\end{equation*}
	Recall that the parameter $\eta$ controlling the accuracy in~\eqref{eq:ek-condition} is restricted by the condition~\eqref{eq:etaCriterion}. Hence, it follows from the definition of $\gamma$ that
	$
	0<\frac{ \gamma^2 \eta^2}{2\theta(1 - \eta)^2}=(  \frac{\eta}{2\theta(1 - \eta)} ) ( \frac{\gamma^2 \eta}{1 - \eta} ) < 1
	$
	and obviously there exists a $\mu>0$ such that
	$
	0<\frac{\mu}{2\theta}\frac{\eta}{1-\eta} < 1 \quad \hbox{and} \quad 0<\frac{\gamma^2}{\mu}\frac{\eta}{1-\eta} < 1.
	$
	As $k\rightarrow\infty$ we conclude that 
	$
	\|  \bm{q}_k(\mathbf{x}^{\ast})\|_2\rightarrow 0,\Lambda^{k,k+1}_{1/2}\rightarrow 0,\;\Lambda^{K+1,\ast}_{1/2} < \infty.
	$
	Furthermore, for any $\varepsilon > 0$, there exists $k_0$ such that for all $k\ge k_0$, we have
	$
	\Lambda^{k,k+1}_{1/2}\le \varepsilon \quad \hbox{and}\quad \eta^k \le \varepsilon.
	$
	For all $k > 2k_0$, it follows from \eqref{eq:Basic_equation_3} that
	\begin{equation}
	\begin{array}{l}
	\| \mathbf{e}_k (\hat{\mathbf{x}}^{k+1}) \|_2
	\le \sum\limits_{i=0}^{k-1} \eta^{k - i} \gamma \Lambda^{i,i+1}_{1/2} + \eta^{k} \| \mathbf{e}_0 (\hat{\mathbf{x}}^{1}) \|_2\\
	=\sum\limits_{i=0}^{k_0-1} \eta^{k - i} \gamma \Lambda^{i,i+1}_{1/2}
	+ \sum\limits_{k_0}^{k-1} \eta^{k - i} \gamma \Lambda^{i,i+1}_{1/2} + \eta^{k} \| \mathbf{e}_0 (\hat{\mathbf{x}}^{1}) \|_2\\
	\le \left( \max\limits_{\mathbf{0}\le i\le k_0-1}\left\{ \Lambda^{i,i+1}_{1/2} \right\}\gamma \sum\limits_{i=0}^{k_0-1} \eta^{k-k_0-i} \right) \cdot \eta^{k_0} \\
	+ \eta^{k} \| \mathbf{e}_0 (\hat{\mathbf{x}}^{1}) \|_2 + \left( \sum\limits_{k_0}^{k-1} \eta^{k - i} \gamma \right)\cdot \max\limits_{k_0\le i\le k-1}\left\{ \Lambda^{i,i+1}_{1/2} \right\}\nonumber\\
	\le \left[ \left( \max\limits_{\mathbf{0}\le i\le k_0-1}\left\{ \Lambda^{i,i+1}_{1/2} \right\}\gamma \sum\limits_{i=0}^{k_0-1} \eta^{k-k_0-i} \right)\right.\\
	\left.+\sum\limits_{k_0}^{k-1} \eta^{k - i} \gamma  + \| \mathbf{e}_0 (\hat{\mathbf{x}}^{1}) \|_2 \right]\cdot \varepsilon,
	\end{array}	\end{equation}
	which implies that
	$
	\| \mathbf{e}_k (\hat{\mathbf{x}}^{k+1}) \|_2 \overset{k\rightarrow \infty}{\longrightarrow} 0.
	$
	Moreover, note that $\| \mathbf{B}( \mathbf{y}^{k} - \mathbf{y}^{k+1} )\|_2 \overset{k\rightarrow \infty}{\longrightarrow} 0$ can be obtained by the fact $\| \bm{w}^{k} - \bm{w}^{k+1} \|_{\mathbf{M}} \overset{k\rightarrow \infty}{\longrightarrow} 0$. The first assertion is proved.
	
	Now we prove the second assertion. For the first part: $\|\mathbf{A} \mathbf{x}^{k+1} + \mathbf{B} \mathbf{y}^{k+1} - \mathbf{c}\|_2 \overset{k\rightarrow \infty}{\longrightarrow} 0$, it follows immediately from the facts $\|\mathbf{A} \mathbf{x}^{k+1} + \mathbf{B} \mathbf{y}^{k+1} - \mathbf{c}\|_2=\frac{1}{\beta}\|\bm{\lambda }^{k} - \bm{\lambda }^{k+1} \|_2$  and $\Lambda^{k,k+1}_{1/2} \overset{k\rightarrow \infty}{\longrightarrow} 0$. Note that the optimality conditions of the $\mathbf{y}$-subproblem at the ($k$+$1$)-th iteration and a solution point $\mathbf{y}^{\ast}$ can be respectively written as
	$$
	\left\{\begin{array}{l}
	g(\mathbf{y}) - g ( \mathbf{y}^{k+1}) + ( \mathbf{y}- \mathbf{y}^{k+1})^\top  ( - \mathbf{B}^\top  \bm{\lambda }^{k+1}) \ge \mathbf{0},\\
	g(\mathbf{y}) - g( \mathbf{y}^{\ast} ) + ( \mathbf{y}- \mathbf{y}^{\ast} )^\top  ( - \mathbf{B}^\top  \bm{\lambda }^{\ast}) \ge \mathbf{0}.
	\end{array}\right.	$$
	Accordingly, taking $\mathbf{y} = \mathbf{y}^{\ast}$ and $\mathbf{y} = \mathbf{y}^{k+1}$ respectively in the above inequalities, we have
	\begin{equation}\label{Basic_equation_10}
	( \mathbf{y}^{k+1} \!-\! \mathbf{y}^{\ast} )^\top  \mathbf{B}^\top  \bm{\lambda }^{\ast} \!\le\! g ( \mathbf{y}^{k+1} ) - g ( \mathbf{y}^{\ast} )\!\le\! ( \mathbf{y}^{k+1} \!-\! \mathbf{y}^{\ast} )^\top  \mathbf{B}^\top  \bm{\lambda }^{k+1}.
	\end{equation}
	The same technique can also be applied to the $\mathbf{x}$-subproblem and a solution point $x^{\ast}$. Additionally, using the convexity of $l$, we have
	\begin{equation}\label{Basic_equation_11}
	\begin{array}{l}
	( \mathbf{x}^{k+1} - \mathbf{x}^{\ast} )^\top  \mathbf{A}^\top  \bm{\lambda }^{\ast}
	\le l (\mathbf{Q}\mathbf{x}^{k+1} ) - l (\mathbf{Q} \mathbf{x}^{\ast} )\\
	\le ( \mathbf{x}^{k+1} - \mathbf{x}^{\ast} )^\top  \mathbf{Q}^\top \nabla l( \mathbf{Q} \mathbf{x}^{k+1} )\\
	= ( \mathbf{x}^{k+1} - \mathbf{x}^{\ast} )^\top  [ \mathbf{A}^\top  ( \bm{\lambda }^k - \beta ( \mathbf{A}\mathbf{x}^{k+1} + \mathbf{B} \mathbf{y}^k - \mathbf{c} ) )\\
	-\bar{\mathbf{W}}(\mathbf{x}^{k+1}-\mathbf{x}^k) + \mathbf{Q}^\top  \mathbf{e}_{k} (\hat{\mathbf{x}}^{k+1})  ]:=\bm{\Upsilon}.
	\end{array}
	\end{equation}
	Then, summarizing (\ref{Basic_equation_10}) and (\ref{Basic_equation_11}), we obtain
	\begin{equation}\label{Basic_equation_12}
	\begin{array}{l}
	\!\!\frac{1}{\beta} ( \bm{\lambda }^{k} - \bm{\lambda }^{k+1} )^\top  \bm{\lambda }^{\ast}\\
	\!\!= ( \mathbf{x}^{k+1} - \mathbf{x}^{\ast} )^\top  \mathbf{A}^\top  \bm{\lambda }^{\ast} + ( \mathbf{y}^{k+1} - \mathbf{y}^{\ast} )^\top  \mathbf{B}^\top  \bm{\lambda }^{\ast}\\
	\!\!\le  [ f ( \mathbf{x}^{k+1} ) + g ( \mathbf{y}^{k+1} )] - [ g ( \mathbf{y}^{\ast} ) + f ( \mathbf{x}^{\ast} )]\\
	\!\!\leq \bm{\Upsilon} +  ( \mathbf{y}^{k+1} - \mathbf{y}^{\ast} )^\top  \mathbf{B}^\top  \bm{\lambda }^{k+1} \\
	\!\!\leq \frac{1}{\beta}( \bm{\lambda }^k - \bm{\lambda }^{k+1} )^\top  \bm{\lambda }^{k+1} + \beta( \mathbf{x}^{k+1} - \mathbf{x}^{\ast} )^\top  \mathbf{A}^\top  \mathbf{B} ( \mathbf{y}^{k+1} - \mathbf{y}^k )\\
	\!\!-( \mathbf{x}^{k+1} - \mathbf{x}^{\ast} )^\top  \bar{\mathbf{W}}(\mathbf{x}^{k+1}-\mathbf{x}^k)+ ( \mathbf{x}^{k+1} - \mathbf{x}^{\ast} )^\top  \mathbf{Q}^\top  \mathbf{e}_{k} (\hat{\mathbf{x}}^{k+1}).
	\end{array}
	\end{equation}
	Since $\|\mathbf{e}_k (\hat{\mathbf{x}}^{k+1}) \|_2\rightarrow \mathbf{0},\ \| \mathbf{Q} \mathbf{x}^{k+1} - \mathbf{Q}\mathbf{x}^{\ast}\|_2 \rightarrow \mathbf{0},\ \Lambda^{k,k+1}_{1/2}\rightarrow \mathbf{0},\ \Lambda^{k+1,\ast}_{1/2} < \infty$, as well as
	$
	( \mathbf{x}^{k+1} - \mathbf{x}^{\ast} )^\top  \bar{\mathbf{W}}(\mathbf{x}^{k+1}-\mathbf{x}^k)
	\le \Lambda^{k+1,\ast}_{1/2} \Lambda^{k,k+1}_{1/2}\rightarrow \mathbf{0},
	$
	and
	\begin{equation*}
	\begin{array}{l}
	\mathbf{A}( \mathbf{x}^{k+1} - \mathbf{x}^{\ast} ) = \frac{1}{\beta}( \bm{\lambda }^k - \bm{\lambda }^{k+1} ) - \mathbf{B}( \mathbf{y}^{k+1} - \mathbf{y}^{\ast} )\\
	\Rightarrow\| \mathbf{A}( \mathbf{x}^{k+1} - \mathbf{x}^{\ast} ) \|_2^2 < \infty,
	\end{array}
	\end{equation*}
	both the left- and right-hand sides of (\ref{Basic_equation_12}) converge to zero. As  a result, we have
	$
	l (\mathbf{Q} \mathbf{x}^{k+1} ) + g ( \mathbf{y}^{k+1} ) \overset{k\rightarrow \infty}{\longrightarrow} l(\mathbf{Q} \mathbf{x}^\ast ) + g ( \mathbf{y}^{\ast} ),
	$which is the second assertion of this theorem. The proof is complete.
\end{proof}

\subsection{Proof of the Corollary~\ref{corollary}}

\begin{proof}
	The proof of this Corollary can be conducted following the same road-maps developed in \cite{yue2018implementing}. Indeed, from inequality~\eqref{eq:Theory_Contractive_5}, it is easy to show the upper bound of $\min\limits_{1\le k\le K}\{\Lambda^{k,k+1}\}$  is in order of  ${\cal{O}}(\frac{1}{k})$. With the help of inequality~\eqref{eq:Basic_equation_3}, one can also obtain the upper bound of $\min\limits_{1\le k\le K}\{\|\mathbf{e}_k(\hat{\mathbf{x}}^{k+1})\|_2^2\}$ with the same order. The details of the proof are similar to \cite{yue2018implementing} and we do not repeat it here.
\end{proof}

% use section* for acknowledgment
\ifCLASSOPTIONcompsoc
% The Computer Society usually uses the plural form
\section*{Acknowledgments}
\else
% regular IEEE prefers the singular form
\section*{Acknowledgment}
\fi

This work is partially supported by the National Key R\&D Program of China (2020YFB1313503), the National Natural Science Foundation of China (Nos. 61922019, 61672125, 61733002, 61432003, 61632019, 11971220), Shenzhen Science and Technology Program (No. RCYX20200714114700072), the Stable Support Plan Program of Shenzhen Natural Science Fund (No. 20200925152128002), Guangdong Basic and Applied Basic Research Foundation 2019A1515011152, and the Fundamental Research Funds for the Central Universities.

%Research of this author was supported by National Science Foundation of China 11971220, by Shenzhen Science and Technology Program (No. RCYX20200714114700072), by the Stable Support Plan Program of Shenzhen Natural Science Fund (No. 20200925152128002), and by Guangdong Basic and Applied Basic Research Foundation 2019A1515011152.

% Can use something like this to put references on a page
% by themselves when using endfloat and the captionsoff option.
\ifCLASSOPTIONcaptionsoff
\newpage
\fi

% trigger a \newpage just before the given reference
% number - used to balance the columns on the last page
% adjust value as needed - may need to be readjusted if
% the document is modified later
%\IEEEtriggeratref{8}
% The "triggered" command can be changed if desired:
%\IEEEtriggercmd{\enlargethispage{-5in}}

% references section

% can use a bibliography generated by BibTeX as a .bbl file
% BibTeX documentation can be easily obtained at:
% http://mirror.ctan.org/biblio/bibtex/contrib/doc/
% The IEEEtran BibTeX style support page is at:
% http://www.michaelshell.org/tex/ieeetran/bibtex/
\bibliographystyle{IEEEtran}
\bibliography{reference}

% Generated by IEEEtran.bst, version: 1.12 (2007/01/11)
\begin{thebibliography}{10}
\providecommand{\url}[1]{#1}
\csname url@samestyle\endcsname
\providecommand{\newblock}{\relax}
\providecommand{\bibinfo}[2]{#2}
\providecommand{\BIBentrySTDinterwordspacing}{\spaceskip=0pt\relax}
\providecommand{\BIBentryALTinterwordstretchfactor}{4}
\providecommand{\BIBentryALTinterwordspacing}{\spaceskip=\fontdimen2\font plus
\BIBentryALTinterwordstretchfactor\fontdimen3\font minus
  \fontdimen4\font\relax}
\providecommand{\BIBforeignlanguage}[2]{{%
\expandafter\ifx\csname l@#1\endcsname\relax
\typeout{** WARNING: IEEEtran.bst: No hyphenation pattern has been}%
\typeout{** loaded for the language `#1'. Using the pattern for}%
\typeout{** the default language instead.}%
\else
\language=\csname l@#1\endcsname
\fi
#2}}
\providecommand{\BIBdecl}{\relax}
\BIBdecl

\bibitem{chan2011alternating}
R.~H. Chan, J.~Yang, and X.~Yuan, ``Alternating direction method for image
  inpainting in wavelet domains,'' \emph{SIAM Journal on Imaging Sciences},
  vol.~4, no.~3, pp. 807--826, 2011.

\bibitem{ng2011fast}
M.~Ng, F.~Wang, and X.-M. Yuan, ``Fast minimization methods for solving
  constrained total-variation superresolution image reconstruction,''
  \emph{Multidimensional Systems and Signal Processing}, vol.~22, no. 1-3, pp.
  259--286, 2011.

\bibitem{yang2011alternating}
J.~Yang and Y.~Zhang, ``Alternating direction algorithms for $\ell_1$-problems
  in compressive sensing,'' \emph{SIAM journal on scientific computing},
  vol.~33, no.~1, pp. 250--278, 2011.

\bibitem{liu2019convergence}
R.~Liu, S.~Cheng, Y.~He, X.~Fan, Z.~Lin, and Z.~Luo, ``On the convergence of
  learning-based iterative methods for nonconvex inverse problems,'' \emph{IEEE
  Trans. Pattern Anal. Mach. Intell.}, 2019.

\bibitem{liu2013robust}
G.~Liu, Z.~Lin, S.~Yan, J.~Sun, Y.~Yu, and Y.~Ma, ``Robust recovery of subspace
  structures by low-rank representation,'' \emph{IEEE Trans. Pattern Anal.
  Mach. Intell.}, vol.~35, no.~1, pp. 171--184, 2013.

\bibitem{elhamifar2013sparse}
E.~Elhamifar and R.~Vidal, ``Sparse subspace clustering: Algorithm, theory, and
  applications,'' \emph{IEEE Trans. Pattern Anal. Mach. Intell.}, vol.~35,
  no.~11, pp. 2765--2781, 2013.

\bibitem{liu2013linearized}
R.~Liu, Z.~Lin, and Z.~Su, ``Linearized alternating direction method with
  parallel splitting and adaptive penalty for separable convex programs in
  machine learning,'' in \emph{Asian Conference on Machine Learning}, 2013, pp.
  116--132.

\bibitem{sun2016deep}
J.~Sun, H.~Li, Z.~Xu \emph{et~al.}, ``Deep admm-net for compressive sensing
  mri,'' in \emph{Proc. Advances in Neural Inf. Process. Systems.}, 2016, pp.
  10--18.

\bibitem{glowinski1975approximation}
R.~Glowinski and A.~Marroco, ``Sur l'approximation, par {\'e}l{\'e}ments finis
  d'ordre un, et la r{\'e}solution, par p{\'e}nalisation-dualit{\'e} d'une
  classe de probl{\`e}mes de dirichlet non lin{\'e}aires,'' \emph{ESAIM:
  Mathematical Modelling and Numerical Analysis-Mod{\'e}lisation
  Math{\'e}matique et Analyse Num{\'e}rique}, vol.~9, no.~R2, pp. 41--76, 1975.

\bibitem{yang2013linearized}
J.~Yang and X.~Yuan, ``Linearized augmented lagrangian and alternating
  direction methods for nuclear norm minimization,'' \emph{Mathematics of
  computation}, vol.~82, no. 281, pp. 301--329, 2013.

\bibitem{xie2019differentiable}
X.~Xie, J.~Wu, Z.~Zhong, G.~Liu, and Z.~Lin, ``Differentiable linearized
  admm,'' in \emph{Proc. Int. Conf. Mach. Learn.}, 2019.

\bibitem{yuan2005improvement}
X.-M. Yuan, ``The improvement with relative errors of he et al.'s inexact
  alternating direction method for monotone variational inequalities,''
  \emph{Mathematical and computer modelling}, vol.~42, no. 11-12, pp.
  1225--1236, 2005.

\bibitem{eckstein2018relative}
J.~Eckstein and W.~Yao, ``Relative-error approximate versions of
  douglas--rachford splitting and special cases of the admm,''
  \emph{Mathematical Programming}, vol. 170, no.~2, pp. 417--444, 2018.

\bibitem{yue2018implementing}
H.~Yue, Q.~Yang, X.~Wang, and X.~Yuan, ``Implementing the alternating direction
  method of multipliers for big datasets: A case study of least absolute
  shrinkage and selection operator,'' \emph{SIAM Journal on Scientific
  Computing}, vol.~40, no.~5, pp. A3121--A3156, 2018.

\bibitem{boyd2011distributed}
S.~Boyd, N.~Parikh, E.~Chu, B.~Peleato, J.~Eckstein \emph{et~al.},
  ``Distributed optimization and statistical learning via the alternating
  direction method of multipliers,'' \emph{Foundations and
  Trends{\textregistered} in Machine learning}, vol.~3, no.~1, pp. 1--122,
  2011.

\bibitem{glowinski2014alternating}
R.~Glowinski, ``On alternating direction methods of multipliers: a historical
  perspective,'' in \emph{Modeling, simulation and optimization for science and
  technology}.\hskip 1em plus 0.5em minus 0.4em\relax Springer, 2014, pp.
  59--82.

\bibitem{he2015non}
B.~He and X.~Yuan, ``On non-ergodic convergence rate of douglas--rachford
  alternating direction method of multipliers,'' \emph{Numerische Mathematik},
  vol. 130, no.~3, pp. 567--577, 2015.

\bibitem{chan2017plug}
S.~H. Chan, X.~Wang, and O.~A. Elgendy, ``Plug-and-play admm for image
  restoration: Fixed-point convergence and applications,'' \emph{Trans. Comput.
  Imaging,}, vol.~3, no.~1, pp. 84--98, 2017.

\bibitem{ryu2019plug}
E.~K. Ryu, J.~Liu, S.~Wang, X.~Chen, Z.~Wang, and W.~Yin, ``Plug-and-play
  methods provably converge with properly trained denoisers,'' \emph{Proc. Int.
  Conf. Mach. Learn.}, 2019.

\bibitem{romano2017little}
Y.~Romano, M.~Elad, and P.~Milanfar, ``The little engine that could:
  Regularization by denoising (red),'' \emph{SIAM Journal on Imaging Sciences},
  vol.~10, no.~4, pp. 1804--1844, 2017.

\bibitem{venkatakrishnan2013plug}
S.~V. Venkatakrishnan, C.~A. Bouman, and B.~Wohlberg, ``Plug-and-play priors
  for model based reconstruction,'' in \emph{Global Conference on Signal and
  Information Processing}.\hskip 1em plus 0.5em minus 0.4em\relax IEEE, 2013,
  pp. 945--948.

\bibitem{zhang2019deep}
K.~Zhang, W.~Zuo, and L.~Zhang, ``Deep plug-and-play super-resolution for
  arbitrary blur kernels,'' in \emph{Proceedings of the IEEE Conference on
  Computer Vision and Pattern Recognition}, 2019, pp. 1671--1681.

\bibitem{dong2018denoising}
W.~Dong, P.~Wang, W.~Yin, G.~Shi, F.~Wu, and X.~Lu, ``Denoising prior driven
  deep neural network for image restoration,'' \emph{IEEE Trans. Pattern. Anal.
  Mach. Intell.}, vol.~41, no.~10, pp. 2305--2318, 2018.

\bibitem{buzzard2018plug}
G.~T. Buzzard, S.~H. Chan, S.~Sreehari, and C.~A. Bouman, ``Plug-and-play
  unplugged: Optimization-free reconstruction using consensus equilibrium,''
  \emph{SIAM Journal on Imaging Sciences}, vol.~11, no.~3, pp. 2001--2020,
  2018.

\bibitem{rond2016poisson}
A.~Rond, R.~Giryes, and M.~Elad, ``Poisson inverse problems by the
  plug-and-play scheme,'' \emph{Journal of Visual Communication and Image
  Representation}, vol.~41, pp. 96--108, 2016.

\bibitem{tirer2018image}
T.~Tirer and R.~Giryes, ``Image restoration by iterative denoising and backward
  projections,'' \emph{IEEE Trans.Image Process.}, vol.~28, no.~3, pp.
  1220--1234, 2018.

\bibitem{liu2019converged}
R.~Liu, Y.~Zhang, S.~Cheng, Z.~Luo, and X.~Fan, ``Converged deep framework
  assembling principled modules for cs-mri,'' \emph{IEEE Trans. Medical
  Imaging}, 2020.

\bibitem{sun2019block}
Y.~Sun, J.~Liu, and U.~Kamilov, ``Block coordinate regularization by
  denoising,'' in \emph{Proc. Advances in Neural Inf. Process. Systems.}, 2019,
  pp. 380--390.

\bibitem{meinhardt2017learning}
T.~Meinhardt, M.~Moller, C.~Hazirbas, and D.~Cremers, ``Learning proximal
  operators: Using denoising networks for regularizing inverse imaging
  problems,'' in \emph{Proc. IEEE Conf. Int. Conf. Comput. Vis.}, 2017, pp.
  1781--1790.

\bibitem{liu2020free}
S.~Liu, E.~Reehorst, P.~Schniter, and R.~Ahmad, ``Free-breathing cardiovascular
  mri using a plug-and-play method with learned denoiser,'' \emph{ISBI}, 2020.

\bibitem{li2016rain}
Y.~Li, R.~T. Tan, X.~Guo, J.~Lu, and M.~S. Brown, ``Rain streak removal using
  layer priors,'' in \emph{Proc. IEEE Conf. Comput.Vis.Parttern Recognit.},
  2016.

\bibitem{teodoro2018convergent}
A.~M. Teodoro, J.~M. Bioucas-Dias, and M.~A. Figueiredo, ``A convergent image
  fusion algorithm using scene-adapted gaussian-mixture-based denoising,''
  \emph{IEEE Trans.Image Process.}, vol.~28, no.~1, pp. 451--463, 2019.

\bibitem{dabov2007image}
K.~Dabov, A.~Foi, V.~Katkovnik, and K.~Egiazarian, ``Image denoising by sparse
  3-d transform-domain collaborative filtering,'' \emph{IEEE Trans. Image
  Process.}, vol.~16, no.~8, pp. 2080--2095, 2007.

\bibitem{gastal2011domain}
E.~S. Gastal and M.~M. Oliveira, ``Domain transform for edge-aware image and
  video processing,'' in \emph{ACM Transactions on Graphics}, vol.~30,
  no.~4.\hskip 1em plus 0.5em minus 0.4em\relax ACM, 2011, p.~69.

\bibitem{schmidt2014shrinkage}
U.~Schmidt and S.~Roth, ``Shrinkage fields for effective image restoration,''
  in \emph{Proc. IEEE Conf. Comput. Vis. Pattern Recognit.}, 2014, pp.
  2774--2781.

\bibitem{zhang2017learning}
K.~Zhang, W.~Zuo, S.~Gu, and L.~Zhang, ``Learning deep cnn denoiser prior for
  image restoration,'' in \emph{Proc. IEEE Conf. Comput. Vis. Pattern
  Recognit.}, 2017, pp. 3929--3938.

\bibitem{reehorst2018regularization}
E.~T. Reehorst and P.~Schniter, ``Regularization by denoising: Clarifications
  and new interpretations,'' \emph{Trans. Comput. Imaging,}, vol.~5, no.~1, pp.
  52--67, 2018.

\bibitem{aljadaany2019douglas}
R.~Aljadaany, D.~K. Pal, and M.~Savvides, ``Douglas-rachford networks: Learning
  both the image prior and data fidelity terms for blind image deconvolution,''
  in \emph{Proc. IEEE Conf. Comput.Vis.Parttern Recognit.}, 2019, pp.
  10\,235--10\,244.

\bibitem{yang2017admm}
Y.~Yang, J.~Sun, H.~Li, and Z.~Xu, ``Admm-net: A deep learning approach for
  compressive sensing mri,'' \emph{arXiv preprint arXiv:1705.06869}, 2017.

\bibitem{liu2018proximal}
R.~Liu, X.~Fan, S.~Cheng, X.~Wang, and Z.~Luo, ``Proximal alternating direction
  network: A globally converged deep unrolling framework,'' in \emph{Proc. AAAI
  Conf. Artif. Intell}, 2018.

\bibitem{wang2016linearized}
Y.~Wang, R.~Liu, X.~Song, and Z.~Su, ``Linearized alternating direction method
  with penalization for nonconvex and nonsmooth optimization,'' in \emph{Proc.
  AAAI Conf. Artif. Intell}, 2016.

\bibitem{chan2019performance}
S.~H. Chan, ``Performance analysis of plug-and-play admm: A graph signal
  processing perspective,'' \emph{IEEE Trans. Comput.Imaging}, vol.~5, no.~2,
  pp. 274--286, 2019.

\bibitem{sreehari2016plug}
S.~Sreehari, S.~V. Venkatakrishnan, B.~Wohlberg, G.~T. Buzzard, L.~F. Drummy,
  J.~P. Simmons, and C.~A. Bouman, ``Plug-and-play priors for bright field
  electron tomography and sparse interpolation,'' \emph{IEEE Trans.
  Comput.Imaging}, vol.~2, no.~4, pp. 408--423, 2016.

\bibitem{liu2018toward}
R.~Liu, S.~Cheng, Y.~He, X.~Fan, and Z.~Luo, ``Toward designing convergent deep
  operator splitting methods for task-specific nonconvex optimization,''
  \emph{arXiv preprint arXiv:1804.10798}, 2018.

\bibitem{bigdeli2017deep}
S.~A. Bigdeli, M.~Zwicker, P.~Favaro, and M.~Jin, ``Deep mean-shift priors for
  image restoration,'' in \emph{Proc. Advances in Neural Inf. Process.
  Systems.}, 2017, pp. 763--772.

\bibitem{sun2019online}
Y.~Sun, B.~Wohlberg, and U.~S. Kamilov, ``An online plug-and-play algorithm for
  regularized image reconstruction,'' \emph{Trans. Comput. Imaging,}, 2019.

\bibitem{mataev2019deepred}
G.~Mataev, P.~Milanfar, and M.~Elad, ``Deepred: Deep image prior powered by
  red,'' in \emph{Proc. IEEE Int. Conf. Comput. Vis. Workshops}, 2019, pp.
  0--0.

\bibitem{dong2018learning}
J.~Dong, J.~Pan, D.~Sun, Z.~Su, and M.-H. Yang, ``Learning data terms for
  non-blind deblurring,'' in \emph{Proc. European Conf. Comput. Vis.}, 2018,
  pp. 748--763.

\bibitem{eisenstat1981efficient}
S.~C. Eisenstat, ``Efficient implementation of a class of preconditioned
  conjugate gradient methods,'' \emph{SIAM Journal on Scientific and
  Statistical Computing}, vol.~2, no.~1, pp. 1--4, 1981.

\bibitem{chen2008algorithm}
Y.~Chen, T.~A. Davis, W.~W. Hager, and S.~Rajamanickam, ``Algorithm 887:
  Cholmod, supernodal sparse cholesky factorization and update/downdate,''
  \emph{ACM Trans. on Mathematical Software}, vol.~35, no.~3, pp. 1--14, 2008.

\bibitem{he20121}
B.~He and X.~Yuan, ``On the o(1/n) convergence rate of the douglas--rachford
  alternating direction method,'' \emph{SIAM Journal on Numerical Analysis},
  vol.~50, no.~2, pp. 700--709, 2012.

\bibitem{russakovsky2015imagenet}
O.~Russakovsky, J.~Deng, H.~Su, J.~Krause, S.~Satheesh, S.~Ma, Z.~Huang,
  A.~Karpathy, A.~Khosla, M.~Bernstein \emph{et~al.}, ``Imagenet large scale
  visual recognition challenge,'' \emph{IJCV}, vol. 115, no.~3, pp. 211--252,
  2015.

\bibitem{martin2001database}
D.~Martin, C.~Fowlkes, D.~Tal, and J.~Malik, ``A database of human segmented
  natural images and its application to evaluating segmentation algorithms and
  measuring ecological statistics,'' in \emph{Proc. IEEE Conf. Int. Conf.
  Comput. Vis.}, 2001, pp. 416--423.

\bibitem{fu2017removing}
X.~Fu, J.~Huang, D.~Zeng, Y.~Huang, X.~Ding, and J.~Paisley, ``Removing rain
  from single images via a deep detail network,'' in \emph{Proc. IEEE Conf.
  Comput.Vis.Parttern Recognit.}, 2017.

\bibitem{zhang2017beyond}
K.~Zhang, W.~Zuo, Y.~Chen, D.~Meng, and L.~Zhang, ``Beyond a gaussian denoiser:
  Residual learning of deep cnn for image denoising,'' \emph{IEEE Trans. Image
  Process.}, vol.~26, no.~7, pp. 3142--3155, 2017.

\bibitem{guo2019toward}
S.~Guo, Z.~Yan, K.~Zhang, W.~Zuo, and L.~Zhang, ``Toward convolutional blind
  denoising of real photographs,'' in \emph{Proc. IEEE Conf.
  Comput.Vis.Parttern Recognit.}, 2019, pp. 1712--1722.

\bibitem{lebrun2015noise}
M.~Lebrun, M.~Colom, and J.-M. Morel, ``The noise clinic: a blind image
  denoising algorithm,'' \emph{Image Processing On Line}, vol.~5, pp. 1--54,
  2015.

\bibitem{roth2009fields}
S.~Roth and M.~J. Black, ``Fields of experts,'' \emph{International Journal of
  Computer Vision}, vol.~82, no.~2, p. 205, 2009.

\bibitem{he2014iterative}
L.~He and Y.~Wang, ``Iterative support detection-based split bregman method for
  wavelet frame-based image inpainting,'' \emph{IEEE Trans. Image Process.},
  vol.~23, no.~12, pp. 5470--5485, 2014.

\bibitem{li2009markov}
S.~Z. Li, \emph{Markov random field modeling in image analysis}.\hskip 1em plus
  0.5em minus 0.4em\relax Springer Science \& Business Media, 2009.

\bibitem{qu2012undersampled}
X.~Qu, D.~Guo, B.~Ning, Y.~Hou, Y.~Lin, S.~Cai, and Z.~Chen, ``Undersampled mri
  reconstruction with patch-based directional wavelets,'' \emph{Magnetic
  resonance imaging}, vol.~30, no.~7, pp. 964--977, 2012.

\bibitem{yang2018dagan}
G.~Yang, S.~Yu, H.~Dong, G.~Slabaugh, P.~L. Dragotti, X.~Ye, F.~Liu,
  S.~Arridge, J.~Keegan, Y.~Guo \emph{et~al.}, ``Dagan: deep de-aliasing
  generative adversarial networks for fast compressed sensing mri
  reconstruction,'' \emph{IEEE transactions on medical imaging}, vol.~37,
  no.~6, pp. 1310--1321, 2018.

\bibitem{bernstein2001effect}
M.~A. Bernstein, S.~B. Fain, and S.~J. Riederer, ``Effect of windowing and
  zero-filled reconstruction of mri data on spatial resolution and acquisition
  strategy,'' \emph{Journal of Magnetic Resonance Imaging: An Official Journal
  of the International Society for Magnetic Resonance in Medicine}, vol.~14,
  no.~3, pp. 270--280, 2001.

\bibitem{lustig2007sparse}
M.~Lustig, D.~Donoho, and J.~M. Pauly, ``Sparse mri: The application of
  compressed sensing for rapid mr imaging,'' \emph{Magnetic Resonance in
  Medicine: An Official Journal of the International Society for Magnetic
  Resonance in Medicine}, vol.~58, no.~6, pp. 1182--1195, 2007.

\bibitem{baraniuk2007compressive}
R.~G. Baraniuk, ``Compressive sensing,'' \emph{IEEE signal processing
  magazine}, vol.~24, no.~4, 2007.

\bibitem{qu2014magnetic}
X.~Qu, Y.~Hou, F.~Lam, D.~Guo, J.~Zhong, and Z.~Chen, ``Magnetic resonance
  image reconstruction from undersampled measurements using a patch-based
  nonlocal operator,'' \emph{Medical image analysis}, vol.~18, no.~6, pp.
  843--856, 2014.

\bibitem{zhan2016fast}
Z.~Zhan, J.-F. Cai, D.~Guo, Y.~Liu, Z.~Chen, and X.~Qu, ``Fast multiclass
  dictionaries learning with geometrical directions in mri reconstruction,''
  \emph{IEEE Transactions on Biomedical Engineering}, vol.~63, no.~9, pp.
  1850--1861, 2016.

\bibitem{eksioglu2016decoupled}
E.~M. Eksioglu, ``Decoupled algorithm for mri reconstruction using nonlocal
  block matching model: Bm3d-mri,'' \emph{Journal of Mathematical Imaging and
  Vision}, vol.~56, no.~3, pp. 430--440, 2016.

\bibitem{fu2017clearing}
X.~Fu, J.~Huang, X.~Ding, Y.~Liao, and J.~Paisley, ``Clearing the skies: A deep
  network architecture for single-image rain removal,'' \emph{IEEE Trans. Image
  Process.}, vol.~26, no.~6, 2017.

\bibitem{yang2017deep}
W.~Yang, R.~T. Tan, J.~Feng, J.~Liu, Z.~Guo, and S.~Yan, ``Deep joint rain
  detection and removal from a single image,'' in \emph{Proc. IEEE Conf.
  Comput. Vis. Pattern Recognit.}, 2017.

\bibitem{Jiang2018FastDeRain}
T.-X. Jiang, T.-Z. Huang, X.-L. Zhao, L.-J. Deng, and Y.~Wang, ``Fastderain: A
  novel video rain streak removal method using directional gradient priors,''
  \emph{IEEE Trans. Image Process.}, 2018.

\bibitem{zhang2018density}
H.~Zhang and V.~M. Patel, ``Density-aware single image de-raining using a
  multi-stream dense network,'' in \emph{Proc. IEEE Conf. Comput. Vis. Pattern
  Recognit.}, 2018.

\bibitem{ren2019progressive}
D.~Ren, W.~Zuo, Q.~Hu, P.~Zhu, and D.~Meng, ``Progressive image deraining
  networks: a better and simpler baseline,'' in \emph{Proc. IEEE Conf.
  Comput.Vis.Parttern Recognit.}, 2019, pp. 3937--3946.

\end{thebibliography}

\begin{IEEEbiography}[{\includegraphics[width=1in,height=1.25in,clip,keepaspectratio]{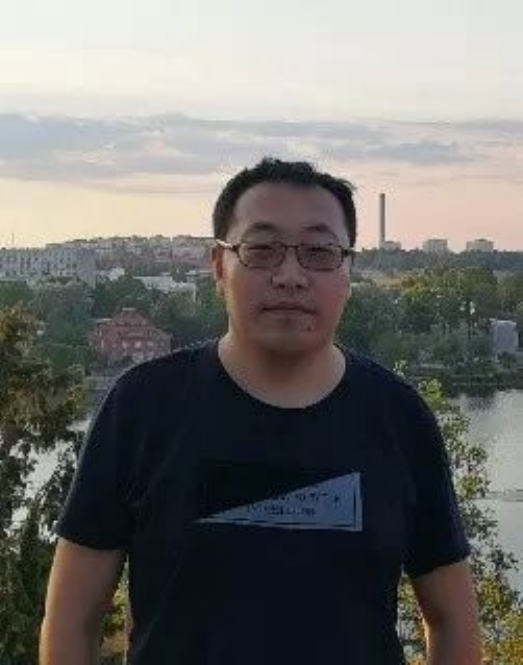}}]{Risheng Liu} received the B.S. and Ph.D. degrees both in mathematics from the Dalian University of Technology in 2007 and 2012, respectively. He was a visiting scholar in the Robotic Institute of Carnegie Mellon University from 2010 to 2012. He served as Hong Kong Scholar Research Fellow at the Hong Kong Polytechnic University from 2016 to 2017. He is currently a professor with DUT-RU International School of Information Science \& Engineering, Dalian University of Technology. He was awarded the ``Outstanding Youth Science Foundation'' of the National Natural Science Foundation of China. His research interests include machine learning, optimization, computer vision and multimedia. He was a co-recipient of the IEEE ICME Best Student Paper Award in both 2014 and 2015. His two papers were also selected as Finalist of the Best Paper Award in ICME 2017. He is a member of the IEEE and ACM.
\end{IEEEbiography}

\begin{IEEEbiography}[{\includegraphics[width=1in,height=1.25in,clip,keepaspectratio]{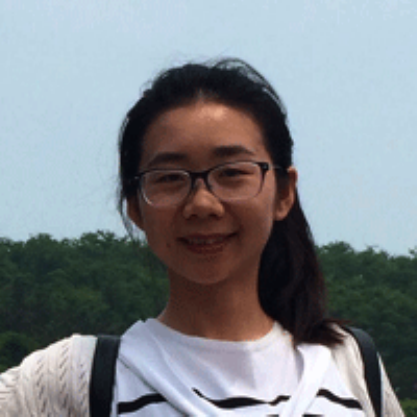}}]{Pan Mu} received the B.S. degree in Applied Mathematics from Henan University, China, in 2014, the M.S. degree in Operational Research and Cybernetics from Dalian University of Technology, China, in 2017. She is currently pursuing the PhD degree in Computational Mathematics at Dalian University of Technology, China. Her research interests include computer vision, machine learning and optimization. 
\end{IEEEbiography}
\begin{IEEEbiography}[{\includegraphics[width=1in,height=1.32in,clip,keepaspectratio]{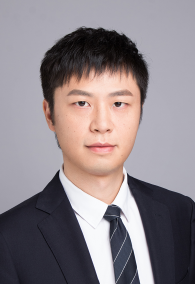}}]{Jin Zhang} received the B.A. degree in Journalism from the Dalian University of Technology in 2007. He pursued a degree in mathematics and received the M.S. degree in Operational Research and Cybernetics from the Dalian University of Technology, China, in 2010, and the PhD degree in Applied Mathematics from University of Victoria, Canada, in 2015. %He served as Hong Kong Scholar Research Fellow at the Hong Kong Baptist University, China, in 2015. 
	%He served as Research Assistant Professor at 
	After working in Hong Kong Baptist University for 3 years, he joined Southern University of Science and Technology as a tenure-track assistant professor in the Department of Mathematics. His broad research area is comprised of optimization, variational analysis and their applications in economics, engineering and data science.
\end{IEEEbiography}

% biography section
% 
% If you have an EPS/PDF photo (graphicx package needed) extra braces are
% needed around the contents of the optional argument to biography to prevent
% the LaTeX parser from getting confused when it sees the complicated
% \includegraphics command within an optional argument. (You could create
% your own custom macro containing the \includegraphics command to make things
% simpler here.)
%\begin{IEEEbiography}[{\includegraphics[width=1in,height=1.25in,clip,keepaspectratio]{mshell}}]{Michael Shell}
% or if you just want to reserve a space for a photo:

%\begin{IEEEbiography}{Michael Shell}
%Biography text here.
%\end{IEEEbiography}
%
%% if you will not have a photo at all:
%\begin{IEEEbiographynophoto}{John Doe}
%Biography text here.
%\end{IEEEbiographynophoto}
%
%% insert where needed to balance the two columns on the last page with
%% biographies
%%\newpage
%
%\begin{IEEEbiographynophoto}{Jane Doe}
%Biography text here.
%\end{IEEEbiographynophoto}

% You can push biographies down or up by placing
% a \vfill before or after them. The appropriate
% use of \vfill depends on what kind of text is
% on the last page and whether or not the columns
% are being equalized.

%\vfill

% Can be used to pull up biographies so that the bottom of the last one
% is flush with the other column.
%\enlargethispage{-5in}

% that's all folks
\end{document}